\newtheorem{theorem}{Theorem}
\newtheorem{lemma}[theorem]{Lemma}
\newtheorem{corollary}[theorem]{Corollary}
\newtheorem{definition}[theorem]{Definition}
\newcommand{\oea}{\mbox{$(1 + 1)$~EA}\xspace}
\newcommand{\oplea}{\mbox{$(1+\lambda)$~EA}\xspace}
\newcommand{\mplea}{\mbox{$(\mu+\lambda)$~EA}\xspace}
\newcommand{\mclea}{\mbox{$(\mu,\lambda)$~EA}\xspace}
\newcommand{\oclea}{\mbox{$(1,\lambda)$~EA}\xspace}
\newcommand{\ollga}{\mbox{$(1+(\lambda,\lambda))$~GA}\xspace}
\newcommand{\HLB}{\textsc{HLB}\xspace}
\newcommand{\DLB}{\textsc{DLB}\xspace}
\newcommand{\OM}{\textsc{OneMax}\xspace}
\newcommand{\Needle}{\textsc{Needle}\xspace}
\newcommand{\onemax}{\OM}
\newcommand{\LO}{\textsc{Leading\-Ones}\xspace}
\newcommand{\leadingones}{\LO}
\newcommand{\lo}{\textsc{L\-O}\xspace}
\newcommand{\needle}{\textsc{Needle}\xspace}
\newcommand{\funnel}{\textsc{Funnel}\xspace}
\newcommand{\cliff}{\textsc{Cliff}\xspace}
\newcommand{\balance}{\textsc{Balance}\xspace}
\newcommand{\valley}{\textsc{Valley}\xspace}
\newcommand{\jump}{\textsc{Jump}\xspace}
\DeclareMathOperator{\rwd}{randomWhereDifferent}
\DeclareMathOperator{\sido}{switchIfDistanceOne}
\DeclarePairedDelimiter{\ceil}{\lceil}{\rceil}
\DeclareMathOperator{\Ima}{Im}
\newcommand{\R}{\ensuremath{\mathbb{R}}}
\newcommand{\N}{\ensuremath{\mathbb{N}}} % ohne Null!!!
\newcommand{\Z}{\ensuremath{\mathbb{Z}}}
\newcommand{\calF}{\ensuremath{\mathcal{F}}}
\newcommand{\Var}{\mathrm{Var}\xspace} %use with [...]
\newcommand{\eps}{\varepsilon}
\newcommand{\assign}{\leftarrow}
\let\originalleft\left
\let\originalright\right
\renewcommand{\left}{\mathopen{}\mathclose\bgroup\originalleft}
\renewcommand{\right}{\aftergroup\egroup\originalright}
\date{}
\begin{document}

\title{Choosing the Right Algorithm With Hints From Complexity Theory\thanks{Extended version of a paper in the proceedings of IJCAI~2021~\cite{WangZD21}. This version, besides more details, contains all mathematical proofs. Corresponding authors: Weijie Zheng and Benjamin Doerr.}}
\author{Shouda Wang\\ Laboratoire d'Informatique (LIX)\\ \'Ecole Polytechnique, CNRS\\ Institut Polytechnique de Paris\\ Palaiseau, France
%Benjamin Doerr\\ \'Ecole Polytechnique, CNRS,\\ Laboratoire d'Informatique (LIX),\\ Palaiseau, France
\and Weijie Zheng\\ Department of Computer Science and Engineering \\ Southern University of Science and Technology \\ Shenzhen, China
\and Benjamin Doerr \\ Laboratoire d'Informatique (LIX)\\ \'Ecole Polytechnique, CNRS\\ Institut Polytechnique de Paris\\ Palaiseau, France
}
\maketitle

\begin{abstract}
%% Text of abstract
Choosing a suitable algorithm from the myriads of different search 
heuristics is difficult when faced with a novel optimization problem. 
In this work, we argue that the purely academic question of what could be the best possible algorithm in a certain broad class of black-box optimizers can give fruitful indications in which direction to search for good established optimization heuristics.
We demonstrate this approach on the recently proposed
DLB benchmark, for which the only known results are $O(n^3)$ runtimes 
for several classic evolutionary algorithms and an $O(n^2 \log n)$ 
runtime for an estimation-of-distribution algorithm. Our finding that the unary unbiased black-box complexity is only $O(n^2)$ suggests the Metropolis algorithm as an interesting candidate and we prove that 
it solves the DLB problem in quadratic time. Since we also prove that better runtimes cannot be obtained in the class of unary unbiased algorithms, we shift our attention to algorithms that use the information of more parents 
to generate new solutions. An artificial algorithm of this type having an 
$O(n \log n)$ runtime leads to the result that 
the significance-based compact genetic algorithm (sig-cGA) can solve the DLB 
problem also in time $O(n \log n)$ with high probability. 
Our experiments show a remarkably good performance of the Metropolis algorithm, clearly the best of all algorithms regarded for reasonable problem sizes.
\end{abstract}

%%%Graphical abstract
%\begin{graphicalabstract}
%%\includegraphics{grabs}
%\end{graphicalabstract}
%
%%%Research highlights
%\begin{highlights}
%\item Research highlight 1
%\item Research highlight 2
%\end{highlights}

%\begin{keyword}
%%% keywords here, in the form: keyword \sep keyword
%Runtime analysis \sep complexity theory \sep Metropolis algorithm \sep estimation-of-distribution algorithm \sep black-box optimization
%%% PACS codes here, in the form: \PACS code \sep code
%
%%% MSC codes here, in the form: \MSC code \sep code
%%% or \MSC[2008] code \sep code (2000 is the default)
%
%\end{keyword}
%
%\end{frontmatter}

%\linenumbers

{\sloppy
\section{Introduction}

Randomized search heuristics such as stochastic hillclimbers, evolutionary algorithms, ant colony optimizers, or estimation-of-distribution algorithms (EDAs) have been very successful at solving optimization problems for which no established problem-specific algorithm exists. As such, they are applied massively to novel problems for which some understanding of the problem and the desired solution exists, but little algorithmic expertise. 

When faced with a novel optimization problem, one has the choice between a large number of established heuristics (and an even larger number of recent metaphor-based heuristics~\cite{Sorensen15}). Which of them to use is a difficult question. Since implementing a heuristic and adjusting it to the problem to be solved can be very time-consuming, ideally one does not want to experiment with too many different heuristics. For that reason, a theory-guided prior suggestion could be very helpful. This is what we aim at in this work. We note that the theory of randomized search heuristics has helped to understand these algorithms, has given suggestions for parameter settings, and has even proposed new operators and algorithms (see the textbooks~\cite{NeumannW10,AugerD11,Jansen13,ZhouYQ19,DoerrN20} or the tutorial~\cite{Doerr20gentle}), but we are not aware of direct attempts to aid the initial choice of the basic algorithm to be used (as with experimental work, there always is the indirect approach to study the existing results and try to distill from them some general rule which algorithms perform well on which problems, but in particular for the theory domain it is not clear how effective this approach is at the moment).

What we propose in this work is a heuristic approach building on the notion of black-box complexity, first introduced by Droste, Jansen, Tinnefeld, and Wegener~\cite{DrosteJTW02} (journal version~\cite{DrosteJW06}). In very simple words, the (unrestricted) black-box complexity of an optimization problem is the performance of the best black-box optimizer for this problem. It is thus a notion not referring to a particular class of search heuristics such as genetic algorithms or EDAs. Black-box complexity has been used successfully to obtain universal lower bounds. Knowing that the black-box complexity of the \Needle problem is exponential~\cite{DrosteJW06}, we immediately know that no genetic algorithm, ant colony optimizer, or EDA can solve the Needle problem in subexponential time. With specialized notions of black-box complexity, more specific lower bounds can be obtained. The result that the unary unbiased black-box complexity of the \onemax benchmark is at least of the order $n \log n$~\cite{LehreW12} implies that many standard mutation-based evolutionary algorithms cannot optimize \onemax faster than this bound. 

With a more positive perspective, black-box complexity has been used to invent new algorithms. Noting that the unary unbiased black-box complexity of \onemax is $\Omega(n \log n)$, but the two-ary (i.e., allowing variation operators taking two parents as input) unbiased black-box complexity is only $O(n)$~\cite{DoerrJKLWW11}, a novel crossover-based evolutionary algorithm was developed in~\cite{DoerrDE15}. Building on the result that the unary unbiased $\lambda$-parallel black-box complexity of the \onemax problem is only $O(\frac{n\lambda}{\log \lambda} + n \log n)$~\cite{BadkobehLS14,LehreS20}, dynamic, self-adjusting, and self-adapting EAs obtaining this runtime have been constructed~\cite{BadkobehLS14,DoerrGWY19,DoerrWY21}.

In this work, we also aim at profiting from the guidance of black-box results, however not by designing new algorithms, but by giving an indication which of the existing algorithms could be useful for a particular problem. Compared to the approach taken in~\cite{DoerrDE15}, we can thus profit from the numerous established and well-understood algorithms and avoid the risky and time-consuming road of developing a new algorithm.

In simple words, what we propose is trying to find out which classes of black-box algorithms contain fast algorithms for the problem at hand. These algorithms may well be artificial as we use them only to determine the direction in which to search for a good established algorithm for our problem. Only once we are sufficiently optimistic that a certain property of black-box algorithms is helpful for our problem, we regard the established heuristics in this class and see if one of them indeed has a good performance.

To show that this abstract heuristic approach towards selecting a good algorithm can indeed be successful, we regard the DeceivingLeadingBlocks (DLB) problem recently proposed by Lehre and Nguyen~\cite{LehreN19foga}. Lehre and Nguyen conducted rigorous runtime analyses of several classic evolutionary algorithms, all leading to runtime guarantees of order $O(n^3)$ with optimal parameter choices. For the EDA \emph{univariate marginal distribution algorithm (UMDA)}, a runtime guarantee of $O(n^2 \log n)$ was proven in~\cite{DoerrK21ecj}. No other proven upper bounds on runtimes of randomized search heuristics on the \DLB problem existed prior to this work. With only these results from only two prior works, it is safe to call the DLB problem relatively novel and thus an interesting object for our investigation. 

\emph{Finding more efficient randomized search heuristics:} 
We note that the classic algorithms regarded in~\cite{LehreN19foga} are all elitist evolutionary algorithms or non-elitist algorithms with parameter settings that let them imitate an elitist behavior. This choice was natural given the weaker understanding of non-elitist algorithms and the fact that not many theoretical works could show a convincing advantage of non-elitist algorithms (see Section~\ref{sssec:nonel}). To obtain a first indication whether it is worth investigating non-elitist algorithms for this problem, we show two results. (i)~We prove that the $(1+1)$ elitist unbiased black-box complexity of the DLB problem is $\Omega(n^3)$. 
%We are optimistic that this result also holds for larger population sizes, but we do not have a proof for this. 
(ii)~We show that there is a simple, artificial, $(1+1)$-type non-elitist unbiased black-box algorithm solving the DLB problem in quadratic time. These two findings motivate us to analyze the existing ${(1+1)}$-type non-elitist heuristics. Among them, we find that the Metropolis algorithm~\cite{Metropolis} with a suitable temperature also optimizes DLB in time $O(n^2)$. We note that there are very few runtime analyses on the Metropolis algorithm (see Section~\ref{sssec:metro}), so it is clear that a synopsis of the existing runtime analysis literature would not have easily suggested this algorithm. 

To direct our search for possible further runtime improvements, we first show that the unary unbiased black-box complexity of DLB is at least quadratic. Consequently, if we want to stay in the realm of unbiased algorithms (which we do) and improve beyond quadratic runtimes, then we necessarily have to regard algorithms that are not unary, that is, that generate offspring using the information from at least two parents. That this is possible, at least in principle, follows from our next result, which is an artificial crossover-based algorithm that solves DLB in time $O(n \log n)$. While, together with the previously shown lower bound, it is clear that this performance relies on the fact that offspring are generated from the information of more than one parent, the working principles of this algorithm also include a learning aspect. The algorithm optimizes the blocks of the DLB problem in a left-to-right fashion, but once a block is optimized, it is never touched again. Such learning mechanisms are rarely found in standard evolutionary algorithms, but are the heart of EDAs with their main goal of learning a distribution that allows sampling good solutions. Note that the distribution in an EDA carries information from many previously generated solutions, hence EDAs necessarily generate new solutions based on the information of many parents. For these reasons, we focus on EDAs. We do not find a classic EDA  for which we can prove that it solves the DLB problem in subquadratic time, but we succeed for the significance-based EDA~\cite{DoerrK20tec} and we show that it optimizes the DLB problem in a runtime of~ $O(n \log n)$ with high probability. 

Overall, these results demonstrate that our heuristic theory-guided approach towards selecting good algorithms for a novel problem can indeed be helpful. We note in particular that the previous works on the DLB problem have not detected that the Metropolis algorithm is an interesting candidate for solving this problem. Our experimental analysis confirms a very good performance of the Metropolis algorithm, but suggests that the runtimes of the EDAs suffer from large constants hidden by the asymptotic analysis.

To avoid a possible misunderstanding, let us stress that our target is to find an established search heuristic for our optimization problem. From the above discourse one could believe that we should simply stick to the artificial black-box algorithm that we found. If our only aim was solving the DLB problem, this would indeed be feasible. Such an algorithm, however, would most likely lack the known positive properties of established search heuristics such as robustness to noise and dynamic changes of the problem data, reusability for similar problems, and adjustability to restricted instance classes. For that reason, our target in this work is definitely to find an established heuristic and not an particular algorithm custom-tailored to a problem.

We note that the results and methods used in this works lie purely in the theory domain. We therefore followed the traditional approach~\cite{DrosteJW02} of regarding benchmark problems simple enough that they can be rigorously analyzed with mathematical means. In return, we obtain proven results for infinite numbers of problem instances (here, the DLB problem for all problem dimensions $n \in 2\N$), which hopefully extend in spirit also to problems which are too complicated to be analyzed with mathematical means. 

We believe that our approach, in principle and in a less rigorous way, can also be followed by researchers and practitioners outside the theory community. Our basic approach of (i)~trying to find a very good algorithm, chosen from all possible black-box optimization algorithms, to solve a given problem or to overcome a particular difficulty and then (ii)~using this artificial and problem-specific algorithm as indicator for which established search heuristics could be promising, can also be followed by experimental methods and by non-rigorous intuitive considerations. %Therefore, our hope is that this work is not only useful for the theory community, but can also be helpful for the broader field of heuristic search.

The remainder of the paper is organized as follows. Section~\ref{sec:pre} discusses black-box optimization, black-box complexity, the \DLB problem, and two probabilistic tools to be used later.
% the background of the black-box optimization where we restrict the algorithms we will choose, the black-box complexity that is the tool we use for the right choice, and DLB which we utilize as an example novel problem for demonstrating our theory-guided approach for the right algorithm choosing. 
Sections~\ref{sec1} and~\ref{sec:beyond} demonstrate our approach on the \DLB problem. We first observe that non-elitist (1+1) type algorithms in principle can improve the $O(n^3)$ runtime of known elitist algorithms to a quadratic runtime and find the Metropolis algorithm as an established heuristic showing this performance. By going beyond unary unbiased algorithms, we then obtain a further improvement to a complexity of $O(n \log n)$, first via an artificial algorithm and then via the significance-based compact genetic algorithm. Our experimental discussion is shown in Section~\ref{sec:exp}, and Section~\ref{sec:con} concludes this paper.

\section{Preliminaries}
 \label{sec:pre}
Following the standard notation,
we write 
% $[m]\vcentcolon=\{1,\dots,m\}$ and 
$[\ell . . m]:=\{\ell,\ell+1,\dots,m\}$ for all $\ell,m\in \mathbb{N}$ such that $\ell \le m$. 
In this paper we consider pseudo-Boolean optimization problems,
that is,
 problems of \emph{maximizing} functions $f $ defined on the search space $\{0,1\}^n$, where $n$ is a positive integer.
A bit string (also called a search point) is an element of the set $\{0,1\}^n$ and is represented as $x=(x_1,\dots,x_n)$.
For any set $S=\{s_1,\dots,s_{|S|}\}\subseteq [1..n]$ with $s_i < s_j$ for $i < j$, we write $x_S:= (x_{s_1},\dots,x_{s_{|S|}})$.
In the context of evolutionary algorithms, we sometimes refer to a bit string as an individual and we   use $x^{(t)}$ to denote an individual at time $t$,
in which case $x^{(t)}_i$ is used to represent the $i$-th component of the individual $x^{(t)}$.
To simplify the notation,
whenever we write $m$ where an integer is required, 
we implicitly mean $\ceil{m}\vcentcolon = \min \{k\in \mathbb{N}\mid k\geq m\}$.

\subsection{Black-Box Optimization and Runtime}

In (discrete) black-box optimization, we assume that the optimization algorithms do not have access to an explicit description of the instance of the problem to be solved. Instead, their only access to the instance is via a black-box evaluation of search points. Classic black-box optimization algorithms include hill-climbers, the Metropolis algorithm, simulated annealing, evolutionary algorithms, and other bio-inspired search heuristics. 

A general scheme for a black-box algorithm $A$ is given in Algorithm~\ref{bbal}. It starts by generating a random search point according to a given probability distribution and evaluating it. It then repeats generating (and evaluating) new search points based on all information collected so far, that is, based on all previous search points together with their fitnesses. While in practice, naturally, this iterative procedure is stopped at some time, in theoretical investigations it is usually assumed that this loop is continued forever. In this case, the \emph{runtime}, also called \emph{optimization time}, $T := T_A(f)$ of the algorithm $A$ on the problem instance $f : \{0,1\}^n \to \R$ is the number of search points evaluated until (and including) for the first time an optimal solution of $f$ is generated (and evaluated). In the notation of Algorithm~\ref{bbal}, we have~ $T_{A}(f) = 1+\inf\{t\mid x^{(t)}\in\arg\max f\}$. If the algorithm $A$ is randomized (which most black-box optimizers are), then the runtime $T$ is a random variable and often only its expectation $E[T]$ is analyzed. For an \emph{optimization problem}, that is, a set $\calF$ of problem instances $f$, the \emph{worst-case expected optimization time} $\sup_{f \in \calF} E[T_A(f)]$ is an often regarded quality measure of the algorithm $A$. 

Most black-box algorithms for practical reasons do not exploit this scheme to its full generality, that is, they do not store all information collected during the process or they do not sequentially generate one search point after the other based on all previous information, but instead generate in parallel several search points based on the same set of information. For our analyses, we nevertheless assume that the search points are generated in some specified order and that \emph{each search point is evaluated immediately after being generated}. 

\begin{algorithm2e}%
Generate a search point $x^{(0)}$ according to a given distribution $\mathcal{D}^{(0)}$ on $\{0,1\}^n$;

%Compute $f(x^{(0)})$;

\For{$t=1,2,3,\dots$}{
Depending on $f(x^{(0)}),\dots,f(x^{(t-1)})$ and $x^{(0)},\dots,x^{(t-1)}$, 
choose a probability  distribution $\mathcal{D}^{(t)}$ on $\{0,1\}^n$ \label{line}; 

Sample $x^{(t)}$ from $\mathcal{D}^{(t)}$;

%Compute $f(x^{(t)})$;
   }
\caption{Template of a  black-box algorithm for optimizing an unknown function $f: \{0,1\}^n \rightarrow \mathbb{R}$. Without explicit mention, we assume that each search point $x^{(t)}$ is evaluated immediately after being generated. As runtime of such an (infinitely running) algorithm we declare the number of search points evaluated until an optimum of $f$ is evaluated for the first time. }
\label{bbal}
\end{algorithm2e}

\subsection{Unbiasedness}
\label{subsec:unbiase}
 
Unless a specific understanding of the problem at hand suggests a different approach, it is natural to look for optimization algorithms that are invariant under the symmetries of the search space. This appears so natural that, in fact, most algorithms have this property without that this has been discussed in detail. 

The first to explicitly discuss such invariance properties for the search space $\{0,1\}^n$ of bit strings (see, e.g., \cite{TeytaudGM06} for such a discussion for continuous search spaces) were Lehre and Witt in their seminal paper~\cite{LehreW12}. They coined the name \emph{unbiased} for algorithms respecting the symmetries of the hypercube $\{0,1\}^n$. Such algorithms treat the bit positions $i \in [1..n]$ in a symmetric fashion and, for each bit position, do not treat the value $0$ differently from the value $1$. It follows that all decisions of such algorithms may not depend on the particular bit string representation of the search points they have generated before. Rather, all decisions can only depend on the fitnesses of the search points generated. This implies that all search points the algorithm has access to can only be generated from previous ones via unbiased variation operators. This observation also allows to rigorously define the arity of an algorithm as the maximum number of parents used to generate offspring. Hence mutation-based algorithms have an arity of one and crossover-based algorithms have an arity of two. We note that sampling a random search point is an unbiased operator with arity zero. 

Since these are important notions that also help to classify different kinds of black-box algorithms, let us make them precise in the following. 

\begin{definition}\label{ununopt}
A $k$-ary variation operator $V$ is a function that assigns to each $k$-tuple of bit strings in $\{0,1\}^n$ a probability distribution on $\{0,1\}^n$. It is called \emph{unbiased} if
\begin{itemize}
\item $\forall x^1,\dots,x^k \in \{0,1\}^n, \forall y, z \in \{0,1\}^n,\\
\quad \Pr[y =V( x^{1}, \ldots, x^{k})]=\Pr[y \oplus z =V(x^{1} \oplus z, \ldots, x^{k} \oplus z)]$,
\item $\forall x^1,\dots,x^k \in \{0,1\}^n, \forall   y\in \{0,1\}^n, \forall\sigma\in \mathcal{S}_n,\\
\quad \Pr[y =V( x^{1}, \ldots, x^{k})]=\Pr[\sigma(y) =V( \sigma(x^{1}), \ldots, \sigma(x^{k}))]$,
\end{itemize}
where $\oplus$ represents the exclusive-or operator and $\mathcal{S}_n$ represents the symmetric group on $n$ letters and we write $\sigma(x)=(x_{\sigma(1)},\dots,x_{\sigma(n)})$ for all $\sigma\in \mathcal{S}_n$ and $x\in \{0,1\}^n$.
\end{definition}

By definition, a $k$-ary operator can simulate $\ell$-ary operators if $\ell\leq k$.
It is also immediate that the only $0$-ary operator is the operator that generates a search point in $\{0,1\}^n$ uniformly at random.
As a special case,
$1$-ary unbiased variation operators are more often called unary unbiased  variation operators and sometimes referred  to as mutation operators in the context of evolutionary computation.

Unary unbiased variation operators admit a simple characterization, namely that sampling from the unary unbiased operator is equivalent to sampling a number $r \in [0..n]$ from some distribution and then flipping exactly $r$ bits chosen uniformly at random. This is made precise in the following lemma, which was proven in \cite[Lemma~1]{DoerrDY20}, but which can, in a more general form, already be found in~\cite{DoerrKLW13tcs}.
\begin{lemma}\label{charc_uu}
Let $D$ be a probability distribution on $[0..n]$. Let $V_D$ be the unary variation operator which for each $x \in \{0,1\}^n$ generates $V_D(x)$ by first sampling a number $r$ from $D$ and then flipping $r$ random bits in $x$. Then $V_D$ is a unary unbiased variation operator. 

Conversely, let $V$ be a unary unbiased variation operator on $\{0,1\}^n$. Then there is a probability distribution $D_V$ on $[0..n]$ such that $V = V_{D_V}$.
\end{lemma}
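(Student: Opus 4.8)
The plan is to prove both directions of this characterization of unary unbiased variation operators. Throughout, I note that an equivalent and convenient way to describe ``flipping $k$ random bits in $x$'' is: pick a uniformly random subset $S \subseteq [1..n]$ of size $k$ and return $x \oplus \mathbbm{1}_S$, where $\mathbbm{1}_S$ denotes the indicator string of $S$.

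For the \emph{forward direction}, I would verify directly that $V_D$ satisfies the two axioms of Definition~\ref{ununopt}. Both are built on the observation that sampling $k\sim D$ and then choosing a uniformly random size-$k$ set $S$ is symmetric in a strong sense. For the first (XOR/translation) axiom, I fix $z$ and compute $\Pr[V_D(x) = y]$: conditioning on the drawn value $k$, this equals $\Pr[k\sim D]$ times the probability that $x\oplus\mathbbm{1}_S = y$, i.e.\ that $S$ is exactly the set of bit positions where $x$ and $y$ differ. This probability depends only on $k$ and on the Hamming distance $|x\oplus y|$, and since $(x\oplus z)\oplus(y\oplus z)=x\oplus y$, the translated query has the identical value. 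For the second (permutation) axiom, I use that applying $\sigma$ to the bit positions carries a uniformly random size-$k$ set to a uniformly random size-$k$ set, so $\Pr[x\oplus\mathbbm{1}_S=y]=\Pr[\sigma(x)\oplus\mathbbm{1}_{\sigma(S)}=\sigma(y)]$; summing over $k$ gives the claim. This direction is essentially bookkeeping once the random-set formulation is in place.

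For the \emph{converse}, the key is to extract the distribution $D_V$ from an arbitrary unary unbiased $V$ and show $V = V_{D_V}$. The natural definition is
\[
D_V(k) := \Pr[\,|V(0^n)\oplus 0^n| = k\,] = \Pr[\,V(0^n)\text{ has exactly }k\text{ ones}\,],
\]
the distribution of the number of flipped bits when the all-zeros string is the input. I must then show that for every $x$ and every $y$,
\[
\Pr[V(x)=y] = D_V(|x\oplus y|)\cdot\binom{n}{|x\oplus y|}^{-1},
\]
which is exactly the probability assigned by $V_{D_V}$. The argument proceeds in two moves. First, using the XOR axiom with $z=x$, I reduce to the case $x=0^n$: indeed $\Pr[V(x)=y]=\Pr[V(0^n)=x\oplus y]$. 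Second, I show that on input $0^n$ the operator $V$ assigns equal probability to all outputs of a given Hamming weight. This is where the permutation axiom does the real work: if $y$ and $y'$ both have weight $k$, there is a permutation $\sigma$ with $\sigma(y)=y'$, and since $\sigma(0^n)=0^n$, the axiom gives $\Pr[V(0^n)=y]=\Pr[V(0^n)=y']$. Hence each of the $\binom{n}{k}$ weight-$k$ strings receives probability $D_V(k)/\binom{n}{k}$, as required.

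I expect the main conceptual obstacle to be the converse, specifically making rigorous the two-step reduction and ensuring the edge cases are handled cleanly: one should check that $D_V$ is a genuine probability distribution on $[0..n]$ (it is, as the law of a weight, which lies in $[0..n]$), and that the formula $D_V(k)/\binom{n}{k}$ correctly redistributes the mass so that $V_{D_V}(0^n)$ has the same weight distribution $D_V$ while being uniform among strings of equal weight. The forward direction and the verification that $V_{D_V}=V$ then amount to matching these per-output probabilities. Since both the statement and the underlying lemma are cited from~\cite{DoerrDY20,DoerrKLW13tcs}, I would keep the write-up concise, emphasizing the role of each axiom—the XOR axiom to translate any input to $0^n$, and the permutation axiom to force uniformity within each Hamming sphere.
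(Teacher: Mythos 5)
Your proof is correct. Note that the paper itself does not prove Lemma~\ref{charc_uu} at all---it cites the result from \cite[Lemma~1]{DoerrDY20} (and, in more general form, \cite{DoerrKLW13tcs})---so there is no internal proof to compare against; your argument (XOR axiom with $z=x$ to reduce every input to $0^n$, then the permutation axiom to force uniformity on each Hamming sphere, giving $\Pr[V(x)=y]=D_V(|x\oplus y|)\binom{n}{|x\oplus y|}^{-1}$) is precisely the standard proof used in those references, and both directions, including the verification that $D_V$ is a distribution on $[0..n]$, are handled correctly.
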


Building on the notion of a $k$-ary unbiased variation operator, we can now define what is a $k$-ary unbiased black-box algorithm (Algorithm~\ref{alg:unbiased}). For the reasons given at the beginning of this section, in this work we shall only be interested in unbiased algorithms (possibly with unrestricted arity).
\begin{algorithm2e}
Generate $x^{(0)}$   uniformly at random\;
\For{$t=1,2,3,\dots$}{
   Based solely on $f(x^{(0)}), \dots, f(x^{(t-1)})$, choose a $k$-ary unbiased variation operator $V$ and $i_1, \dots, i_k \in [0..t-1]$\;	
	 Sample $x^{(t)}$ from $V(x^{(i_1)}, \dots, x^{(i_k)})$\;
   }
\caption{Template of a $k$-ary unbiased black-box algorithm for optimizing an unknown function $f: \{0,1\}^n\rightarrow \mathbb{R}$. }
\label{alg:unbiased}
\end{algorithm2e}

\subsection{Black-Box Complexity}

To understand the difficulty of a problem for black-box optimization algorithms, inspired by classical complexity theory, Droste, Jansen, and Wegener~\cite{DrosteJW06} (preliminary version~\cite{DrosteJTW02}, see also~\cite{Doerr20BBbookchapter} for a recent survey) defined the \emph{black-box complexity} of a problem $\calF$ as the smallest worst-case expected runtime a black-box algorithm $A$ can have on $\calF$, that is, 
\[
\inf_{A}\sup_{f\in \mathcal{F}}E[T_{A}(f)],
\]
where $A$ runs over all black-box algorithms in the infimum.

Just by definition, the black-box complexity is a universal lower bound on the performance of any black-box algorithm. The result that black-box complexity of the \needle problem is $2^{n-1} + 0.5$ \cite[Theorem~1]{DrosteJW06} immediately implies that no hillclimber, evolutionary algorithm, ant colony optimizer, etc. can solve the \needle problem in subexponential expected time. 

Conversely, the black-box complexity can also serve as a trigger to search for better black-box algorithms. For example, in~\cite{DoerrDE15} the observation that the black-box complexity of the \onemax problem is only $\Theta(n / \log n)$ \cite{ErdosR63,DrosteJW06,AnilW09} (albeit witnessed by a highly artificial algorithm) whereas most classic evolutionary algorithms have a $\Theta(n \log n)$ runtime or worse, was taken as starting point to design a novel evolutionary algorithm solving the \onemax problem in time asymptotically better than $n \log n$. This algorithm, called \ollga, has shown a good performance both in experiments~\cite{GoldmanP14,MironovichB17,BuzdalovD17} and in other mathematical analyses~\cite{BuzdalovD17,DoerrD18,AntipovDK19foga,AntipovBD20ppsn,AntipovD20ppsn,AntipovBD21gecco,AntipovDK22,AntipovBD22}. 

In this work, as discussed in the introduction, we shall also use the black-box complexity as a trigger towards more efficient solutions to black-box optimization problems, however, not by suggesting new algorithms, but by suggesting the general type of algorithm that might be most suited for the problem and thus helping to choose the right algorithm among the many existing black-box algorithms. Compared to triggering the design of new algorithms, this might be the more effective road for people ``merely'' applying black-box algorithms. In fact, we shall argue that this road, despite the theory-based notion of black-box complexity, is in fact not too difficult to follow also for people without a background in algorithm theory.

\subsection{The DLB Function and Known Runtimes}\label{sec:dlb}

We now define the $\DLB$ function, which is the main object of our study and was first introduced  by Lehre and Nguyen
in  their recent work \cite{LehreN19foga}.

To define the \DLB function, the $n$-bit string $x$ is divided, in a left-to-right fashion, into $\tfrac n2$ blocks of size of $2$. The function value of $x$ is determined by the longest prefix of $11$ blocks and the following block. The blocks with two $1$s in the prefix contribute each a value of $2$ to the fitness. The \DLB function is deceptive in that the next block contributes a value of $1$ when it contains two $0$s, but contributes only $0$ when it contains one $1$ and one $0$. The optimum is the bit string with all $1$s.

The $\DLB$ function is defined on $\{0,1\}^n$ only for $n$ even.
We therefore assume in the following that $n$ is even whenever this is required.

For all  $x\in \{0,1\}^n$, we formally define the $\DLB$ function in the following way. 
We consider blocks of the form $(x_{2\ell+1},x_{2\ell+2})$.
If $x\neq (1,\dots,1)$, then let $(x_{2m+1},x_{2m+2})$ be the first block that is not a $11$ block, that is, $m=\inf\{\ell\mid x_{2\ell+1}\neq 1 \text{ or }x_{2\ell+2}\neq 1\}$.
We call such a block a \emph{critical block}.
Then the $\DLB$ function is defined via
$$
  \DLB(x)=
  \begin{cases}
  2m+1 &\text{if } x_{2m+1}+x_{2m+2}=0,\\
  2m &\text{if } x_{2m+1}+x_{2m+2}=1,\\
  n &\text{if } x=(1,\dots,1).
  \end{cases}
$$

In other words, the $\DLB$ function counts twice the number of consecutive 11 blocks until it reaches 
a critical block, which counts for 1 if it is of the form 00
and  counts for 0 if it is of the form 01 or 10.
Hence the  search points $x=(1,\dots,1,0,0,x_{2\ell+1},\dots,x_n)$  with $\ell\in[1..\frac{n}{2}]$ and $x_{2\ell+1},\dots,x_n\in\{0,1\}$
are the local maxima of the $\DLB$ function.
The unique global maximum of the $\DLB$ function is   $x^*=(1,\dots,1)$.

%Right from the definitions, we see that the functions \DLB and \lo are very similar, indeed, we have $| \lo(x)-\DLB(x)|\leq 1$ for all $x\in \{0,1\}^n$. The main difference is that \DLB has non-trivial local optima which could render its optimization harder. 

In a similar fashion we define the  \emph{Honest Leading Blocks} ($\HLB$) function, 
which will be used as potential function in some proofs using drift analysis.  
The $\DLB$ function being deceptive and unable to discern a $01$ critical block from a $10$ critical block, 
  the $\HLB $ function  also treats $01$ and $10$ critical blocks equally,
but is honest with the fact that $01$ and $10$ critical blocks are better than a $00$ critical  block in the sense that such search points  are closer to the global maximum $(1,\dots,1)$.
Formally speaking, the $\HLB$ function with parameter $\delta\in(0,2)$ is defined by 
\begin{equation*}
\HLB_{\delta}(x)\vcentcolon=\left\{\begin{array}{ll} 
2  m & \text { if } \DLB(x)=2m+1, \\ 
2  m+2-\delta & \text { if } \DLB(x)=2m, \\
n & \text { if } \mathrm{DLB}(x)=n,\end{array}\right.
\end{equation*}
where  $m$ is an integer  in $\{0,1,\dots, \frac{n}{2}-1\}$.

We now review the most relevant known runtime results for this work. 
Lehre and Nguyen   \cite{LehreN19foga} proved that, always assuming that the mutation rate used by the algorithm is $\chi/n$ for a constant $\chi > 0$,  the expected runtime of the $(1+\lambda)$ EA on the $\DLB$ problem
is $O(n\lambda+n^3)$
and that  the expected runtime of the $(\mu+1)$ EA on the $\DLB$ problem is $O(\mu n \log n +n^3)$.
They also proved that the  expected runtime of the $(\mu,\lambda)$ EA on the $\DLB$ function is $O(n\lambda\log \lambda + n^3)$ under the conditions that for an arbitrary constant $\delta > 0$ and a constant $c$ sufficiently large, we have $\lambda > c \log n $ and $\mu< \frac{\lambda e^{-2\chi}}{1+\delta}$.
Furthermore, they showed that, with a good choice of the parameters, 
genetic algorithms using $k$-tournament selection,  $(\mu,\lambda)$ selection,  linear selection, or exponential ranking selection,
also have an $O(n\lambda\log \lambda + n^3)$ expected runtime on the $\DLB$ problem. 

From looking at the proofs in~\cite{LehreN19foga}, it appears natural that all algorithms given above have a runtime of at least $\Omega(n^3)$ on the \DLB problem, but the only proven such result is that Doerr and Krejca~\cite{DoerrK21ecj} showed that the $(1+1)$ EA with mutation rate $1/n$ solves the $\DLB$ problem in $\Theta(n^3)$ expected fitness evaluations. In Theorem~\ref{1+lam}, we extend this result to all $(1+1)$-elitist unary unbiased black-box algorithms.

As opposed to these polynomial  runtime results, 
Lehre and Nguyen   pointed out in 
\cite{LehreN19foga} a potential weakness of the Univariate Marginal Distribution Algorithm (UMDA).
They proved that the UMDA selecting $\mu$ fittest individuals from $\lambda$ sampled individuals has an expected runtime of~ $e^{\Omega(\mu)}$ on the $\DLB$ problem if $\frac{\mu}{\lambda}\geq \frac{14}{1000}$ and  $c \log n\leq \mu=o(n)$ for some sufficiently large constant $c>0$,
and has expected runtime  $O(n\lambda\log \lambda + n^3)$  if~ $\lambda \geq(1+\delta) e \mu^{2}$ for any $\delta>0$.
However, Doerr and Krejca~\cite{DoerrK21ecj} pointed out that this negative finding can be overcome with a different parameter choice and that with a population size large enough to prevent genetic drift \cite{SudholtW19,DoerrZ20tec}, 
the UMDA solves the $\DLB$ problem efficiently.
To be precise, they proved that the UMDA optimizes the $\DLB$ problem within   probability at least $1-\frac{9}{n}$  within $\lambda\left(\frac{n}{2}+2 e \ln n\right)$ fitness evaluations if $\mu \geq c_{\mu} n \ln n$ and~ $\mu / \lambda \leq c_{\lambda}$ for some $c_\mu$, $c_\lambda$ sufficiently large or small, respectively.

%{(Review 4: Strangely the section is concluded with an overview of the results for a different function (Leading Ones) and, even though similar, it is unclear why an overview of the known results for it is made here. To me it appears as a distraction and an unnecessary deviation from the main story of the paper. By reading it, I get the feeling that what is accomplished in the paper for the DLB function (i.e., identifying a non-elitist algorithm that matches the quadratic unbiased black box complexity) was not achieved for the Leading Ones function. To this end, if the authors want to keep this discussion, why not move it to the end of the paper stating it as an open problem? (i.e., that of identifying a standard non-elitist algorithm that runs in $o(n^2)$).) (WJ: Remove the runtime for LeadingOnes or not?)}

Since there is an apparent similarity between the $\DLB$ function and the classic $\LO$ benchmark function, we recall the definition of the latter.
The \LO function is defined for all $x\in\{0,1\}^n$ by $\sum_{r\in [1..n]} \prod_{s\in [1..r]}x_s$. 
Right from the definitions, we see that the functions \DLB and \LO are very similar, indeed, we have $| \LO(x)-\DLB(x)|\leq 1$ for all $x\in \{0,1\}^n$. The main difference is that \DLB has non-trivial local optima which could render its optimization harder. 

%Since the \DLB function bears some similarity with the \leadingones problem. 
Due to the similarity between the \DLB and \leadingones problems, it will be useful to compare the runtimes on these two problems. The \leadingones problem was proposed in~\cite{Rudolph97} as an example of a unimodal problem which simple EAs cannot solve in $O(n \log n)$ time. The correct asymptotic runtime of the \oea of order $\Theta(n^2)$ was determined in~\cite{DrosteJW02}.
The precise runtime of the \oea was independently determined in~\cite{BottcherDN10,Sudholt13}. Precise runtimes of various other variants of the \oea were given in \cite{Doerr19tcs}.
The runtime of the \mplea with~ $\mu$ at most polynomial in $n$ on \leadingones is $\Theta(\mu n \log n + n^2)$~\cite{Witt06}, the one of the \oplea with $\lambda$ at most polynomial in $n$ is $\Theta(\lambda n + n^2)$. 
For the \mplea, only a lower bound of $\Omega(\frac{\lambda n}{\log(\lambda/n)})$ is known~\cite{BadkobehLS14,LehreS20}. The runtime of the \ollga with standard parameterization $p = \lambda/n$ and~ $c = 1/\lambda$ is $\Theta(n^2)$ regardless of the value of $\lambda \in [1..n/2]$ and this also with dynamic parameter choices~\cite{AntipovDK19foga}. Consequently, for a large number of elitist algorithms, the runtime is  $\Theta(n^2)$ when suitable parameters are used. 
In \cite{DoerrL18}, 
Doerr and Lengler have shown that all $(1+1)$-elitist   algorithms need~ $\Omega(n^2)$ fitness evaluations in expectation to solve the \LO problem.
This result implies the lower bounds in \cite{DrosteJW02,BottcherDN10,Sudholt13,Doerr19tcs} when ignoring constant factors.

For non-elitist algorithms, the picture is less clear. Upper bounds have been shown for various algorithms, also some using crossover, when the selection pressure is high enough~\cite{Lehre11,DangL16algo,CorusDEL18,DoerrK21algo}, but none of them beats the $\Omega(n^2)$ barrier. When the selection pressure is small, many non-elitist algorithm cannot optimize any function with unique optimum in subexponential time~\cite{Lehre10,Doerr21ecjLB}.

Upper bounds were also shown for the runtime of the estimation-of-distribution algorithms UMDA and PBIL in~\cite{DangLN19,LehreN21} and for the ant-colony optimizers 1-ANT and MMAS in~\cite{DoerrNSW11,NeumannSW09}, but again none could show a runtime better than quadratic in $n$. 

A runtime better than quadratic, and in fact of order $O(n \log n)$ was shown for the three non-classical algorithms CSA~\cite{MoraglioS17}, scGA~\cite{FriedrichKK16}, and sig-cGA~\cite{DoerrK20tec}. The first two of these, however, are highly inefficient on the \onemax benchmark and thus might be overfitted to the \leadingones problem.

The unrestricted black-box complexity of the \leadingones class is $\Theta(n \log\log n)$, as witnessed by a highly problem-specific algorithm in~\cite{AfshaniADDLM19}.

\subsection{Probabilistic Tools}

We now collect two probabilistic tools used in the remainder of this work. 
The additive drift theorem is commonly used to derive upper (resp. lower) bounds on the expected  runtime of an 
algorithm from lower (resp. upper) bounds on the  expected increase of a suitable  potential.
It first appeared in the analysis of evolutionary algorithms   in He and Yao's work \cite{HeY01,HeY04}, in which they implicitly used the optional stopping theorem for martingales.
The following version can be found in  Lengler's survey~\cite{Lengler20bookchapter}.
It was first proven in Lengler and  Steger's work \cite{LenglerS18} via an approach different from the one  He and Yao used. 
\begin{theorem}[\cite{Lengler20bookchapter}, Theorem 2.3.1]\label{adddrift}
Let $(h_t)_{t\geq 0}$ be a sequence of non-negative random variables 
taking values in a finite set $\mathcal{S}\subseteq [0,n]$
such that  $n\in \mathcal{S}$.
Let 
$\ T:=\inf \left\{t \geq 0 \mid h_{t}=n\right\}$ be the first time when $(h_t)_{t\geq 0}$ takes the value  $n$. 
 For all $ s\in\mathcal{S}$, let $
\Delta_{t}(s):=E\left[h_{t+1}- h_{t}\mid h_{t}=s\right]$.
Then the following two assertions hold.
\begin{itemize}
    \item  If for some $\delta > 0$
 we have $\Delta_{t}(s)\geq \delta$ for all $s\in \mathcal{S}\setminus \{n\}$ and all $t$,
then $E[T]\leq  E[n-h_0] \delta^{-1}$.
 \item  If for some $\delta> 0$ we have 
$\Delta_{t}(s)\leq \delta$ for all $s\in \mathcal{S}\setminus \{n\}$ and all $t$,
then $E[T]\geq  E[n-h_0] \delta^{-1}$.

\end{itemize}

\end{theorem}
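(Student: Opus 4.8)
The plan is to prove both bounds at once by tracking the expected distance to the target along the \emph{stopped} process and then converting a telescoped sum of one-step changes into $E[T]$ via the identity $E[T]=\sum_{t\geq 0}\Pr[T>t]$. This is the elementary, martingale-free route (morally the Lengler--Steger argument); the classical alternative would be to recognize the stopped process as a sub-/supermartingale and invoke optional stopping, as He and Yao did implicitly.

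First I would introduce the stopped process $g_t:=h_{\min(t,T)}$ and set $u_t:=E[n-g_t]$, so that $u_0=E[n-h_0]$ and, since all values lie in $[0,n]$, one has $0\leq u_t\leq n$ for every $t$. Because $g$ is frozen once $T$ is reached, the increment telescopes: $u_t-u_{t+1}=E[(h_{t+1}-h_t)\mathbbm{1}[T>t]]$. The key step is to evaluate this increment with the drift hypothesis. Partitioning $\{T>t\}$ according to the value $s=h_t$ (note $T>t$ forces $h_t\neq n$, so $s$ ranges over $\mathcal{S}\setminus\{n\}$), I would write $E[(h_{t+1}-h_t)\mathbbm{1}[T>t]]=\sum_{s\neq n}\Pr[T>t,\,h_t=s]\,E[h_{t+1}-h_t\mid T>t,h_t=s]$ and identify the inner conditional expectation with $\Delta_t(s)$. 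Applying $\Delta_t(s)\geq\delta$ (resp.\ $\leq\delta$) and summing over $s$ then yields $u_t-u_{t+1}\geq\delta\Pr[T>t]$ (resp.\ $\leq$).

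From here the two bounds follow by summation. Summing $u_t-u_{t+1}\geq\delta\Pr[T>t]$ over $t=0,\dots,N-1$ gives $u_0-u_N\geq\delta\sum_{t<N}\Pr[T>t]$; dropping $u_N\geq 0$ and letting $N\to\infty$ turns the right-hand side into $\delta\,E[T]$ via $E[T]=\sum_{t\geq 0}\Pr[T>t]$, proving $E[T]\leq E[n-h_0]\delta^{-1}$ (in particular $E[T]<\infty$, so $T<\infty$ almost surely). For the lower bound the reversed inequality gives $u_0-u_N\leq\delta\sum_{t<N}\Pr[T>t]$; here I would first dispose of the trivial case $\Pr[T=\infty]>0$ (where $E[T]=\infty$), and otherwise use $T<\infty$ a.s.\ together with $g_N\to h_T=n$ and boundedness to get $u_N\to 0$, so that $u_0\leq\delta\,E[T]$.

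The main obstacle is the identification $E[h_{t+1}-h_t\mid T>t,h_t=s]=\Delta_t(s)$: the hypothesis only controls the drift conditioned on the current value $h_t=s$, whereas the stopped increment forces the extra conditioning on the history-dependent event $\{T>t\}$. This passage is legitimate when $(h_t)_{t\geq 0}$ is a Markov chain (the standard setting, in which conditioning on $h_t=s$ already determines the one-step law) or, more generally, whenever the one-step drift bound holds conditionally on the entire past, and I would state this explicitly. The remaining care is purely in the limit interchanges --- monotone convergence for $\sum_t\Pr[T>t]=E[T]$ and bounded convergence for $u_N\to E[n-h_T]$ --- both routine since $\mathcal{S}$ is finite and hence $h_t$ is uniformly bounded.
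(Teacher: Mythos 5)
The paper offers no proof of Theorem~\ref{adddrift} to compare against: it imports the statement from Lengler's survey and only remarks that the result was first proven by Lengler and Steger via an argument avoiding He and Yao's (implicit) use of optional stopping. Your stopped-process, telescoping argument is precisely that Lengler--Steger route, and it is correct and complete as you present it: the identity $u_t-u_{t+1}=E\left[(h_{t+1}-h_t)\mathbbm{1}[T>t]\right]$, the resulting one-step bounds $u_t-u_{t+1}\geq \delta \Pr[T>t]$ (resp.\ $\leq \delta\Pr[T>t]$), the summation via $E[T]=\sum_{t\geq 0}\Pr[T>t]$, and, for the lower bound, the separate treatment of $\Pr[T=\infty]>0$ and the limit $u_N\to E[n-h_T]=0$ by bounded convergence are all sound.

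The obstacle you flag is a genuine defect of the statement as literally written, not a formality, so making the extra hypothesis explicit is the right call. With the drift condition imposed only conditionally on the current value $h_t=s$, the theorem is false for general processes. Take $n=2$, $\mathcal{S}=\{0,1,2\}$, and let the process follow the trajectory $1,2,1,2,\dots$ with probability $\tfrac34$ and the trajectory $1,0,1,0,\dots$ with probability $\tfrac14$ (a non-Markovian mixture: the move out of state $1$ depends on whether $2$ was visited before). At every time, each state in $\{0,1\}$ that has positive probability satisfies the drift hypothesis with $\delta=\tfrac12$: state $0$ has drift $+1$, and state $1$ has drift $\tfrac34\cdot(+1)+\tfrac14\cdot(-1)=\tfrac12$; yet $\Pr[T=\infty]=\tfrac14$, so $E[T]=\infty$, contradicting the asserted bound $E[T]\leq (2-1)/\tfrac12=2$. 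The failure occurs exactly at the step you isolate, since $E[h_{t+1}-h_t\mid T>t,\,h_t=1]=-1\neq\Delta_t(1)$ for even $t\geq 2$. Besides your two sufficient fixes (Markov property, or drift bounds holding conditionally on the whole past), a third is worth recording: if $n$ is absorbing for $(h_t)_{t\geq 0}$, then $\{T>t\}=\{h_t\neq n\}$ and the problematic identification becomes trivial, with no Markov assumption at all. All uses of the theorem in the paper are covered by one of these readings: the potentials in Theorem~\ref{1+lam} and Lemma~\ref{Una_Unb_Low_bdd} cannot leave the target value (by elitism, respectively because the potential is a capped running maximum), while in Theorem~\ref{MA_runtime} and Lemma~\ref{Una_Unbia_upper_bound} the drift bounds are established pointwise in the current search point of a Markovian process.
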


Chernoff-Hoeffding inequalities~\cite{Chernoff52,Hoeffding63}, often just called \emph{Chernoff bounds}, are a standard tool in the analysis of random structures and algorithms. In this work we will use the following variance-based additive Chernoff bound (see, e.g.,~\cite[Theorem~1.10.12]{Doerr20bookchapter}).
\begin{theorem}\label{chern}(variance-based additive Chernoff inequality)
 Let $X_1,\dots,X_n$ be independent random variables and suppose that  for all $i\in[1..n]$, we have $|X_i-E[X_i]|\leq 1$. Let $X\vcentcolon=\sum_{i=1}^nX_i$ and $\sigma^2\vcentcolon=\Var[X]=\sum_{i=1}^n \Var[X_i]$. Then for all $\lambda\in(0,\sigma^2)$,
 $$
 \Pr[X \geq E[X]+\lambda] \leq e^{-  \frac{\lambda^2}{3\sigma^2}}
 \text{ and }
 \Pr[X \leq E[X]-\lambda] \leq e^{-  \frac{\lambda^2}{3\sigma^2}}.
 $$
\end{theorem}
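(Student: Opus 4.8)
The plan is to use the standard exponential-moment (Cram\'er--Chernoff) method, which here takes the shape of a Bernstein-type argument because we control the variance rather than merely the range. First I would reduce to centered variables: set $Y_i := X_i - E[X_i]$, so that $E[Y_i]=0$, $|Y_i|\le 1$, $\Var[Y_i]=E[Y_i^2]$, and $Y := \sum_{i=1}^n Y_i = X - E[X]$ with $\Var[Y]=\sigma^2$. It then suffices to prove $\Pr[Y \ge \lambda] \le e^{-\lambda^2/(3\sigma^2)}$; the lower-tail bound follows by applying the very same statement to the variables $-Y_i$, which are again centered, bounded by $1$ in absolute value, and have the same variances.

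For any $t>0$, Markov's inequality applied to $e^{tY}$ gives $\Pr[Y\ge\lambda] \le e^{-t\lambda}\,E[e^{tY}]$, and by independence $E[e^{tY}] = \prod_{i=1}^n E[e^{tY_i}]$. The heart of the argument is to bound each factor through the variance. Here I would establish the elementary inequality $e^{ty} \le 1 + ty + y^2(e^t - 1 - t)$, valid for all $y$ with $|y|\le 1$; it follows from the fact that $y \mapsto (e^{ty}-1-ty)/y^2$ is nondecreasing, so its value at $y\le 1$ is at most its value at $y=1$. Taking expectations and using $E[Y_i]=0$, $E[Y_i^2]=\Var[Y_i]$, together with $1+u\le e^u$, yields $E[e^{tY_i}] \le \exp(\Var[Y_i]\,(e^t-1-t))$, and hence $E[e^{tY}] \le \exp(\sigma^2(e^t-1-t))$.

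Combining these gives $\Pr[Y \ge \lambda] \le \exp(-t\lambda + \sigma^2(e^t-1-t))$ for every $t>0$. To turn this into the claimed clean bound I would estimate $e^t - 1 - t \le \frac{t^2}{2(1-t/3)}$ for $0<t<3$ (comparing the Taylor series term by term against a geometric series), obtaining after optimizing over $t$ the Bernstein bound $\exp(-\tfrac{\lambda^2}{2(\sigma^2+\lambda/3)})$. At this point the hypothesis $\lambda < \sigma^2$ does the remaining work: it gives $\sigma^2 + \lambda/3 < \tfrac{4}{3}\sigma^2$, so the exponent is at most $-\tfrac{3\lambda^2}{8\sigma^2} \le -\tfrac{\lambda^2}{3\sigma^2}$, which is exactly the assertion.

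The only genuine obstacle is the moment-generating-function estimate for a single centered, bounded variable in terms of its variance; everything else is bookkeeping and elementary real analysis. A minor point worth flagging is that the constant $3$ and the restriction $\lambda\in(0,\sigma^2)$ are linked: the restriction is precisely what lets the sharper Bernstein exponent be relaxed to the memorable form with denominator $3\sigma^2$.
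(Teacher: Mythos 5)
Your proof is correct; however, there is nothing in the paper to compare it against, because the paper does not prove Theorem~\ref{chern} at all --- it quotes the inequality as a known tool, citing \cite[Theorem~1.10.12]{Doerr20bookchapter}. What you give is the standard Bennett--Bernstein derivation, which is essentially how this bound is established in that reference and elsewhere in the literature. All steps check out: the pointwise bound $e^{ty}\le 1+ty+y^2(e^t-1-t)$ for $|y|\le 1$ follows from the monotonicity of $u\mapsto (e^u-1-u)/u^2$; taking expectations and using $1+u\le e^u$ gives $E[e^{t(X-E[X])}]\le \exp(\sigma^2(e^t-1-t))$; the comparison $e^t-1-t\le \frac{t^2}{2(1-t/3)}$ for $0<t<3$ reduces to $k!\ge 2\cdot 3^{k-2}$ for $k\ge 2$, which holds by induction; the optimizing choice $t=\lambda/(\sigma^2+\lambda/3)$ indeed lies in $(0,3)$ and yields Bernstein's bound $\exp\bigl(-\frac{\lambda^2}{2(\sigma^2+\lambda/3)}\bigr)$; and finally $\lambda<\sigma^2$ gives $2(\sigma^2+\lambda/3)<\frac{8}{3}\sigma^2\le 3\sigma^2$, which delivers the claimed exponent. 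The reduction of the lower tail to the upper tail via the variables $-Y_i$ is also valid. Your closing observation is exactly the right one: the restriction $\lambda\in(0,\sigma^2)$ is what permits replacing the Bernstein denominator $2(\sigma^2+\lambda/3)$ by $3\sigma^2$; without some such restriction a purely Gaussian-type bound of the form $e^{-\Theta(\lambda^2/\sigma^2)}$ cannot hold, since for $\lambda\gg\sigma^2$ the true tail decays only at a slower, Poisson-type rate.
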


\section{From Elitist to Non-Elitist Algorithms} \label{sec1}

Previous works have shown that the expected runtime of the $(1+1)$
EA on the $\DLB$ problem is $\Theta(n^3)$, see \cite[Theorem~3.1]{LehreN19foga} for the upper and \cite[Theorem~4]{DoerrK21ecj} for the lower bound (following from the precise computation of the expected runtime there).

In this section, we extend this lower bound and show that any $(1+1)$-elitist unary unbiased black-box algorithm has a runtime of at least $\Omega(n^3)$. This result  will motivate us to study non-elitist $(1+1)$-type algorithms, which will lead to the discovery that the Metropolis algorithm can solve the \DLB problem significantly faster. 

\subsection{Elitist Algorithms Suffer From the Fitness Valleys} 

\subsubsection{$(\mu + \lambda)$-Elitist Black-Box Complexity}
We start by making precise the elitist black-box complexity model we regard. Since it might be useful in the future, we first define our model for general $(\mu+\lambda)$-elitism, even though our main result in this section only considers $(1+1)$-elitist algorithms. 

A $(\mu+\lambda)$-elitist   algorithm uses a parent population of size $\mu$. In each iteration,
it generates from it $\lambda$ offspring and determines the next parent population by choosing $\mu$ best individual from the $\mu+\lambda$   parents and offspring. Hence the term ``elitist'' refers to the restriction that the next parent population has to consist of $\mu$ best individuals. Ties can be broken arbitrarily, in particular, in case of ties there is no need to prefer offspring. 

We shall further only regard algorithms that are unbiased in the sense of Lehre and Witt~\cite{LehreW12} (see Section~\ref{subsec:unbiase}).  This in particular means that the algorithm has never access to the bit string representation of individuals (except from using unbiased variation operators and computing the fitness). Consequently, all choices done by the algorithm such as choosing parents for the creation of offspring, choosing variation operators, and selecting the next parent population can only rely on the fitnesses of the individuals created so far. 
Finally, as variation operators we shall only allow unary (mutation)  operators (see Definition \ref{ununopt}). 

In summary, we obtain the $(\mu+\lambda)$-elitist unary unbiased black-box algorithm class described in Algorithm~\ref{1+1}. It is similar to the $(\mu+\lambda)$-elitist model proposed in~\cite{DoerrL17}. Different from~\cite{DoerrL17}, we do not require that the algorithm only has access to a ranking of the search points. We note that for an elitist algorithm, adding this restriction or not does not change a lot. The more significant restriction to~\cite{DoerrL17} is that we require the algorithm to be unary unbiased. We do so since we are trying to first explore simple heuristics, and unary unbiased black-box algorithms are among the most simple search heuristics. 
Nevertheless, to ease the language, we shall in the remainder call our algorithms simply $(\mu+\lambda)$-elitist algorithms, that is, we suppress the explicit mention of the unary unbiasedness. 

\begin{algorithm2e}
Generate $\mu$ search points $x^{(0,i)}$, $i\in[1..\mu]$, independently and uniformly at random\;
$X \assign \{ x^{(0,i)}\mid i\in[1..\mu]\}$\;
\For{$t=1,2,3,\dots$}{
   Choose  $\lambda$ individuals $p_1, \dots ,p_\lambda$ from $ X$\;	
	 Choose  $\lambda$  unary unbiased operators $V_1,\dots,V_\lambda$\;   
   Sample $q_1,\dots,q_\lambda$ from $V_1(p_1),\dots, V_\lambda (p_\lambda)$ respectively\;
   $X \assign $ a selection of $\mu $ best individuals from $X \cup \{q_1,\dots,q_\lambda\}$;
   }
\caption{Template of a $(\mu+\lambda)$-elitist unary unbiased black-box algorithm, $(\mu+\lambda)$-elitist algorithm for short, for optimizing an unknown function $f: \{0,1\}^n \rightarrow \mathbb{R}$. }
\label{1+1}
\end{algorithm2e}

For  a $(\mu+\lambda)$-elitist  algorithm $A$, we recall that the runtime $T_A(f)$ on the maximization problem $f$  is by definition the number of fitness evaluations performed  until a maximum of $f$ is  evaluated for the first time.
The  $(\mu+\lambda)$-elitist black-box complexity of the optimization problem of $f$ is defined to be 
$$
\inf_A E[T_A(f)],
$$
where $A$ runs through all $(\mu+\lambda)$  unary unbiased black-box algorithms (Algorithm \ref{1+1}).

\subsubsection{Independence of Irrelevant Bits}

We start our analysis with a simple lemma showing that bit positions that did not have an influence on any fitness evaluation are independently and uniformly distributed. This lemma is similar to  \cite[Lemma~1]{LehreW12}.

\begin{lemma} \label{lem4}
Let $f: \{0,1\}^n \rightarrow \mathbb{R}$, $c\in \Ima f$, and $I\subset [1..n]$. Let $Z = \{z\in\{0,1\}^n\mid \forall i\notin I: z_i=0  \}$. Assume that for all $y \in \{0,1\}^n$ with $f(y) \le c$, we have $f(y\oplus z)=f(y)$ for all $z \in Z$.

Then for any $(\mu+\lambda)$-elitist   algorithm and any $k\in [1..\mu]$,
conditioning on the event 
$$E_{t,c}\vcentcolon=\{ \max_{j\in[1..\mu]}f(x^{(t,j)})=c\},$$
the bits $x^{(t,k)}_i$, $i\in I$, are all independent and uniformly distributed in $\{0,1\}$.\footnote{We use this language here and in the remainder to express that the bits $x_i$,  $i\in I$, are mutually independent,
independent of all other bits of $x$, 
and uniformly distributed   in $\{0,1\}$.}
\end{lemma}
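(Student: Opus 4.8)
The plan is to exploit the XOR-invariance built into unbiasedness (the first bullet of Definition~\ref{ununopt}) together with the hypothesis that flipping bits in $I$ never changes the fitness of a search point of fitness at most $c$. I first reduce the statement to a symmetry claim: it suffices to show that, conditioned on $E_{t,c}$, the law of $x^{(t,k)}$ is invariant under the map $x\mapsto x\oplus z$ for every $z\in Z$. Indeed, for the $I$-coordinates the orbit $\{a\oplus z\mid z\in Z\}$ is exactly the set of strings agreeing with $a$ on $[1..n]\setminus I$, so such invariance forces $\Pr[x^{(t,k)}=a\mid E_{t,c}]$ to be constant across the $2^{|I|}$ strings sharing a given restriction to $[1..n]\setminus I$. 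Summing over this orbit then yields $\Pr[x^{(t,k)}=a\mid E_{t,c}]=2^{-|I|}\,\Pr[x^{(t,k)}_j=a_j \text{ for all } j\notin I\mid E_{t,c}]$, which is precisely the asserted uniformity of the bits in $I$ and their independence from the remaining bits.

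The first genuine step is a structural observation about elitism: on $E_{t,c}$ \emph{every} search point evaluated up to the construction of $X^{(t)}$, both population members and offspring, has fitness at most $c$. This is because in a $(\mu+\lambda)$-elitist algorithm the best fitness in the population is non-decreasing in $t$ (the new population is a set of $\mu$ best individuals from a superset of the old population), so it can never have exceeded $c$ before attaining the value $c$ at time $t$; moreover any offspring of fitness exceeding $c$ would beat the entire current population and hence be selected, pushing the maximum above $c$, a contradiction. Consequently the hypothesis $f(y\oplus z)=f(y)$ applies to \emph{every} point that appears in the history under the event $E_{t,c}$.

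The core step is then to prove, by induction on $t$, that XORing the entire history (the initial population and all subsequently generated offspring) by a fixed $z\in Z$ is a measure-preserving bijection on the trajectories lying in $E_{t,c}$. The base case is immediate, since the initial $\mu$ points are uniform and i.i.d.\ and the uniform distribution is XOR-invariant. For the inductive step I factor the one-step transition probability into (a)~the algorithm's choice of parent indices and of unary unbiased operators $V_1,\dots,V_\lambda$, and (b)~the sampling of the offspring $q_j$ from $V_j(p_j)$. Factor~(a) depends only on the fitness values observed so far, which are unchanged after the $z$-shift by the previous paragraph, hence is preserved; factor~(b) is preserved by the first unbiasedness condition, which gives $\Pr[q=V(p)]=\Pr[q\oplus z=V(p\oplus z)]$ for each offspring (note that only this XOR condition is needed, not permutation invariance). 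Finally, the elitist selection keeps a set of $\mu$ best individuals by fitness, and since all fitnesses are unchanged, the same members (now $z$-shifted) are retained. Composing these, the probability of any admissible history equals that of its $z$-shifted image; since this shift maps $\{x^{(t,k)}=a\}\cap E_{t,c}$ bijectively and probability-preservingly onto $\{x^{(t,k)}=a\oplus z\}\cap E_{t,c}$, dividing by $\Pr[E_{t,c}]$ gives the conditional invariance and hence, via the reduction above, the lemma.

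The main obstacle is carefully handling the adaptivity and internal randomness of the algorithm in the inductive step: the parent and operator choices are made adaptively, and ties in the selection may be broken arbitrarily (possibly randomly), so I must argue that each such decision is a function of the observed fitnesses alone, as guaranteed by the unbiased black-box model of Algorithm~\ref{1+1}, and is therefore untouched by the $z$-shift. The one indispensable input beyond this bookkeeping is the elitism argument of the second step: without it, a point of fitness above $c$ could have influenced a decision in a range where $f$ is \emph{not} $Z$-invariant, which would break the symmetry and invalidate the induction.
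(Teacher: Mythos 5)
Your proof is correct and follows essentially the same route as the paper's: both establish that, conditioned on $E_{t,c}$, the joint law of the history is invariant under XOR with any $z\in Z$ (using unbiasedness, elitism, and the $Z$-invariance hypothesis), and then deduce uniformity of the bits in $I$ from this symmetry. The only difference is that you spell out in detail (via the induction and the observation that under $E_{t,c}$ even discarded offspring have fitness at most $c$) what the paper compresses into the phrase ``by the hypothesis and the mechanisms of $(\mu+\lambda)$-elitist algorithms.''
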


\begin{proof}
We consider the joint distribution of the random variables $$(x^{(s,j)})_{s\in[0.. t],j\in [1..\mu]}$$ under the condition $E_{t,c}$.
By the hypothesis and the mechanisms of $(\mu+\lambda)$-elitist   algorithms,
$(x^{(s,j)})_{s\in[0.. t],j\in [1..\mu]}$ and $(x^{(s,j)}\oplus z)_{s\in[0.. t],j\in [1..\mu]}$
are identically distributed for any $z \in Z$.

In particular, for any $k\in[1..\mu]$ and any $z\in Z$,
$x^{(t,k)}$ and $x^{(t,k)}\oplus z$ are identically distributed.
From this we deduce that for any $x\in \{0,1\}^n$,
$$
\Pr[x^{(t,k)} = x] = 2^{-|I|} \Pr[x^{(t,k)}_{[1..n] \setminus I} = x_{[1..n] \setminus I}].
$$
Therefore, under the condition $\{x^{(t,k)}_{i} = x_{i},\forall i\in [1..n] \setminus I\}$ where $x_{i}$, $ i\in [1..n] \setminus I$, are prescribed bit values, 
the bits $x^{(t,k)}_i$, $i\in I$, are all independent and uniformly distributed in $\{0,1\}$,
which implies the claim.
\end{proof}

We apply the above lemma  to the optimization  of the $\DLB$ function.
\begin{lemma}\label{indep}
Let $m\in[0..\frac{n}{2}-1]$.
For any $(\mu+\lambda)$-elitist    algorithm and any $k\in [1..\mu]$, 
conditioning on the event 
$$ \left\{\max_{j\in[1..\mu]}\DLB(x^{(t,j)})=2m\right\},$$
or on the event 
$$ \left\{\max_{j\in[1..\mu]}\DLB(x^{(t,j)})=2m+1\right\},$$
the bits $x^{(t,k)}_i$, $i=[2m+3..n]$, are all independent and uniformly distributed in $\{0,1\}$.
\end{lemma}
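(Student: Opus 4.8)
The plan is to derive this lemma as a direct application of Lemma~\ref{lem4}, instantiated with $f=\DLB$, with the two choices $c=2m$ and $c=2m+1$, and with the index set $I=[2m+3..n]$. With $I$ fixed in this way, the set $Z$ appearing in Lemma~\ref{lem4} becomes exactly the set of bit strings supported on $[2m+3..n]$, that is, those $z$ that leave the first $m+1$ blocks $(z_1,z_2),\dots,(z_{2m+1},z_{2m+2})$ equal to $00$. Thus the entire argument reduces to checking the single hypothesis of Lemma~\ref{lem4}: for every $y$ with $\DLB(y)\le c$ and every such $z\in Z$, we have $\DLB(y\oplus z)=\DLB(y)$. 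Once this is verified, the conclusion that the bits $x^{(t,k)}_i$, $i\in[2m+3..n]$, are independent and uniformly distributed under either conditioning event follows immediately, since $\{\max_{j}\DLB(x^{(t,j)})=c\}$ is precisely the event $E_{t,c}$ of Lemma~\ref{lem4}.

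The key observation is that $\DLB(y)$ depends only on the bits up to and including the critical block. First I would note that both conditioning events are covered by the single inequality $\DLB(y)\le 2m+1$ (which is why the weaker bound suffices to treat $c=2m$ and $c=2m+1$ at once), and that $2m+1<n$, so $y\neq(1,\dots,1)$. Writing $m''$ for the index of the critical block of $y$, I would then read off from the definition of $\DLB$ that $\DLB(y)\in\{2m'',\,2m''+1\}$; in either case $\DLB(y)\le 2m+1$ forces $m''\le m$, using that $m''$ is an integer. Consequently the critical block $(y_{2m''+1},y_{2m''+2})$ and all preceding $11$ blocks lie inside positions $[1..2m+2]$, so the value $\DLB(y)$ is a function of $(y_1,\dots,y_{2m+2})$ alone.

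It then remains to combine this with the support of $z$. Every $z\in Z$ vanishes on $[1..2m+2]$, so $y$ and $y\oplus z$ agree on all of $[1..2m+2]$; in particular the first non-$11$ block of $y\oplus z$ is still at index $m''$ and of the same type, and $y\oplus z\neq(1,\dots,1)$ because its block $m''$ is still not a $11$ block. Hence $\DLB(y\oplus z)=\DLB(y)$, which is exactly the hypothesis needed. After also checking the trivial requirement $c\in\Ima\DLB$ (both $2m$ and $2m+1$ lie in $[0..n-1]\subseteq\Ima\DLB$), Lemma~\ref{lem4} yields the claim. I do not anticipate a genuine obstacle: the whole content is in matching the block structure of $\DLB$ to the cut at position $2m+2$, and the only point requiring care is the off-by-one bookkeeping, namely confirming that $\DLB(y)\le 2m+1$ really confines the critical block to positions $[1..2m+2]$ rather than $[1..2m+4]$, and that this bound simultaneously handles both conditioning events.
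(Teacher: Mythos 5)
Your proposal is correct and follows exactly the paper's route: the paper's own proof is the one-line instantiation of Lemma~\ref{lem4} with $c=2m$ (resp.\ $c=2m+1$) and $I=[2m+3..n]$. Your additional verification that the hypothesis of Lemma~\ref{lem4} holds (the critical block of any $y$ with $\DLB(y)\le 2m+1$ lies in positions $[1..2m+2]$, so flipping bits in $[2m+3..n]$ cannot change $\DLB$) is precisely the step the paper leaves implicit, and your bookkeeping there is accurate.
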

\begin{proof}
It suffices to take $c=2m$ ($c=2m+1$ for the second case) and $I=[2m+3..n]$ in the preceding lemma.
\end{proof}

\subsubsection{Runtime Analysis of $(1+1)$ Elitist Unary Unbiased EAs}\label{subsec31}
This section is devoted to proving that the     $(1+1)$-elitist black-box complexity of  the   $\DLB$ problem is $\Omega(n^3)$.
To this end we  use drift analysis and take the  $\HLB$ function as potential.

We start with an estimate of the influence of the so-called free-riders,
that is,
we estimate that the expected potential of a random string with $2m+2$ 
leading ones is at most $2m+4$.
\begin{lemma} \label{upperbrand}
Let $m\in[ 0..\frac{n}{2}-1]$.
Let  $x$ be   a random bit string such that $x_i=1$ for all  $i\in[1..2m+2]$
and such that the bits $x_i$,  $i\in[2m+3..n]$ are independent and uniformly distributed in $\{0,1\}$. Then for any $\delta\in[0,2]$
we have
\begin{equation*}
    E[\HLB_\delta(x)]\leq 2m+4.
\end{equation*}
\end{lemma}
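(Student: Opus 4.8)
The plan is to compute $E[\HLB_\delta(x)]$ almost exactly, by splitting the potential into a ``number of leading $11$ blocks'' part and a ``critical-block bonus'' part. First I would record the structure imposed by the hypothesis: the blocks $(x_1,x_2),\dots,(x_{2m+1},x_{2m+2})$ are all equal to $11$, while the remaining blocks $(x_{2\ell+1},x_{2\ell+2})$, $\ell\in[m+1 .. \tfrac n2-1]$, are mutually independent and each uniform over $\{00,01,10,11\}$. Letting $L$ denote the number of leading $11$ blocks of $x$, a direct comparison of the definitions of $\DLB$ and $\HLB_\delta$ shows
\[
\HLB_\delta(x)=2L+(2-\delta)\,\mathbbm{1}[\text{the critical block is }01\text{ or }10],
\]
with the understanding that in the all-ones case $L=\tfrac n2$, the indicator is $0$, and indeed $\HLB_\delta(x)=n=2L$. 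So it suffices to bound $E[L]$ and the probability that the critical block has type $01$ or $10$.

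Next I would write $L=(m+1)+K$, where $K\ge 0$ counts the ``free-rider'' $11$ blocks appearing right after the forced prefix. Since each random block equals $11$ with probability $\tfrac14$ independently, $\Pr[K\ge k]\le 4^{-k}$ for every $k\ge 1$, so that $E[K]=\sum_{k\ge1}\Pr[K\ge k]\le\sum_{k\ge1}4^{-k}=\tfrac13$. For the bonus term, I would observe that, conditional on $x$ not being all-ones, the type of the first non-$11$ block (the critical block) is $01$ or $10$ with probability $\tfrac23$ and $00$ with probability $\tfrac13$, independently of its position; hence $\Pr[\text{critical block is }01\text{ or }10]\le\tfrac23$.

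Assembling the pieces and using $\delta\ge0$ gives
\begin{align*}
E[\HLB_\delta(x)]
&=2(m+1)+2E[K]+(2-\delta)\Pr[\text{critical block is }01\text{ or }10]\\
&\le 2m+2+2\cdot\tfrac13+2\cdot\tfrac23=2m+4 .
\end{align*}
In fact the exact value is $2m+4-2\cdot4^{-N}-\tfrac{2\delta}{3}(1-4^{-N})$ with $N=\tfrac n2-m-1$, so the bound holds with a little room to spare.

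The step I expect to need the most care is the decomposition above, in particular the correct bookkeeping of the all-ones string, where $\HLB_\delta$ attains its maximal value $n$. A naive estimate of ``$2(m+1)$ for the forced blocks, plus $2$ per free-rider, plus the full bonus $2-\delta$'' would overshoot $2m+4$; the inequality only closes because the bonus is collected with probability at most $\tfrac23$ (a $00$ critical block, and the all-ones string, earn no bonus), and this factor exactly offsets the free-rider contribution $2E[K]\le\tfrac23$. I therefore have to make sure the two events --- the free-rider count $K$ and the critical-block type --- are combined consistently, and that the all-ones case is folded in as $2L$ with $L=\tfrac n2$ rather than being mistakenly granted a bonus; keeping the truncation factor $4^{-N}$ explicit (as in the exact formula) is the clean way to guarantee this.
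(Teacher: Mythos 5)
Your proof is correct, and it takes a genuinely different route from the paper's. The paper proves the lemma by backwards induction on $m$: the base case $m=\tfrac n2-1$ is the all-ones string with $\HLB_\delta(x)=n<n+2$, and in the inductive step one conditions on the first random block $(x_{2m+3},x_{2m+4})$, which is $00$, $01/10$, or $11$ with probabilities $\tfrac14,\tfrac12,\tfrac14$, giving $E[\HLB_\delta(x)]\le\tfrac14(2m+2)+\tfrac12(2m+4-\delta)+\tfrac14(2m+6)=2m+4-\tfrac\delta2\le 2m+4$, where the $11$ case is handled by the induction hypothesis. You instead compute the expectation in closed form via the decomposition $\HLB_\delta(x)=2L+(2-\delta)\,\mathbbm{1}[\text{critical block is }01\text{ or }10]$, bounding the free-rider count by $E[K]\le\tfrac13$ and the bonus probability by $\tfrac23$; these are exactly the quantities the paper's recursion generates implicitly, but you sum the geometric series explicitly rather than letting the induction do it. What the induction buys is brevity and automatic handling of the truncation at the all-ones string (it is simply the base case, and no joint bookkeeping of $L$ and the critical-block type is needed); what your computation buys is the exact value $2m+4-2\cdot 4^{-N}-\tfrac{2\delta}{3}(1-4^{-N})$, which makes the slack in the bound explicit and isolates the structural reason the constant $2m+4$ appears, namely that the free-rider surplus $2E[K]\le\tfrac23$ is exactly offset by the deficit in the bonus probability. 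Your treatment of the delicate points is sound: the all-ones string is folded in as $2L$ with no bonus, and the two random quantities are combined by plain linearity of expectation, so no independence between $K$ and the critical-block type is required.
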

\begin{proof}
The statement clearly holds for $m=\frac{n}{2}-1$ since in this case $x=(1,\dots,1)$ and $\HLB_\delta((1,\dots,1))=n< n+2$.
Now we proceed by backwards induction on $m$. 
Suppose that the conclusion  holds for all $m=k+1,\dots,\frac{n}{2}-1$.
For $m=k$ we compute
\begin{align*} 
    E[&\HLB_\delta(x)]\\
    &={}   \tfrac{1}{4}(2k+2)+\tfrac{1}{2}(2k+4-\delta)+\tfrac{1}{4}E[\HLB_\delta(x)\mid x_{2k+3}=x_{2k+4}=1] .
\end{align*}
The induction hypothesis can be applied to the last term, yielding
\begin{align*} 
    E[\HLB_\delta(x)]
       \leq{} & \tfrac{1}{4}(2k+2)+\tfrac{1}{2}(2k+4-\delta)+\tfrac{1}{4} (2k+6) \le 2k+4.
\end{align*}
By induction, this proves the lemma.
\end{proof}
We now estimate the expected progress in one iteration,
first in the case that the parent has an even $\DLB$ value (Lemma \ref{case2l}),
then in the case that it is a local optimum (Lemma \ref{case2l+1}).

\begin{lemma}\label{case2l}
Let $m \in[\frac{n}{4}..\frac{n}{2}-1]$.
Let  $x$ be    a random bit string such that  $\DLB(x)=2m$ 
and that the bits $x_i$, $i\in [2m+3..n]$ are independent and uniformly distributed in $\{0,1\}$.
Let $y$ be a random  bit string  generated from $x$ via a unary unbiased variation  operator $V$.
Let  $$Y\vcentcolon=(\HLB_{\delta}(y)-\HLB_{\delta}(x))\mathbbm{1}_{\DLB(y)\geq\DLB(x)}.$$
Then we have 
\begin{equation*}
E[Y]\leq \frac{2\delta}{n} .
\end{equation*}

\end{lemma}

\begin{proof}
Denote by $r_V$ the random variable describing the number of bits the operator $V$ flips in its argument. Using Lemma \ref{charc_uu}, we can decompose the expectation by conditioning on the value of $r_V$:
\begin{equation}\label{eq13}
E[Y]= \sum_{r=1}^n E[Y\mid r_V=r]\Pr[r_V=r].
\end{equation}
By symmetry  we assume that $x_{2m+1}=1$ and $x_{2m+2}=0$.
We use $F_{2m+1}$ ($F_{2m+2}$ resp.) to denote the event in which
$x_{2m+1}$ ($x_{2m+2}$ resp.) is the only bit that has been flipped among the first $2m+2$ bits of~$x$. 
We observe that $\{Y< 0\}=F_{2m+1}$ and $\{Y> 0\}=F_{2m+2}$.
It follows that
\begin{align*}
E[Y &\mid r_V=r]\\
={}&E[Y\mid r_V=r, F_{2m+1}]\Pr[F_{2m+1}\mid r_V=r]\\
&+E[Y\mid r_V=r, F_{2m+2}]\Pr[F_{2m+2}\mid r_V=r] \\
={}& (\delta-2)\Pr[F_{2m+1}\mid r_V=r]\\
&+E[\HLB_\delta(y)-(2m+2-\delta)         \mid r_V=r, F_{2m+2}]\Pr[F_{2m+2}\mid r_V=r].
\end{align*}
Conditioning on the event $\{r_V=r\}\cap F_{2m+2}$,
the random bit string $y$ has only $1$s in its first $2m+2$ bit positions and the other bits of $y$ are independent and uniformly distributed in $\{0,1\}$. 
 Lemma \ref{upperbrand} thus gives  that  $$E[\HLB_\delta(y)\mid r_V=r, F_{2m+2}]\leq 2m+4.$$
 Therefore we have 
 \begin{align}\label{eq12}
E[Y& \mid r_V=r]  \nonumber \\
&\leq (\delta-2)\Pr[F_{2m+1}\mid r_V=r]+(\delta+2)\Pr[F_{2m+2}\mid r_V=r].
\end{align}
Now we analyze the two probabilities
\begin{align}\label{eq11}
\Pr[F_{2m+1}\mid r_V=r]=&\Pr[F_{2m+2}\mid r_V=r] \nonumber \\
=&{n-2m-2\choose r-1} {n\choose r}^{-1}=\vcentcolon D_r.
\end{align}
We calculate for $r-1 \leq  n - 2m - 2 $ that 
    \begin{align*}
        \frac{D_{r+1}}{D_r}
    ={}&\frac{{n-2m-2\choose r}
    {n\choose r+1}^{-1}}{{n-2m-2\choose r-1}
    {n\choose r}^{-1}}=\frac{(r+1)(n-2m-1-r)}{r(n-r)} \\
    ={}&\frac{n-2m-1-2(m+1)r}{r(n-r)}+1 .
    \end{align*}
Since $r\geq 1$ and $m\geq \frac{n}{4}$, we have
    \begin{equation*}
        n-2m-1-2(m+1)r \leq n-4m-3 <0.
    \end{equation*}
    This implies  that $\frac{D_{r+1}}{D_r} <1$.
Noting also that $D_r = 0$ for $r > n - 2m - 2 $,
we see that $D_r$ is 
decreasing in $r$.
Consequently, we have  
\begin{equation}\label{par1}
    D_r\leq D_1=\frac{1}{n}
\end{equation}
 for all $r \geq 1$.
Combining this with  (\ref{eq12})  and (\ref{eq11}), we obtain
$$
E[Y \mid r_V=r]\leq 2\delta D_r\leq \frac{2\delta}{n}
$$
for all $r\geq 1$.
Substituting this into (\ref{eq13}), the conclusion  follows.
\end{proof}

\begin{lemma}\label{case2l+1}
Let $m\in[\frac{n}{3},\frac{n}{2}-1]$.
Let  $x$ be    a random bit string such that  $\DLB(x)=2m+1$  
and that the bits $x_i$, $i\in [2m+3..n]$, are independent and uniformly distributed in $\{0,1\}$.
Let $y$ be a random bit string  generated from $x$ by a unary unbiased variation  operator $V$.
Let  $Y\vcentcolon=(\HLB_{\delta}(y)-\HLB_{\delta}(x))\mathbbm{1}_{\DLB(y)\geq\DLB(x)}$.
Then we have  
\begin{equation*}
E[Y]\leq \frac{16}{n^2} .
\end{equation*}
\end{lemma}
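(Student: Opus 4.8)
The plan is to mirror the structure of the proof of Lemma~\ref{case2l}, decomposing $E[Y]$ according to the number $r_V$ of bits flipped by $V$ (using the characterization from Lemma~\ref{charc_uu}) and bounding each conditional expectation $E[Y\mid r_V=r]$. The decisive structural observation is that, since $\DLB(x)=2m+1$, the first $2m$ bits of $x$ are all $1$ and the critical block $(x_{2m+1},x_{2m+2})$ equals $(0,0)$. Hence a strictly positive contribution to $Y$ (i.e.\ $\mathbbm{1}_{\DLB(y)\ge\DLB(x)}=1$ together with an $\HLB$-gain) can only arise when $\DLB(y)\ge 2m+2$, which forces the first $m+1$ blocks of $y$ to be $11$, that is, $y_i=1$ for all $i\in[1..2m+2]$. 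This requires that \emph{none} of the $2m$ leading ones be flipped and that \emph{both} critical bits $x_{2m+1},x_{2m+2}$ be flipped. In every other outcome either a leading block is broken (so $\DLB(y)<\DLB(x)$ and the indicator vanishes), or the critical block becomes $01$ or $10$ (so $\DLB(y)=2m<\DLB(x)$), or it stays $00$ (so $\DLB(y)=2m+1$ and $\HLB_\delta(y)=\HLB_\delta(x)$); in all these cases $Y=0$. This is exactly why the bound here is of order $n^{-2}$ rather than the $n^{-1}$ of Lemma~\ref{case2l}: progress now needs two specific bits to flip simultaneously.

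First I would fix $r=r_V$ and introduce the event $E_r$ that both critical bits are flipped and no leading bit is flipped, arguing by the above that $E[Y\mid r_V=r]=E[Y\mid r_V=r,E_r]\,\Pr[E_r\mid r_V=r]$. A direct count gives
\begin{equation*}
\Pr[E_r\mid r_V=r]=\binom{n-2m-2}{r-2}\binom{n}{r}^{-1}=\vcentcolon G_r,
\end{equation*}
which is zero unless $r\ge 2$. Conditioned on $E_r$ and $r_V=r$, the string $y$ satisfies $y_i=1$ for $i\in[1..2m+2]$, while the tail bits $y_i$, $i\in[2m+3..n]$, remain independent and uniformly distributed, since flipping a uniformly random subset of independent uniform bits preserves this property. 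Therefore Lemma~\ref{upperbrand} applies and yields $E[\HLB_\delta(y)\mid r_V=r,E_r]\le 2m+4$, whence $E[Y\mid r_V=r,E_r]=E[\HLB_\delta(y)\mid r_V=r,E_r]-2m\le 4$.

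It then remains to bound $\max_r G_r$. The plan is to study the ratio $G_{r+1}/G_r$ exactly as $D_{r+1}/D_r$ was handled in Lemma~\ref{case2l}: a short computation gives
\begin{equation*}
\frac{G_{r+1}}{G_r}=\frac{(n-2m-r)(r+1)}{(r-1)(n-r)},
\end{equation*}
and reducing $G_{r+1}/G_r<1$ to its numerator leads to the condition $n<m(r+1)+r$. Here the hypothesis $m\ge n/3$ enters decisively: for every $r\ge 2$ it guarantees $m(r+1)+r\ge n+2>n$, so $(G_r)_{r\ge 2}$ is strictly decreasing and attains its maximum at $r=2$, where $G_2=\binom{n}{2}^{-1}=\tfrac{2}{n(n-1)}$. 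Combining the pieces,
\begin{equation*}
E[Y]=\sum_{r\ge 2}E[Y\mid r_V=r]\,\Pr[r_V=r]\le 4G_2=\frac{8}{n(n-1)}\le\frac{16}{n^2},
\end{equation*}
the last inequality holding for all $n\ge 2$.

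I expect the main obstacle to be the combinatorial estimate $\max_r G_r=G_2$: getting the ratio $G_{r+1}/G_r$ right, and, more importantly, recognizing that the monotonicity threshold is precisely $m\ge n/3$ (rather than the $m\ge n/4$ of Lemma~\ref{case2l}), which is exactly why the hypothesis on $m$ is tightened in this lemma. The remaining steps---the reduction to the ``both critical bits flip'' event and the reuse of Lemma~\ref{upperbrand} via preservation of the uniform distribution on the tail bits---are routine once this structural picture is in place.
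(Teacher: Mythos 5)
Your proof is correct and takes essentially the same route as the paper's: the same reduction of $\{Y>0\}$ to the event that both critical bits flip while no leading bit flips, the same application of Lemma~\ref{upperbrand} to bound the conditional $\HLB_\delta$-gain by $4$, and the same monotonicity analysis of the binomial ratio (your $G_r$ is exactly the paper's $E_r$), with the hypothesis $m\ge n/3$ entering in precisely the same way to place the maximum at $r=2$. The only cosmetic difference is that you condition on $r_V=r$ first and then on the flip event, whereas the paper factors $E[Y]=E[\HLB_\delta(y)-\HLB_\delta(x)\mid Y>0]\Pr[Y>0]$; the underlying computations are identical.
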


\begin{proof}
Since $\DLB(x)=2m+1$,
 $\DLB(y)\geq\DLB(x)$ implies  $\HLB_{\delta}(y)\geq \HLB_{\delta}(x)$.
 Thus $Y$ is always non-negative and 
we can  decompose $E[Y]$ into a product of two terms
\begin{align}\label{eq5}
    E[Y]=E[\HLB_{\delta}(y)- \HLB_{\delta}(x)\mid Y>0]\Pr[Y>0].
\end{align}
First we bound the term $E[\HLB_{\delta}(y)- \HLB_{\delta}(x)\mid Y>0]$.
Under the condition $Y>0$, 
we know that $y_i=1$ for $i\in[1..2m+2]$, while the
$y_i$, $i\in [2m+3..n]$, are still independent and uniformly distributed in $\{0,1\}$.
 Lemma \ref{upperbrand} then implies that $E[\HLB_{\delta}(y) \mid Y>0]\leq 2m+4$, thus we have
\begin{equation}\label{par2}
E[\HLB_{\delta}(y)- \HLB_{\delta}(x)\mid Y>0]\leq 2m+4-2m=4.
\end{equation}
On the other hand, to bound $\Pr[Y>0]=\Pr[\DLB(y)>\DLB(x)]$ we invoke Lemma \ref{charc_uu} and compute
\begin{align*}
    \Pr&[\DLB(y)>\DLB(x)]\\
    &= \sum_{r=2}^{n-2m} \Pr[r_V=r]\Pr[\DLB(y)>\DLB(x)\mid r_V=r] \\
    &= \sum_{r=2}^{n-2m} \Pr[r_V=r]{n-2m-2 \choose r-2}{n \choose r}^{-1}\\
    &\leq \max_{2\leq r\leq n-2m} {n-2m-2 \choose r-2}{n \choose r}^{-1},
\end{align*}
where $r$ can be restricted to $2\leq r\leq n-2m$ since otherwise $Y$ would certainly be zero.
With $E_r\vcentcolon ={n-2m-2 \choose r-2}{n \choose r}^{-1}$, we have 
\begin{equation*}\label{quotient1}
    \frac{E_{r+1}}{E_r}=\frac{2(n-m-(m+1)r)}{(r-1)(n-r)}+1 <1
\end{equation*}
by our hypothesis $n-3m\leq 0$ and  $r\geq 2$.
As in Lemma \ref{case2l} we conclude that $E_r$ is decreasing in $r$ and 
\begin{align}\label{par3}
   \Pr[&Y>0]\nonumber\\
   =&\Pr[\DLB(y)>\DLB(x)]\leq  \max_{2\leq r\leq n-2m}E_r=E_2={n \choose 2}^{-1}\leq \frac{4}{n^2}.
\end{align}
Combining (\ref{eq5}), (\ref{par2}) and (\ref{par3}), we obtain $E[Y]\leq \frac{16}{n^2}$.
\end{proof}

With the estimates on the change of the potential \HLB above, we can now easily prove the main result of this section. By taking $\delta = \Theta(\frac 1n)$ suitably, we ensure that the expected gain of the potential in one iteration is only $O(\frac{1}{n^2})$, at least in a sufficiently large range of the potential space.
To have a small progress in the whole search space, as necessary to apply the additive drift theorem, we regard the potential  $h_t=\HLB_\delta(x^{(t+S)})$, where $S$ denotes the first time that $x^{(t)}$ has an $\HLB$ value not less than $\frac{2n}{3}$. This potential also has a drift of only $O(\frac{1}{n^2})$ and thus the additive drift theorem easily gives the lower bound of $\Omega(n^3)$.

\begin{theorem}\label{1+lam}
The $(1+1)$-elitist black-box complexity of the $\DLB$ problem is $\Omega(n^3)$.
\end{theorem}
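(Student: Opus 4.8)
The plan is to apply the lower-bound half of the additive drift theorem (Theorem~\ref{adddrift}) with $\HLB_\delta$ as the potential. The two preceding lemmas already supply the per-iteration drift estimates I need: whenever the current parent has an even $\DLB$-value $2m$ with $m\geq n/4$, Lemma~\ref{case2l} bounds the expected potential gain by $2\delta/n$, and whenever the parent sits on a local optimum (odd $\DLB$-value $2m+1$) with $m\geq n/3$, Lemma~\ref{case2l+1} bounds it by $16/n^2$. Both lemmas presuppose that the bits beyond the critical block are independent and uniform, which is exactly what Lemma~\ref{indep} guarantees for any $(1+1)$-elitist algorithm (here $\mu=1$, so the ``maximum over the population'' is just the single parent). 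I would fix $\delta=8/n$, which is admissible ($\delta\in(0,2)$ for $n>4$) and equalizes the two bounds, so that the per-step drift is at most $16/n^2$ at every state with $m\geq n/3$.

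The obstacle to applying the drift theorem directly is that these estimates hold only for large $m$, whereas near the start of the run the potential can grow much faster. I therefore work with the shifted process $h_t:=\HLB_\delta(x^{(t+S)})$, where $S$ is the first time the potential reaches a threshold slightly above $2n/3$ (taking, say, $2n/3+2$ removes an off-by-one and forces $m\geq n/3$ at time $S$), and I restrict the state space $\mathcal{S}$ of the drift theorem to the $\HLB_\delta$-values with $m\geq n/3$ together with $n$. The decisive observation is that elitism makes the $\DLB$-value of the parent non-decreasing, hence $m=\lfloor\DLB/2\rfloor$ is non-decreasing; thus from time $S$ on the process never leaves $\mathcal{S}$, and at every reached state one of Lemmas~\ref{case2l},~\ref{case2l+1} applies, giving $\Delta_t(s)\leq 16/n^2$. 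Two small points need checking. First, the one-step change of $h_t$ really equals the quantity $Y$ bounded in the lemmas: a strict $\DLB$-improvement is always accepted, while on a tie the $\HLB_\delta$-value is unchanged, so tie-breaking never creates spurious potential gains. Second, Lemma~\ref{indep} still yields uniform free bits at the random time $S$, because $S$ and the conditioning event depend only on the fitness history and the symmetry argument behind Lemma~\ref{lem4} is insensitive to such conditioning.

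The step I expect to be the real work is showing $E[n-h_0]=\Omega(n)$, i.e. that the potential does not already jump close to $n$ at the moment it crosses the threshold; since $h_0=\HLB_\delta(x^{(S)})\leq n$ is bounded below only by the threshold, I need a matching upper bound on $E[h_0]$. Here Lemma~\ref{upperbrand} is the right tool: the parent $x^{(S-1)}$ just below threshold has $m<n/3$, hence at most $2n/3+O(1)$ leading ones, and the accepted offspring $x^{(S)}$ has its critical block fixed to $11$ while all bits past the old critical block remain uniform (flipping a uniform bit keeps it uniform, and unbiasedness makes the flip pattern independent of those bits). Lemma~\ref{upperbrand} then gives $E[\HLB_\delta(x^{(S)})\mid x^{(S-1)}]\leq 2m+4\leq 2n/3+O(1)$. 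The subtlety is that $S$ conditions on actually reaching the threshold, which biases the number of free-riders produced in that step; this is resolved by the memorylessness of the geometric free-rider count, so the overshoot past the threshold has expectation $O(1)$ irrespective of how far below threshold $x^{(S-1)}$ was. This yields $E[h_0]\leq 2n/3+O(1)$ and hence $E[n-h_0]\geq n/3-O(1)=\Omega(n)$.

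Combining the pieces, the lower-bound part of Theorem~\ref{adddrift} gives $E[T-S]\geq E[n-h_0]\,(16/n^2)^{-1}=\Omega(n)\cdot\Omega(n^2)=\Omega(n^3)$, and since the number of fitness evaluations is at least the number of iterations from time $S$ onward, every $(1+1)$-elitist unary unbiased algorithm needs $\Omega(n^3)$ evaluations in expectation. As this holds for an arbitrary algorithm in the class, the $(1+1)$-elitist black-box complexity of $\DLB$ is $\Omega(n^3)$. The main difficulty, as indicated, is the control of the overshoot $E[n-h_0]$ at the stopping time $S$; the drift estimates themselves are already provided by Lemmas~\ref{case2l} and~\ref{case2l+1}.
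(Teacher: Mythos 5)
Your proposal is correct and follows essentially the same route as the paper's own proof: the same potential $\HLB_\delta$ with $\delta=8/n$, the same stopping time $S$ at a threshold of roughly $2n/3$, the same shifted process $h_t=\HLB_\delta(x^{(t+S)})$, the drift bounds from Lemmas~\ref{case2l} and~\ref{case2l+1} made applicable via Lemma~\ref{indep}, the free-rider bound of Lemma~\ref{upperbrand} to control $E[h_0]$, and the additive drift theorem to conclude $E[T-S]=\Omega(n^3)$. The only differences are cosmetic: your slightly shifted threshold, your more explicit discussion of tie-breaking and of the overshoot at time $S$ (which the paper handles by applying Lemmas~\ref{indep} and~\ref{upperbrand} directly to $x^{(S)}$), and the paper's explicit conditioning on the event that $x^{(0)}$ starts below the threshold, which your argument via $x^{(S-1)}$ implicitly assumes.
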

\begin{proof}
Consider any $(1+1)$-elitist algorithm $A$. Let $x^{(t)}$ denote the individual at time $t$.
By a slight abuse of notation we write  $\frac{2n}{3}$ to denote the even integer~ $ \min \{\ell\in 2\mathbb{N}\mid \ell\geq \frac{2n}{3}\}$.
First we observe that $\Pr\left[\HLB(x^{(0)})\geq \frac{2n}{3}\right]=\Theta(2^{-\frac{2}{3}n})$. 
In the following we work under the condition $\mathcal{C}\vcentcolon=\{\HLB(x^{(0)})<\frac{2n}{3}\} $. 
Under this condition,  $S\vcentcolon = \min\{t\in\mathbb{N}\mid \HLB(x^{(t)})\geq \frac{2n}{3}\}$ is strictly positive.
Let $T\vcentcolon = \min\{t\in\mathbb{N}\mid \HLB(x^{(t)})\geq n\}+1$ be the runtime of $A$.
We will use the additive drift theorem (Theorem \ref{adddrift}) 
to prove $E[T-S\mid \mathcal{C}]= \Omega(n^3)$,
from which   the claim follows via
\begin{equation}\label{eq15}E[T]\geq \Pr[\mathcal{C}]E[T\mid \mathcal{C}]\geq (1-o(1)) E\left[T-S\mid\mathcal{C}\right]. 
\end{equation}

For a $\delta$ to be specified later, we regard the potential $h_t=\HLB_\delta(x^{(t+S)})$. Note that because of the elitism of our algorithm, we have $h_t \ge \frac{2n}{3}$ for all $t \ge 0$.
Let  $m\in[\frac{n}{3}..\frac{n}{2}-1]$.

Lemma \ref{indep}  implies that when conditioning on the event $\{h_t=2m+2-\delta\}=\{\DLB(x^{(t+S)})=2m\}$, the bits
$x^{(t+S)}_i$, $i\in[2m+3..n]$, are independent and uniformly distributed in $\{0,1\}$.
Let $y$ be a bit string  obtained by applying a unary unbiased variation operator to  $x^{(t+S)}$.
 Lemma \ref{case2l} then applies to $y$ and $x^{(t+S)}$, yielding
 $$
    E\left [(\HLB_{\delta}(y)-\HLB_{\delta}(x^{(t+S)}))\mathbbm{1}_{\DLB(y)\geq\DLB(x^{(t+S)})}\,\middle| \,\mathcal{C}\right ]\leq \frac{2\delta}{n}.
    $$
Since the term on the left-hand side is the   expected increase in the $\HLB$  function during an iteration,
we have in fact proven that $\Delta_t(2m+2-\delta)\vcentcolon=E[h_{t+1}-h_t\mid h_t=2m+2-\delta,\mathcal{C}]\leq \frac{2\delta}{n}$ for any $t$.

For the case $\{h_t=2m\}$ we  proceed in the same manner (except that we use Lemma \ref{case2l+1} instead of  Lemma  \ref{case2l})  to obtain     $\Delta_t(2m)\leq \frac{16}{n^2}$ for any~ $t$.
By setting  $\delta$ to $\frac{8}{n}$,
  $\Delta_t(s)\leq \frac{16}{n^2}$ holds for any possible $h_t$ value~ $s$.

Recall that $\frac{2n}{3}$ denotes the smallest even integer not less than it.
By the definition of $S$ and by Lemma \ref{indep},
 $x^{(S)}$ is a random bit string satisfying $x^{(S)}_i=1$ for $i\in[1..\frac{2n}{3}]$ and such that $x^{(S)}_i$, $i\in[\frac{2n}{3}+1..n]$, are independent and uniformly distributed in $\{0,1\}$.
Thus  we   have $E[h_0]=E[\HLB_\delta(x^{(S)})]=\frac{2n}{3}+O(1)$ by virtue of Lemma~ \ref{upperbrand}.
The additive drift theorem (Theorem \ref{adddrift})  then yields  $$E[T-S\mid\mathcal{C}]\geq \frac{n-\frac{2n}{3}-O(1)}{16/n^2}=\Theta(n^3),$$
which finishes the proof with (\ref{eq15}).
 \end{proof}

\subsection{The Unary Unbiased Black-box Complexity of the DLB Problem is at Most Quadratic}\label{sec:nonel}

In the preceding  section we have seen that  elitism does not ease optimizing the $\DLB$ problem.
 This section, therefore, is devoted to investigating the best possible expected runtime on the $\DLB$ problem without elitism.
To be more precise, we  show that the unary unbiased black-box complexity of the $\DLB$ problem is $O(n^2)$. This result will be complemented by a matching lower bound (Theorem \ref{Una_Unbia}) in Section \ref{sec2}.

%\subsubsection{Unbiased Black-box Complexity}\label{ubbc}
%If   a black-box algorithm uses only $k$-ary unbiased variation operators,
% we call it a  $k$-ary unbiased black-box algorithm (see Algorithm \ref{unbia}). 
%
%\begin{algorithm2e}%
%Generate a search point $x^{(0)}$ uniformly at random in $\{0,1\}^n$;
%
%Compute $f(x^{(0)})$, $I_0\assign \left(f(x^{(0)}) \right)$;
%
%\For{$t=1,2,3,\dots$}{
%Depending on $I_{t-1}$, choose $i_1,\dots,i_k\in[1..t-1]$ and choose a $k$-ary unbiased variation  operator $V$;\label{choose}
%
%Sample $x^{(t)}$ from $V(x^{(t_1)},\dots,x^{(t_k)})$;
%
%Evaluate $f(x^{(t)})$,  $I_t\assign \left(I_{t-1}, f(x^{(t)})\right) $;
%   }
%\caption{Template of a  $k$-ary unbiased black-box algorithm for maximizing an unknown function $f: \{0,1\}^n \rightarrow \mathbb{R} $}
%\label{unbia}
%\end{algorithm2e}
%
%In the context of evolutionary computation we also call the evaluation~  $f(x^{(t)})$  of an individual $x^{(t)}$ under $f$ a fitness evaluation. 
%The runtime of the algorithm, denoted by $T_{A}$, is defined as the number of fitness evaluations $f(x^{(t)})$ made until an optimum is evaluated for the first time.
%The $k$-ary unbiased black-box complexity of a problem is defined as 
%the infimum of expected runtimes of $k$-ary unbiased black-box algorithms on this problem.
%That is,
%$$
%\inf_{A}E[T_A]
%$$
%where $A$ runs through all $k$-ary unbiased black-box algorithms.
%Similarly, the $(1+1)$-type unbiased black-box complexity is defined as $\inf_{A}E[T_A]$ where $A$ runs through all $(1+1)$-type black-box algorithms. 

We recall the definition of a $k$-ary unbiased black-box algorithm (Algorithm~\ref{alg:unbiased}) from Section~\ref{subsec:unbiase}.
The $k$-ary unbiased black-box complexity of a problem is defined as 
the infimum expected runtime of a $k$-ary unbiased black-box algorithm on this problem, that is,
$$
\inf_{A}E[T_A(f)],
$$
where $A$ runs through all $k$-ary unbiased black-box algorithms.

Let $V$ be the unary unbiased operator such that $V(x)$ is obtained from flipping one bit of $x$. Then the additive drift theorem implies  that the $(1+1)$-elitist  black-box algorithm using $V$ as variation operator has an   expected runtime of  $O(n^2)$ on the $\lo$ problem. 
By a small adjustment of this $(1+1)$-elitist black-box algorithm, we  exhibit a simple unary unbiased black-box algorithm that solves the $\DLB$ problem in expected time $\Theta(n^2)$. This inspires our investigation on the expected runtime of the Metropolis algorithm in Section \ref{sec3}. 
\begin{lemma}\label{Una_Unbia_upper_bound}
The unary unbiased black-box complexity of the $\DLB$ problem is $O(n^2)$.
\end{lemma}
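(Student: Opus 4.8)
The plan is to prove the bound constructively: I will exhibit one concrete unary unbiased black-box algorithm and show that its expected runtime on $\DLB$ is $O(n^2)$, which by the definition of the unary unbiased black-box complexity as an infimum immediately gives the claim. The algorithm is the single-bit-flip hillclimber for $\lo$ recalled just before the statement, modified only in its acceptance rule so that it can cross the fitness valleys of $\DLB$. This is exactly the ``small adjustment'' announced in the paragraph preceding the lemma, and it is what will later motivate the Metropolis algorithm.

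Concretely, the algorithm keeps a single current individual $x$, initialised uniformly at random, and in each iteration produces an offspring $y$ by flipping one uniformly random bit of $x$; by Lemma~\ref{charc_uu} this is the unary unbiased operator $V_D$ with $D$ the point mass on $1$. It sets $x \assign y$ if $\DLB(y)\ge \DLB(x)$, \emph{and additionally} if $\DLB(y)=\DLB(x)-1$, and keeps $x$ otherwise. Since every decision depends only on the two fitness values $\DLB(x)$ and $\DLB(y)$ and the only variation operator used is the single-bit flip, the algorithm is a unary unbiased black-box algorithm in the sense of Algorithm~\ref{alg:unbiased}. The sole new ingredient compared to the elitist hillclimber is that a fitness loss of exactly one is now tolerated, and one checks that from a state with $\DLB(x)$ odd such a loss occurs \emph{only} for the flip that turns the $00$ critical block into $01$ or $10$, i.e.\ precisely the move needed to leave a local optimum.

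For the runtime I would invoke the additive drift theorem (Theorem~\ref{adddrift}) with the potential $h_t=\HLB_\delta(x^{(t)})$ for a fixed $\delta$ to be chosen. The relevant transitions are local to the critical block, and their probabilities do not depend on the free-rider bits. From a state with $\DLB(x)=2m+1$ (critical block $00$, so $\HLB_\delta=2m$), each of the two in-block flips is accepted and raises $\HLB_\delta$ to $2m+2-\delta$, giving one-step drift $\tfrac2n(2-\delta)$, while prefix flips are rejected and free riders leave $\HLB_\delta$ unchanged. From a state with $\DLB(x)=2m$ (critical block $01$ or $10$, so $\HLB_\delta=2m+2-\delta$), flipping the single wrong in-block bit (probability $\tfrac1n$) completes the block and, as a short case check shows, raises $\HLB_\delta$ by at least $\delta$ (the gain is smallest, namely exactly $\delta$, when the completed block is immediately followed by a $00$ block), whereas the regression flip (probability $\tfrac1n$) turns the block into $00$ and lowers $\HLB_\delta$ by $2-\delta$; all other flips are rejected or are free riders. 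Hence the drift is at least $\tfrac1n\delta-\tfrac1n(2-\delta)=\tfrac1n(2\delta-2)$ at even states and $\tfrac2n(2-\delta)$ at odd states, for every admissible $m$ with no restriction of the form $m\ge n/4$ needed here.

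The crux is to pick $\delta$ so that both lower bounds are simultaneously positive and of order $\tfrac1n$: balancing $2\delta-2$ against $2(2-\delta)$ yields $\delta=\tfrac32$, for which both drifts are at least $\tfrac1n$ at every non-optimal state. Since $\HLB_\delta$ takes values in a finite subset of $[0,n]$ containing $n$ and $E[n-h_0]\le n$, Theorem~\ref{adddrift} gives $E[T]\le n/(\tfrac1n)=n^2$, and accounting for the initial evaluation only changes lower-order terms. I expect the main obstacle to be exactly this balancing argument: the naive elitist rule never accepts the valley-crossing move, while once the unit loss is tolerated one must verify that the \emph{honest} potential $\HLB_\delta$ still has uniformly positive drift despite the regression move $01/10\to 00$, which increases $\DLB$ but decreases $\HLB_\delta$. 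Confirming that the worst-case block-completion gain is at least $\delta$ and that the drift bound holds for all $m$ is the part that requires the most care.
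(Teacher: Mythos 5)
Your proposal is correct and follows essentially the same route as the paper's proof: the same one-bit-flip algorithm with an acceptance rule tolerating a unit fitness loss (your rule differs from the paper's only on transitions that either cannot occur under single-bit flips or leave the potential unchanged), and the same additive-drift argument with potential $\HLB_{\frac{3}{2}}$ yielding drift at least $\frac{1}{n}$ and hence the $O(n^2)$ bound. The only cosmetic difference is that you keep $\delta$ general and balance the two drift bounds at the end to arrive at $\delta=\tfrac{3}{2}$, whereas the paper fixes this value from the start.
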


\begin{proof}
We present a simple algorithm and then show that its expected runtime is indeed $O(n^2)$.
Throughout the algorithm we only use  the  unary unbiased operator $V$ that flips exactly one randomly chosen bit. 

The algorithm is initialized by generating a search point $x^{(0)}$ at random in $\{0,1\}^n$.
In generation $t$, we generate a bit string $y=V(x^{(t)})$.
If $\DLB(x^{(t)})$ is even and $\DLB(x^{(t)})<\DLB(y)$ we accept $y$ as new search point, i.e., $x^{(t+1)}\vcentcolon=y$.
If  $\DLB(x^{(t)})$ is odd and $\DLB(x^{(t)})<2+\DLB(y)$,
we also accept $y$ as new search point, i.e., $x^{(t+1)}\vcentcolon=y$.
In all other cases, we reject $y$, i.e., $x^{(t+1)}\vcentcolon=x^{(t)}$.

Now we show that this algorithm finds the optimum of the $\DLB$ problem in time $O(n^2)$.
We define the potential at time $t$ as  $k_t=\HLB_{\frac{3}{2}}(x^{(t)})$.
In the case where $k_t=2m$, that is, $\DLB(x^{(t)})$ is odd, we have a $00$ critical block and the probability that we flip one of its two zeros is $\frac{2}{n}$,
yielding an expected gain of at least $\frac{2}{n}\cdot\frac{1}{2}=\frac{1}{n}$ in the potential.
In the case where $k_t=2m+\frac{1}{2}$, that is, $\DLB(x^{(t)})$ is even, the critical block contains exactly one zero and one one.
%we have either a  01 critical block or a 10 critical block.
If the bit with value $0$ is flipped, we increase the potential from $2m+\frac 12$ to at least $2(m+1)$. If the bit with value $1$ is flipped, we reduce the potential from $2m+\frac 12$ to $2m$. All other bit-flips are not accepted or do not change the potential. Consequently, the expected gain in potential is at least
$\frac 1n (2(m+1)-(2m+\frac12)) + \frac 1n (2m-(2m+\frac12)) = \frac 1n$.
%The probability of flipping its  bit position of value 0 is $\frac{1}{n}$, 
%leading to an expected gain of at least 
%$\frac{1}{n}\frac{3}{2}=\frac{3}{2n}>\frac{1}{n}$.
Since the potential needs to be increased by at most $n$, 
the additive drift theorem (Theorem \ref{adddrift}) establishes  $\frac{n}{1/n}=n^2$
as an upper bound for the expected  runtime of this algorithm on the $\DLB$ function.
\end{proof}

We remark that   the artificial algorithm defined in the proof is a $(1+1)$-type unbiased  algorithm, therefore we have the following stronger result.
 \begin{theorem}\label{thm:uubbcUB}
The $(1+1)$-type unbiased black-box complexity of the $\DLB$ problem is $O(n^2)$.
\end{theorem}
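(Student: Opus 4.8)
The plan is to observe that the statement is an immediate strengthening of Lemma~\ref{Una_Unbia_upper_bound}: the very algorithm constructed in its proof already lies in the $(1+1)$-type unbiased class, so the desired bound is witnessed by that single algorithm. Thus the whole argument amounts to checking that the witnessing algorithm conforms to the $(1+1)$-type unbiased template and then quoting the runtime bound already established.

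First I would recall the structure of the algorithm from the proof of Lemma~\ref{Una_Unbia_upper_bound} and match it against the defining features of a $(1+1)$-type unbiased algorithm. That algorithm maintains a single current individual $x^{(t)}$; in each iteration it generates a single offspring $y = V(x^{(t)})$, where $V$ is the unary operator that flips exactly one uniformly chosen bit; and it then sets $x^{(t+1)}$ to be either $y$ or $x^{(t)}$. This is exactly the one-parent, one-offspring population structure that characterizes $(1+1)$-type algorithms, and in particular it is more restrictive than the general unary unbiased model of Algorithm~\ref{alg:unbiased}, which may store and reuse all previously generated search points.

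Next I would verify unbiasedness. The sole variation operator employed is $V$, which by Lemma~\ref{charc_uu} is unary unbiased since it flips one random bit. The acceptance rule depends only on the parity of $\DLB(x^{(t)})$ and on the comparison between $\DLB(x^{(t)})$ and $\DLB(y)$; hence every decision of the algorithm is a function of the fitness values alone and never of the underlying bit-string representation. Consequently the algorithm respects the symmetries of the hypercube in the sense of Definition~\ref{ununopt}, so it is a genuine $(1+1)$-type unbiased black-box algorithm.

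Finally, since the $(1+1)$-type unbiased black-box complexity is the infimum of $E[T_A]$ taken over all such algorithms $A$, and since the particular algorithm above already achieves $E[T_A] = O(n^2)$ on $\DLB$ by Lemma~\ref{Una_Unbia_upper_bound}, the claimed upper bound follows at once. I do not expect any real obstacle here; the only point requiring genuine (if routine) attention is the verification that the witnessing algorithm inspects nothing but fitness values, so that it truly inhabits the more restrictive $(1+1)$-type unbiased model rather than merely the general unary unbiased model.
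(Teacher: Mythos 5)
Your proposal is correct and is essentially the paper's own argument: the paper derives Theorem~\ref{thm:uubbcUB} with exactly the remark that the artificial algorithm constructed in the proof of Lemma~\ref{Una_Unbia_upper_bound} is itself a $(1+1)$-type unbiased algorithm, so its $O(n^2)$ expected runtime immediately bounds the $(1+1)$-type unbiased black-box complexity. Your explicit verification that the algorithm uses only the one-bit-flip unary unbiased operator and bases acceptance solely on fitness values is the same (routine but necessary) check, just spelled out in more detail than the paper's one-line remark.
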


\subsection{The Metropolis Algorithm Performs Well on the DLB Problem} \label{sec3}

Inspired by the analysis in the preceding sections, 
we expect that certain non-elitist $(1+1)$-type unbiased search heuristics outperform elitist EAs on the $\DLB$ problem. 
In fact, we will prove now that the Metropolis algorithm 
(simulated annealing with a fixed temperature) can solve the DLB problem within $\Theta(n^2)$ fitness evaluations in expectation.
This performance coincides with the unary unbiased black-box complexity of the DLB function (Theorem \ref{Una_Unbia}).

The Metropolis algorithm is a simple single-trajectory optimization heuristic. In contrast to elitist algorithms like randomized local search or the \oea, it can accept inferior solutions, however only with a small probability that depends on the degree of inferiority and an algorithm parameter $\alpha \in (1,\infty)$. 

More precisely, the maximization version of the Metropolis algorithm works as follows. It starts with a random initial solution $x^{(0)}$. In each iteration $t = 1, 2, \dots$, it generates a random neighbor $y$  of the current search point~$x^{(t-1)}$. When working with a bit string representation (as in this work), such a neighbor is obtained from flipping in $x^{(t-1)}$ a single bit chosen uniformly at random. If $f(y) \ge f(x^{(t-1)})$,
then the algorithm surely accepts $y$ as new search point $x^{(t)}\vcentcolon=y$.
If $y$ is inferior to $x^{(t-1)}$, it accepts $y$ ($x^{(t)}\vcentcolon=y$) only with probability $\alpha^{f(y)-f(x^{(t-1)})}$ and otherwise rejects it ($x^{(t)}\vcentcolon=x^{(t-1)}$).
We note that the probability $\alpha^{f(y)-f(x^{(t-1)})}$ for accepting an inferior solution is often written as $\exp((f(y)-f(x^{(t-1)})) / kT)$ for a ``temperature parameter'' $kT$, but clearly the two formulations are equivalent.  
The pseudocode for this algorithm is given in  Algorithm \ref{metrop}.

\begin{algorithm2e}
Generate a search point $x^{(0)}$  uniformly in $\{0,1\}^n $;

\For{$t=1,2,3,\dots$}{
   Choose  $i\in[1.. n]$ uniformly at random and obtain $y$ from  flipping the $i$-th bit in  $x^{(t-1)}$;
   
   \eIf{$f(y) \ge f(x^{(t-1)})$}{
  $x^{(t)}\leftarrow y$\;
   }{
   Choose $b \in \{0,1\}$ randomly with $\Pr[b=1]=\alpha^{f(y)-f(x^{(t-1)})}$\;
   \eIf{$b=1$}{
    $x^{(t)}\leftarrow y$\;
   }{
   $x^{(t)}\leftarrow x^{(t-1)}$\;
   }
  }
}
\caption{Metropolis algorithm for maximizing a function $f: \{0,1\}^n \rightarrow \mathbb{R} $}
\label{metrop}
\end{algorithm2e}

Now we show that the Metropolis algorithm with sufficiently large parameter $\alpha$ can solve the $\DLB$ problem in  time quadratic in $n$.
To this end, we show the following elementary lemma.
\begin{lemma}\label{const}
For all $\alpha>\sqrt{2}+1$, $$C(\alpha)\vcentcolon=\frac{2}{\alpha}\left(\frac{1}{2}-2\sum_{k=1}^\infty k \alpha^{-2k}\right)$$
is a strictly positive constant.
\end{lemma}

\begin{proof}

We observe that
$$\alpha^{-2}\sum_{k=1}^\infty k \alpha^{-2k}=\sum_{k=1}^\infty k \alpha^{-2(k+1)}=\sum_{k=2}^\infty (k-1) \alpha^{-2k}.$$
Subtracting $\sum_{k=1}^\infty k \alpha^{-2k}$ from both sides of the equation yields
$$
(\alpha^{-2}-1)\sum_{k=1}^\infty k \alpha^{-2k}=\sum_{k=2}^\infty -\alpha^{-2k}-\alpha^{-2}=-\sum_{k=1}^{\infty} \alpha^{-2k}=-\frac{\alpha^{-2}}{1-\alpha^{-2}},
$$
which implies 
$$
\sum_{k=1}^\infty k \alpha^{-2k}=\frac{\alpha^{-2}}{(1-\alpha^{-2})^2}=\frac{\alpha^{2}}{(\alpha^{2}-1)^2}.
$$
Hence we have  
$$C(\alpha)=\frac{2}{\alpha}\left(\frac{1}{2}-2\frac{\alpha^{2}}{(\alpha^{2}-1)^2} \right)=\frac{\alpha^4-6\alpha^2+1}{\alpha(\alpha^2-1)^2}.$$

For all $\alpha>\sqrt{2}+1$, we have $\alpha^4-6\alpha^2+1>0$, which    implies $C(\alpha)>0$.
\end{proof}

We now show the main result of this section that the Metropolis algorithm can optimize the DLB function in quadratic time if the selection pressure is sufficiently high, that is, $\alpha$ is a large enough constant.

\begin{theorem}\label{MA_runtime}
The expected runtime of the Metropolis algorithm on the DLB problem is at most $\frac{n^2}{C(\alpha)}$, provided that the parameter $\alpha$ satisfies $\alpha > \sqrt{2}+1$.
\end{theorem}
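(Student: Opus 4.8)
The plan is to invoke the additive drift theorem (Theorem~\ref{adddrift}) with the potential $h_t = \HLB_{3/2}(x^{(t)})$, the very potential that governed the artificial algorithm of Lemma~\ref{Una_Unbia_upper_bound}, and to show that from every non-optimal search point the expected one-step increase of $h_t$ is at least $\tfrac{C(\alpha)}{n}$. Since $h_t$ takes finitely many values in $[0,n]$ and $n-h_0\le n$, the drift theorem then yields $E[T]\le n\cdot\tfrac{n}{C(\alpha)}=\tfrac{n^2}{C(\alpha)}$, which is the assertion.

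To estimate the drift I would fix the current search point $x$, let $m$ be the index of its critical block, and distinguish whether $\DLB(x)$ is odd (a $00$ critical block, potential $2m$) or even (a $01$/$10$ critical block, potential $2m+\tfrac12$). In each case I classify the $n$ single-bit flips, recording the Metropolis acceptance probability — an improving flip is always accepted, whereas a flip lowering $\DLB$ by $d$ is accepted with probability $\alpha^{-d}$ — together with the induced change of $\HLB_{3/2}$. Only three kinds of flips matter: flipping a $0$ inside the critical block makes progress (turning $00$ into $01$/$10$, or $01$/$10$ into $11$ and advancing the critical block); in the even case, flipping the lone $1$ of the critical block turns it into $00$, a $\DLB$-improvement that is always accepted but lowers the honest potential; and flipping a bit inside one of the $m$ leading $11$ blocks moves the critical block back to some position $i<m$. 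The crucial simplification is that flips of bits lying after the critical block leave $\DLB$, and hence $\HLB_{3/2}$, unchanged, so — unlike in the elitist lower-bound analysis — I need no control over the distribution of these free-rider bits; in particular, when the critical block is advanced I may lower-bound the new potential by its worst case $2m+2$ (attained when the following block is $00$), a gain of at least $\delta=\tfrac32$.

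Carrying out the bookkeeping, the odd case has a single positive term, the advancing flip of one of the two zeros, contributing $\tfrac2n\alpha^{-1}\cdot\tfrac12=\tfrac1{n\alpha}$, while breaking a leading block at position $i<m$ costs $2(m-i)-\tfrac12$ in potential and is accepted with probability $\alpha^{-(2(m-i)+1)}$; the even case gains at least $\tfrac32$ on the advancing flip, loses $\tfrac12$ on the deceptive flip into $00$, and pays $2(m-i)$ for breaking a leading block at position $i<m$, accepted with probability $\alpha^{-2(m-i)}$. Summing the backward contributions over $i$ and extending the resulting sums to infinity reproduces precisely the series $\sum_{k\ge1}k\alpha^{-2k}$ inside $C(\alpha)$: the odd case gives a drift of at least $\tfrac1{n\alpha}\bigl(1-4\sum_{k\ge1}k\alpha^{-2k}\bigr)=\tfrac1n C(\alpha)$ plus a nonnegative remainder, and the even case gives $\tfrac1n\bigl(1-4\sum_{k\ge1}k\alpha^{-2k}\bigr)=\tfrac{\alpha}{n}C(\alpha)\ge\tfrac1n C(\alpha)$. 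Both bounds are at least $\tfrac{C(\alpha)}{n}$, so the drift theorem completes the proof. The main obstacle is precisely this negative drift from the backward flips: their contribution is what forces the series $\sum_k k\alpha^{-2k}$ and hence the threshold $\alpha>\sqrt2+1$ of Lemma~\ref{const}, and the choice $\delta=\tfrac32$ of the honest-blocks parameter is exactly what makes the progress and deceptive terms combine so that the remaining negative part lines up with $C(\alpha)$.
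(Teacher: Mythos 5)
Your proposal is correct and follows essentially the same route as the paper's own proof: the potential $\HLB_{3/2}$, the additive drift theorem, the same two-case bookkeeping of single-bit flips with their Metropolis acceptance probabilities (progress terms $\tfrac{3}{2}$ and $\tfrac{1}{2}\alpha^{-1}$, the deceptive $-\tfrac12$ flip, and the geometric-type backward sums), and the same extension of $\sum_{k=1}^m k\alpha^{-2k}$ to an infinite series yielding the drift bound $C(\alpha)/n$ in both cases. Your grouping of terms (keeping the nonnegative remainder $\tfrac{1}{n\alpha}\sum_k \alpha^{-2k}$ explicit in the odd case) differs only cosmetically from the paper's inequalities.
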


To prove this result, we need to argue that the negative effect of accepting solutions further away from the optimum is outweighed by the positive effect that a critical $00$-block can be changed into a critical block $01$ or $10$ despite the fact that this decreases the DLB value. To achieve this, we design a suitable potential function, namely the HLB function introduced in Section~\ref{sec:dlb} with parameter $\delta = \tfrac 32$ and show that each iteration (starting with a non-optimal search point) in expectation increases this potential by $\Omega(\frac 1n)$. With this insight, the additive drift theorem immediately gives the claim. 

\begin{proof}[Proof of Theorem~\ref{MA_runtime}]
We denote by $x^{(t)}$ the search point obtained at the end of iteration $t$.
To apply drift analysis we take $\HLB_{\frac{3}{2}} $  as potential and abbreviate $\HLB_{\frac{3}{2}}(x^{(t)})$  by $h_t$. 
Recalling that the Metropolis algorithm generates each Hamming neighbor $y$ of $x^{(t-1)}$ with equal probability $\frac 1n$, but accepts this only with probability $\alpha^{\min\{0,f(y)-f(x^{(t-1)})\}}$, we compute for each $m\in [0.. \frac{n}{2}-1]$ that
\begin{align*}
 E\Big[h_{t}&-h_{t-1} \,\Big|\, h_{t-1}=2m+\frac{1}{2}\Big] \\
        \geq{} & \frac{1}{n}\left(2m+2-\left(2m+\frac{1}{2}\right)\right)
        + \frac{1}{n} \left(2m-\left(2m+\frac{1}{2}\right)\right)  \\
        &+\sum_{k=0}^{m-1}\frac{2}{n} \alpha^{2k-2m}
        \left(2k+\frac{1}{2}-\left(2m+\frac{1}{2}\right)\right)\\
        ={}&\frac{1}{n}  +\frac{2}{n}\sum_{k=1}^{m} \alpha^{-2k}
        \left(-2k\right) =\frac{2}{n}\left(\frac{1}{2}-2\sum_{k=1}^m k \alpha^{-2k}\right)
\end{align*}
and 
\begin{align*}
E[h_{t}&-h_{t-1}\mid h_{t-1}=2m] \\
\geq{} & \frac{2}{n}\frac{1}{\alpha}\left(2m+\frac{1}{2}-2m\right) +\sum_{k=0}^{m-1}\frac{2}{n} \alpha^{2k-2m-1} \left(2k+\frac{1}{2}-2m\right) \\
={} &\frac{2}{n\alpha}\frac{1}{2}+\sum_{k=1}^{m}\frac{2}{n\alpha} \alpha^{-2k} \left(\frac{1}{2}-2k\right)
> \frac{2}{n\alpha}\left(\frac{1}{2}-2\sum_{k=1}^m k \alpha^{-2k}\right).
\end{align*}
We have shown in Lemma \ref{const} that $\frac{1}{2}-2\sum_{k=1}^\infty k \alpha^{-2k}>0$ for all $\alpha>\sqrt{2}+1$.
Thus for any $s\neq n$,
\begin{align*}
E[h_{t}&-h_{t-1}\mid h_{t-1}=s] \\
\geq{} &\frac{2}{n\alpha}\left(\frac{1}{2}-2\sum_{k=1}^m k \alpha^{-2k}\right) 
\geq \frac{2}{n\alpha}\left(\frac{1}{2}-2\sum_{k=1}^\infty k \alpha^{-2k}\right) = \frac{C(\alpha)}{n}.
\end{align*}
The additive drift theorem (Theorem \ref{adddrift}) now implies that the expected runtime of Metropolis 
algorithm on the $\DLB$ problem is bounded by 
$$  \frac{n}{C(\alpha)/n}=\frac{n^2}{C(\alpha)},$$
which concludes the proof.
\end{proof}

\subsection{Literature Review on the Metropolis Algorithm and Non-Elitist Evolutionary Algorithms}

To put our results on the Metropolis algorithm into context, we now brief{}ly survey the known runtime results on this algorithm and non-elitist evolutionary algorithms, which are surprisingly few compared to the state of the art for elitist search heuristics. 

\subsubsection{Metropolis Algorithm}\label{sssec:metro} 
We first note that the Metropolis algorithm is a special case of simulated annealing, which can be described as the Metropolis algorithm using a temperature that is decreasing over time, that is, the parameter $\alpha$ increases over time in the language of Algorithm~\ref{metrop}. We refer to~\cite{DowslandT12} for a survey of the main applied results and restrict ourselves to the known theory results. Here the survey of Jansen~\cite{Jansen11}, even though from 2011, still is a good source of information.  

Already in 1988, Sasaki and Hajek~\cite{SasakiH88} showed that the Metropolis algorithm (and also its generalization simulated annealing) has an at least exponential worst-case runtime on the maximum matching problem. On the positive side, the Metropolis algorithm with constant temperature (constant $\alpha$) can compute good approximate solutions. Analogous results, however, have also been shown for the \oea by Giel and Wegener~\cite{GielW03}. Jerrum and Sorkin~\cite{JerrumS98} showed that the Metropolis algorithm can solve the minimum bisection problem in quadratic time in random instances in the planted bisection model. Wegener~\cite{Wegener05} provided a simple instance of the minimum spanning tree (MST) problem, which can be solved very efficiently by simulated annealing with a natural cooling schedule, but for which the Metropolis algorithm with any temperature needs an exponential time to find the optimum. A similar result for the traveling salesman problem was given by Meer~\cite{Meer07}. In fact, Wegener~\cite{Wegener05} proved that simulated annealing in polynomial time finds the optimum of any MST instance with $\eps$-separated weights. His conjecture that simulated annealing in polynomial time computes $\eps$-approximations to any instance was recently proven in~\cite{DoerrRW22}.

Jansen and Wegener~\cite{JansenW07} proved that the Metropolis algorithm with $\alpha \ge \eps n$, that is, with a very small temperature in the classic language, optimizes the $\onemax$ benchmark in time $O(n \log n)$, a runtime also known for many simple evolutionary algorithms~\cite{Muhlenbein92,JansenJW05,Witt06,AntipovD21algo}. They further show that this runtime is polynomial if and only if $\alpha = \Omega(n / \log n)$. Consequently, in the classic language, only for very small temperatures (that is, with very low probabilities of accepting an inferior solution) the Metropolis algorithm is efficient on the \onemax benchmark. An extension of this characterization to arbitrary symmetric functions (that is, functions $f$ that can be written as $f(x) = g(\onemax(x))$ for some $g : \R \to \R$) was given in~\cite{KadenWW09}. While it is interesting that such a characterization could be obtained, the characerization itself remains hard to interpret. Jansen and Wegener~\cite{JansenW07} further designed several example problems showing different or similar runtimes of the Metropolis algorithm and the \oea, among them the function~$f_2$, which can be solved by the Metropolis algorithm with $\alpha = n$, that is, again a very small temperature, in polynomial expected time, whereas the \oea needs time $n^{\Omega(\log\log n)}$, and another function for which the Metropolis algorithm with any parameter setting has an expected runtime of $\Omega(2^{0.275n})$, whereas the \oea has an expected runtime of only $\Theta(n^2)$.

In their work on the Metropolis algorithm and the Strong Selection Weak Mutation (SSWM) algorithm, Oliveto, Paix\~ao, Heredia, Sudholt, and Trubenov\'a~\cite{OlivetoPHST18} proposed the \valley problem, which contains a fitness valley with descending slope of length $\ell_1$ and depth $d_1$ and ascending slope of length $\ell_2$ and height $d_2$. This valley is constructed onto a long path function, making this essentially a one-dimensional optimization problem (despite being defined on $\{0,1\}^n$). They proved rigorously that the Metropolis algorithm takes an expected number of $n \alpha^{\Theta(d_1)}+\Theta(n\ell_2)$ function evaluations to cross this valley of low fitness when $\alpha$ (in the notation of Algorithm~\ref{metrop}, note that the $\alpha$ used in~\cite{OlivetoPHST18} has a different meaning) is at least $\alpha \ge \exp(c \max\{\ell_1/d_1, \ell_2 / d_2\})$ for a sufficiently large constant $c$. A similar result holds for the SSWM algorithm. Since the \oea needs time $\Omega(n^{\ell_1})$ to cross the valley, here the Metropolis and SSWM algorithm are clearly superior.
%
%
%
 %times of when choosing the temperature $\alpha=\exp(c \ell_1/d_1)$ for sufficiently small constant $c>0$, which also holds for another non-elitist strong selection weak mutation (SSWM) algorithm with its proper parameter choice, but is asymptotically faster than $\Theta(n^{\ell_1})$ runtime of the \oea. They further prove an expected $O(mn(\exp(O(\ell_1))+\Theta(\ell_2)))$ function evaluation times for the Metropolis algorithm and the SSWM algorithm on the \valleypath consisting of $m$ concatenated valleys with the same size.

In their time complexity analysis of the move acceptance hyper-heuristic (MAHH), Lissovoi, Oliveto, and Warwicker~\cite{LissovoiOW23} also consider the Metropolis algorithm. For the multimodal \cliff benchmark with constant cliff length~$d$, they show that the Metropolis algorithm needs at least an expected number of $\Omega((\frac{n}{d})(\frac{n}{\log n})^{d-1.5})$ iterations to find the optimum, which is at most a little faster than the $\Theta(n^d)$ runtime of simple mutation-based algorithms. However, this is much worse than the $O(n \log n)$ performance of a self-adjusting $(1,\lambda)$~EA~\cite{FajardoS21gecco} (only proven for $d=n/3$), the $O(n^3)$ runtime of the MAHH~\cite{LissovoiOW23}, the $O((n/d)^2 n\log n)$ runtime of two self-adjusting fast artificial immune systems~\cite{CorusOY21foga}, and the $O(n^{3.9767})$ runtime of the $(1,\lambda)$~EA with the best static choice of $\lambda$~\cite{FajardoS21gecco} (improving the well-known $O(n^{25})$ runtime guarantee~\cite{JagerskupperS07}), again only for $d=n/3$. For the multimodal \jump benchmark, the Metropolis algorithm was shown~\cite{LissovoiOW23} to have a runtime exponential in $n$ with high probability regardless of the jump size $m$ when the temperature is sufficiently small that the local optimum of the jump function is reached efficiently. This compares unfavorably with the known runtimes of many algorithms, which are all $\Theta(n^m)$ or better, see, e.g.,~\cite{DrosteJW02,DangFKKLOSS16,DangFKKLOSS18,DoerrLMN17,HasenohrlS18,AntipovBD21gecco,Doerr21cgajump,Doerr22,RajabiW22,RajabiW23,DoerrR23,DoerrEJK23arxiv}.

In summary, we would not say that these results give a strong recommendation for using the Metropolis algorithm to optimize pseudo-Boolean functions. It excels on the $f_2$ function proposed in~\cite{JansenW07} and the \valley problem proposed in~\cite{OlivetoPHST18}, however, problems essentially are one-dimensional problems and thus of a very different structure than most pseudo-Boolean problems. The possible $\tilde O(n^{0.5})$ runtime advantage over simple EAs on the \cliff problem is interesting (if it exists, only a lower bound was shown in~~\cite{LissovoiOW23}), but appears still small when recalling that the runtime of the simple EAs is $\Theta(n^d)$. Hence maybe most convincing result favoring the Metropolis algorithm is the quadratic runtime shown~\cite{JerrumS98} for the random bipartition instances. Here, however, one also has to recall that random instances of $NP$-complete problems can by surprisingly easy. For example, the \oea can solve random 3-SAT instances in the planted solution model in time $O(n \log n)$ when the clause density is at least logarithmic~\cite{DoerrNS17}. These mildly positive results on the Metropolis algorithm have to be contrasted with several negative results, e.g., the exponential runtimes shown in~\cite{SasakiH88, Wegener05, Meer07}. Also, the parameter choice seems to be non-trivial. Whereas the best results in~\cite{OlivetoPHST18} are obtained by small values of $\alpha$, recall that the runtime shown there is at least $n \alpha^{\Omega(d_1)}$ and the simple \onemax benchmark can only be optimized in polynomial time when $\alpha$ is as large as $\Omega(n \log n)$. 

For reasons of completeness, we note that after the completion of this work a runtime analysis of the Metropolis algorithm on a generalized \cliff benchmark has appeared~\cite{DoerrERW23}. It showed that the Metropolis algorithm for most parameters of the benchmark has a performance inferior to the one of simple elitist algorithms.

\subsubsection{Non-Elitist Evolutionary Algorithms}\label{sssec:nonel}
The strategy to not stick with the best-so-far solution to enable the algorithm to leave a local optimum is also well-known in evolutionary computation. However, also there the rigorous support for such non-elitist algorithms is rather weak. 

There is a fairly elaborate methodology to  prove upper bounds on the runtime of non-elitist algorithms~\cite{Lehre11,DangL16algo,CorusDEL18,DoerrK21algo}, however, these tools so far could mostly be applied to settings where the evolutionary algorithm loses the best-so-far solution so rarely that it roughly imitates an elitist algorithm. A few analyses of particular algorithms point into a similar direction~\cite{JagerskupperS07,RoweS14}. The existing general lower bound methods for non-elitist algorithms~\cite{Lehre10,Doerr21ecjLB} in many cases allowed to prove that with just a little more non-elitism, the algorithm has no chance to optimize efficiently any pseudo-Boolean function with unique optimum. 

As a concrete example, the standard \mclea using mutation rate $\frac 1n$ cannot optimize any pseudo-Boolean function with unique optimum in sub-exponential time if $\lambda \le (1-\eps) e \mu$, $\eps > 0$ any constant~\cite{Lehre10,Doerr21ecjLB}. However, when $\lambda \ge (1+\eps) e \mu$ and $\lambda = \omega(\log n)$, then the \mclea optimizes \onemax and \jump functions with constant jump size in essentially the same runtime as the \mplea~\cite{Lehre11,DoerrK21algo}. This suggests that there is at most a very small regime in which non-elitism can be successfully exploited. Such a small middle regime was shown to exist when optimizing \cliff functions via the \oclea. More specifically, J\"agersk\"upper and Storch~\cite{JagerskupperS07} showed that the \oclea with $\lambda \ge 5 \ln n$ optimizes the \cliff function with cliff length $n/3$ in time $\exp(5 \lambda) \ge n^{25}$, whereas elitist mutation-based algorithms easily get stuck in the local optimum and then need at least $n^{n/3}$ expected time to optimize this \cliff function. This result was very recently improved and runtime of $O(n^{3.9767})$ was shown for the asymptotically best choice of $\lambda$~\cite{FajardoS21gecco}. For the \mclea optimizing jump functions, however, the existence of a profitable middle regime was disproven in~\cite{Doerr22}. Some more results exist that show situations where only an inefficient regime exists, e.g., when using (1+1)-type hillclimbers with fitness-proportionate selection to optimize linear pseudo-Boolean functions~\cite{HappJKN08} or when using a mutation-only variant of the simple genetic algorithm to optimize \onemax~\cite{NeumannOW09}. For the true simple genetic algorithm, such a result exists only when the population size is at most $\mu \le n^{1/4 - \eps}$, but there is no proof that the situation improves with larger population size~\cite{OlivetoW15}. 

The few results showing that non-elitism can help to leave local optima include, besides the examples for the Metropolis algorithm discussed in Section~\ref{sssec:metro} (see the last paragraph of that section for a summary) and the two ``small middle regime'' results discussed in the previous paragraph, the following. Dang, Eremeev, and Lehre~\cite{DangEL21aaai} showed that a non-elitist EA with 3-tournament selection, population size $\lambda \ge c \log n$ for $c$ a positive constant, and bitwise mutation with rate $\chi/n$,  $\chi=1.09812$, can reach the optimum of the multi-modal \funnel problem with parameter $\omega \le \tfrac34 n$ in expected time $O(n\lambda \log \lambda + n^2 \log n)$, whereas the \mplea and the \mclea cannot reach the optimum in time $2^{c'n}$, $c'>0$ a suitable constant, with overwhelming probability. In~\cite{DangEL21gecco}, the same authors defined a class of functions having an exponential elitist $(\mu+\lambda)$ black-box complexity which can be solved in polynomial time by several non-elitist algorithms. 
 %showed that the non-elitist algorithm with suitable bitwise mutation rate $\chi/n$ and selection mechanism corresponding to the $\alpha$-dense and $\eps$-sparse \slo and with population size $\lambda \in \Omega(\ln n)\cap \poly(n)$ has a polynomial expected runtime (for example, for $\alpha=\tfrac14,\eps=\tfrac{7}{10^5}$, a polynomial worst case expected runtime of the non-elitist algorithm with $\chi=1.09812$ and 3-tournament selection; for $\alpha=\tfrac49, \eps=\tfrac{1}{100}$, a polynomial worst case expected runtime of the non-elitist algorithm with $\chi=0.693146$ and linear rank selection with $\eta=2$), whereas the elitist $(\mu+\lambda)$ black-box complexity for any constants $\alpha,\eps \in(0,1)$ is $e^{\Omega(n)}$ (for example, an exponential runtime for any $(\mu+1)$GA with $\mu=\poly(n)$, with any $p_c$ and $\chi$, and with/without one of the five diversity mechanisms on one \slo sub-class \bbfunnel) (Weijie asking: for Theorem~21~\cite{DangEL21gecco}, I guess maybe when we take $\mu=\Theta(\ln n)$, their result should be $\Pr[T\le \poly(n)] \le 1/\poly(n)$, which I guess it might be not safe to say `` We have shown unconditionally that the non-elitist \mclea fails to optimise the \bbfunnel problems in expected polynomial time'' as they said in their Section~Conclusion, right?)}. 
Fajardo and Sudholt~\cite{FajardoS21gecco} showed that a self-adjusting variant of the \oclea can optimize \cliff functions in time $O(n \log n)$. Zheng, Zhang, Chen, and Yao~\cite{ZhengZCY21} proved that the \oclea with offspring population size $\lambda=c\log_{\frac{e}{e-1}} n$ for the constant $c\ge 1$ can reach the global optimum of the time-linkage \OM function in expected time $O(n^{3+c}\log n)$, while the \oplea with $\lambda \in [e^e, e^{n^{1/3}}]$ (as well as the \oea analyzed in~\cite{ZhengCY21}) with $1-o(1)$ probability cannot reach the global optimum in arbitrary long time. Very recently, Jorritsma,  Lengler, and Sudholt~\cite{JorritsmaLS23} showed that comma selection offers moderate advantages on randomly perturbed \onemax functions.

More results exist for non-classical algorithms. In~\cite{PaixaoHST17}, the strong-selection weak-mutation process from biology is regarded under an algorithmic perspective. It is shown that when setting the two parameters of this algorithm right, then it optimizes the $\cliff_d$ function, $d = \omega(\log n)$, in time $(n/\Omega(1))^d$, which is faster than the \oea by a factor of $\exp(\Omega(d))$. It also optimizes the slightly artificial \balance function with high probability in time $O(n^{2.5})$, whereas the \oea takes weakly exponential time. In~\cite{LissovoiOW23}, a runtime analysis of the move-acceptance hyper-heuristic (MAHH) is performed. This can be seen as a variant of the Metropolis algorithm, where the probability of accepting an inferior solution is an algorithm parameter independent of the degree of inferiority. If this probability is chosen sufficiently small, then the MAHH optimizes all \cliff functions in time $O(n^3)$, significantly beating all other results on this function class. For the $\jump_k$ function, however, the MAHH is not very effective. For all $k = o(\sqrt n)$ and all values of the acceptance parameter~$p$, a lower bound of $\Omega(\frac{n^{2k-1}}{(2k-1)!})$ was proven in~\cite{DoerrDLS23}.  We note that aging as used in artificial immune systems can also lead to non-elitism and has been shown to speed-up leaving local optima, see, e.g.,~\cite{OlivetoS14,CorusOY20,CorusOY21foga,CorusOY21tec}. Since the concept of aging has not found many applications in heuristic search outside the small area of artificial immune systems used as optimizers, we omit a detailed discussion. Finally, we note that restarts obviously lead to algorithms that can be called non-elitist. We also discuss these not further since we view them, similar as parallel run or racing techniques, rather as mechanisms to be added to basic optimization heuristics.
%
 %and the automatic restart mechanism~\cite{CorusOY19}, but we ignore the details as these are separable techniques and regarded as non-elitism only in a much broad view. } 
Overall, these examples prove that non-elitism can be helpful, but from their sparsity in the large amount of research on randomized search heuristics, their specificity, and their sometimes still high runtimes, we would not interpret them as strong indication for trying non-elitist approaches early.

\subsection{A Lower Bound for the Unary Unbiased Black-Box Complexity} \label{sec2}

In this section we will  show that the expected runtime of any unary unbiased black-box algorithm  on the $\DLB$ problem  is $\Omega(n^2)$, which together with the upper bound of Section 
%\ref{sec4.2} 
\ref{sec:nonel} proves that the unary unbiased black-box complexity of the $\DLB$ problem is $\Theta(n^2)$. This result is very natural given that Lehre and Witt~\cite{LehreW12} have proven the same lower bound for the unary unbiased black-box complexity of the $\lo$ problem, which appears rather easier than the $\DLB$ problem.

To this end, we first prove the following lemma, which affirms that for a search point in a certain range of $\lo$ values, the probability of obtaining a search point with better fitness after one step is $O(\frac{1}{n})$.

\begin{lemma}\label{calcul}
Suppose that   $n $ is an even integer and that $m$ is an integer such that $\frac{n}{4}< m<\frac{n}{3}$.
Let $x$ be a bit string
and let $\ell\vcentcolon=|\{j\in[1..2m+2]\mid x_j=1\}|$ designate  the number of ones in the  leading $2m+2$ bit positions of $x$.  
We suppose that $1\leq \ell\leq 2m+1$.
Then for any   unary unbiased operator~$V$, we have
 $\Pr[\lo(V(x))\ge 2m+2] \le \frac{2}{n}$.

\end{lemma}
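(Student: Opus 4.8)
The plan is to reduce the statement to bounding, for a fixed number $r$ of flipped bits, the probability of reaching $\lo$-value at least $2m+2$, and then to bound this quantity uniformly over $r$. First I would invoke Lemma~\ref{charc_uu} to write $V$ as the operator that samples $r_V$ from some distribution on $[0..n]$ and then flips $r_V$ uniformly random bits, so that
$$\Pr[\lo(V(x)) \ge 2m+2] = \sum_{r=0}^n \Pr[r_V = r]\,\Pr[\lo(V(x)) \ge 2m+2 \mid r_V = r].$$
Since the weights $\Pr[r_V=r]$ sum to $1$, it suffices to bound each conditional probability $G_r := \Pr[\lo(V(x)) \ge 2m+2 \mid r_V = r]$ by $\tfrac2n$.

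Next I would compute $G_r$ exactly. Write $z := 2m+2-\ell$ for the number of zeros among the first $2m+2$ positions of $x$; note that $1 \le z \le 2m+1$ by the hypothesis on $\ell$. The event $\lo(V(x)) \ge 2m+2$ is equivalent to the first $2m+2$ bits of $V(x)$ all being $1$, which means the random flip-set must contain all $z$ zero-positions, must avoid all $\ell$ one-positions, and may hit the remaining $n-2m-2$ positions arbitrarily. Counting size-$r$ flip-sets then gives $G_r = \binom{n-2m-2}{r-z}\binom{n}{r}^{-1}$.

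The crucial step is bounding $G_r$ uniformly in $r$. Rather than analyzing the ratio $G_{r+1}/G_r$ — which, in contrast to the earlier Lemmas~\ref{case2l} and~\ref{case2l+1}, is not monotone here, so the maximum sits in the interior and is awkward to locate — I would use Vandermonde's identity $\binom{n}{r} = \sum_{j}\binom{2m+2}{j}\binom{n-2m-2}{r-j}$, obtained by splitting $[1..n]$ into the first $2m+2$ positions and the rest. Discarding all but the $j=z$ term, which is legitimate since every term is non-negative, yields $\binom{n}{r} \ge \binom{2m+2}{z}\binom{n-2m-2}{r-z}$, and hence
$$G_r \le \frac{1}{\binom{2m+2}{z}} = \frac{1}{\binom{2m+2}{\ell}}.$$
Because $1 \le \ell \le 2m+1$, the coefficient $\binom{2m+2}{\ell}$ is at least its boundary value $\binom{2m+2}{1} = 2m+2$, and the hypothesis $m > n/4$ gives $2m+2 > n/2$. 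Therefore $G_r \le \tfrac{1}{2m+2} < \tfrac2n$ for every $r$, which proves the claim.

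I expect the only genuine subtlety to be recognizing that the naive unimodality analysis of $G_r$ is unpleasant and that the single-term Vandermonde estimate settles all values of $\ell$ at once; everything else is bookkeeping. I would also remark that this argument uses only the lower bound $m > n/4$ together with the two-sided constraint on $\ell$ — which guarantees both $z \ge 1$ and $\ell \ge 1$, and hence the boundary value $\binom{2m+2}{\ell}\ge 2m+2$ — whereas the upper bound $m < n/3$ is not needed for this lemma itself.
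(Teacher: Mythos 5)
Your proposal is correct, and its decisive step differs genuinely from the paper's. Both proofs begin identically: reduce to a fixed flip-size $r$ via Lemma~\ref{charc_uu} and write the conditional probability exactly as $\binom{n-2m-2}{r-z}\binom{n}{r}^{-1}$ with $z=2m+2-\ell$ (the paper's $D_\ell(r)$). From there the paper runs a two-stage monotonicity analysis: for fixed $r$ it studies the ratio $D_{\ell+1}(r)/D_\ell(r)$ and splits into three cases according to whether $r$ lies below, inside, or above the window $[\frac n2-m..\frac n2+m]$, reducing to the extremal quantities $D_{2m+1}(r)$, $D_{\frac n2+m-r+1}(r)$, $D_1(r)$; it then bounds each of these in $r$, which for $D_1$ requires locating an interior maximizer $r^*=\frac{2mn+n-1}{2m+2}$, and it recycles the estimate $D_r\le\frac1n$ from Lemma~\ref{case2l}. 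Your single-term Vandermonde bound
\[
\binom{n}{r}\;=\;\sum_{j}\binom{2m+2}{j}\binom{n-2m-2}{r-j}\;\ge\;\binom{2m+2}{z}\binom{n-2m-2}{r-z}
\]
collapses all of this into one inequality, giving the uniform bound $\frac{1}{\binom{2m+2}{z}}\le\frac{1}{2m+2}<\frac2n$ for every $r$ and every admissible $\ell$ at once (the degenerate cases $z>r$ or $r-z>n-2m-2$ are harmless since the probability is then $0$). What your route buys is brevity, a slightly stronger and $r$-independent bound, and the observation that the hypothesis $m<\frac n3$ is not needed beyond guaranteeing $n-2m-2\ge 0$ so that the block structure exists; what the paper's route buys is consistency of technique with Lemmas~\ref{case2l} and~\ref{case2l+1} and reuse of the already-established estimate (\ref{par1}), at the cost of a considerably longer case analysis.
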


 \begin{proof}
We   assume  that for some $r \in[0..n]$, $V $  is the unary unbiased operator that  flips $r$ bit positions chosen uniformly at random from the bit string. 
The conclusion for the  general case then follows by applying the law of total probability and Lemma \ref{charc_uu}.
 
We furthermore assume that  $2+2m-\ell\leq r\leq n-\ell$ since otherwise the claim is trivial.
Indeed, $r<2+2m-\ell$ implies  $r+\ell<2m+2$. Since $r+\ell$ is an upper bound on the number of ones $V(x)$ has in the first $2m+2$ positions, we obtain $\lo(V(x))<2m+2$ with probability $1$.
If $r>n-\ell$, then at least one of the bits
$x_j=1$, $j\in[1..2m+2]$, is flipped,
also resulting in $\lo(V(x))<2m+2$.

 The event $\{\lo(V(x))\ge 2m+2\}$ happens if and only if  all bits $x_j$ such that   $x_j=1$ and $j\in[1..2m+2]$, are not flipped,
  and  all bits $x_j=0$, $j\in[1..2m+2]$, are  flipped.
 Among all ${n \choose r}$ ways of flipping $r$ bit positions in $x$, 
 exactly ${n-2m-2 \choose r-(2m+2-\ell)}$ of them flip all the $0$s in the leading $2m+2$ bit positions and leave all the $1$s in the leading $2m+2$ bit positions unchanged.
Hence the probability $D_\ell(r)$ of the event $\{\lo(V(x))\ge 2m+2\}$  is given  by
 $$D_\ell(r)=\frac{{n-2m-2 \choose r-(2m+2-\ell)}}{{n \choose r}}=\frac{(n-r)  \cdots(n-r-\ell+1)\cdot r \cdots(r-2m+\ell-1)}{n \cdots   (n-2m-1)}.$$
From our assumptions, with the variables $n$, $m$ and  $r$ fixed, we have
 \begin{equation}\label{range}
    \ell \in [\max \{1,2m+2-r\}.. \min \{2m+1,n-r\}].
 \end{equation}
For $\ell\in[\max \{1,2m+2-r\}.. \min \{2m,n-r-1\}]$, we compute
 \begin{equation}\label{compar}
 \frac{D_{\ell+1}(r)}{D_\ell(r)}=\frac{2(\frac{n}{2}+m-\ell-r)+1}{r-2m+\ell-1}+1.
 \end{equation}
We conclude that   $D_{\ell+1}(r)>D_{\ell}(r)$ ($D_{\ell+1}(r)<D_{\ell}(r)$ resp.)  is equivalent to  $\frac{n}{2}+m\ge \ell+r$ ($\frac{n}{2}+m< \ell+r$ resp.).
 To proceed we distinguish three cases regarding the value of $r$. 
 
 \textbf{Case} $r\in[1.. \frac{n}{2}-m-1]$.
 
  Since $\ell\leq 2m+1$ (equation (\ref{range})), we have $\ell+r  \leq  2m+1+\frac{n}{2}-m-1 =\frac{n}{2}+m$,  
  which implies  $ \frac{D_{\ell+1}(r)}{D_\ell(r)}>1$ for all $\ell\in [1..2m+1]$.
  It  follows  that $D_\ell(r)\leq D_{2m+1}(r)$ for all $\ell\leq 2m+1$.

 \textbf{Case} $r\in[\frac{n}{2}-m.. \frac{n}{2}+m]$.
 
  In this case   $ \frac{D_{\ell+1}(r)}{D_\ell(r)}>1$ for $ \ell\leq \frac{n}{2}+m-r$ and 
    $ \frac{D_{\ell+1}(r)}{D_\ell(r)}<1$ for $\ell >\frac{n}{2}+m-r $,
  from which we conclude $D_\ell(r)\leq D_{\frac{n}{2}+m-r+1}(r) $.

  \textbf{Case} $r\in[\frac{n}{2}+m+1..n-1]$.
  
 In this case $\ell+r>r>\frac{n}{2}+m$, therefore $D_{\ell+1}(r)<D_\ell(r)$.
 Since $\ell$ is at least   $1$,
 we have $D_\ell(r)\leq D_{1}(r)$.
 
 In short, we have  established that
\begin{equation*}
D_\ell(r) \leq
\left\{\begin{array}{ll}D_{2m+1}(r), &    r\in[1.. \frac{n}{2}-m-1],
\\D_{\frac{n}{2}+m-r+1} (r)   ,& r\in[\frac{n}{2}-m.. \frac{n}{2}+m], 
\\ D_1 (r) ,& r\in[\frac{n}{2}+m+1..n-1].
\end{array}\right.
\end{equation*}
  By the definition of $D_\ell(r)$, we have
  $$
  D_{2m+1}(r)={n-2m-2 \choose r-1 }{n\choose r}^{-1},
  $$
  $$
  D_{\frac{n}{2}+m-r+1}(r)= \frac{(n-r)   \cdots (\frac{n}{2}-m)\cdot r  \cdots (\frac{n}{2}-m)}{n \cdots   (n-2m-1)},
  $$
  $$
  D_{1}(r)=\frac{(n-r) \cdot r  (r-1)  \cdots (r-2m)}{n  \cdots   (n-2m-1)}.
  $$
We  observe that $D_{2m+1}(r)$ is in fact the $D_r$ that appeared in (\ref{eq11})  of Lemma~ \ref{case2l}.
Since we have proven  $D_r\leq \frac{1}{n}$ for $r\geq 1$ in  (\ref{par1}) of Lemma~ \ref{case2l},
the same estimate holds for $D_{2m+1}(r)$,
i.e., $D_{2m+1}(r)\leq \frac{1}{n}$ for $r\geq 1$.

For $D_1(r)$, $r\in[\frac{n}{2}+m..n-1]$,
a straightforward calculation shows
$$
\frac{D_1(r+1)}{D_1(r)}=\frac{2mn+n-1-(2m+2)r}{(r-2m)(n-r)}+1.
$$
Thus $D_1(r)<D_1(r+1)$   when $r<r^*\vcentcolon=\frac{2mn+n-1}{2m+2}$ and  $D_1(r)>D_1(r+1)$  when $r>r^*$.
This implies 
\begin{align*}
D_1(r)&\leq D_1(\ceil{r^*})=\frac{n-\ceil{r^*}}{n}  \frac{\ceil{r^*}}{n-1}  \cdots   \frac{\ceil{r^*}-2m}{n-2m-1} \\
&<\frac{n-{r^*}}{n} =\frac{1}{2m+2}\left(1+\frac{1}{n}\right)\leq \frac{1}{\frac{n}{2}+2}\left( 1+\frac{1}{n}\right)\leq \frac{2}{n},
\end{align*}
where in the second last inequality we used the hypothesis $m>\frac{n}{4}$.

For $D_{\frac{n}{2}+m-r+1}(r)$, $r\in[\frac{n}{2}-m.. \frac{n}{2}+m]$,  we have $$\frac{D_{\frac{n}{2}+m-r}(r+1)}{D_{\frac{n}{2}+m-r+1}(r)}=\frac{2r-n+1}{n-r}+1.$$
Hence $D_{\frac{n}{2}+m-r+1}(r)$ decreases with respect to $r$ when $\frac{n}{2}-m\leq r\leq \frac{n}{2}-1$ and increases when $\frac{n}{2}\leq r\leq \frac{n}{2}+m$.
Therefore we have for $r\in[\frac{n}{2}-m.. \frac{n}{2}+m]$
$$
D_{\frac{n}{2}+m-r+1}(r)\leq \max \left\{ D_{2m+1}\left( \tfrac{n}{2}-m\right), D_{1}\left( \tfrac{n}{2}+m\right) \right\}.
$$
Since it has already been shown that both $D_{2m+1}\left( r\right)$, $r\geq 1 $, and $D_1\left( r\right) $, $r\in[\frac{n}{2}+m..n-1]$, are bounded by $\frac{2}{n}$, the same bound holds for $D_{\frac{n}{2}+m-r+1}(r)$, $ r\in[\frac{n}{2}-m.. \frac{n}{2}+m]$.
We thus conclude 
\[
\Pr\left[\lo(V(x))\ge 2m+2\right]=D_\ell(r)\leq \tfrac{2}{n}. 
\qedhere
\]
 \end{proof}
Let  $\mathbf{1}$ denote $(1,\dots,1) \in \{0,1\}^n$. Following the standard notation, we write $\ceil*{i}_2 := \{k\in2\Z\mid k\geq i\}$ for all $i \in \R$. We recall that $\HLB_2(x)$ simply denotes the number of blocks equal to $11$ counted from left to right. In the next lemma we will give a lower bound of $\Omega(n^2)$ on the unary unbiased black-box complexity of the $\DLB$ problem by considering the  potential
\begin{align}\label{eq14}
h_{t}=\min\left\{2\ceil*{\frac{n}{3}}, {}\max_{ s\in[0..t]}\left\{\ceil*{\frac{n+1}{2}}_2,{}\HLB_2(x^{(s)}),\HLB_2(\mathbf{1} -x^{(s)})
%\middle| s\in [0..t]
\right\}\right\},
\end{align}
that is, the maximum of the $\HLB_2$ value of $x^{(s)}$, $s\in[0..t]$, and the $\HLB_2$ value of $\mathbf{1}-x^{(s)}$,  $s\in[0..t]$, capped into the interval $\left[\ceil*{\frac{n+1}{2}}_2, 2\ceil*{\frac{n}{3}}\right]$.
The preceding lemma enables us to show that for any unary unbiased black-box algorithm, the expected gain  in potential (\ref{eq14}) at each iteration  is $O(\frac{1}{n})$.
The additive drift theorem will then imply  the  $\Omega(n^2)$ lower bound. Before conducting this proof, let us quickly derive a statement similar to Lemma~\ref{indep}, namely that all bits with an index higher than $2m+2$ are uniformly distributed when $h_t$ is at most $m$.

\begin{lemma}\label{lem:indep}
  Consider a run of a unary unbiased black-box algorithm on the \DLB problem. Let $x^{(0)}, x^{(1)}, \dots$ be the search points generated in a run of this algorithm, following the notation of Algorithm~\ref{alg:unbiased}. Conditional on $h_t \le m$, we have that for each $s \in [0..t]$ the random variables $x^{(s)}_i, i \in [2m+3..n]$ are independent and uniformly distributed in $\{0,1\}$. 
\end{lemma}

We omit a formal proof of this statement since it is very similar to the proof of Lemma~\ref{lem4}. The main argument is that the event $h_t$ implies that no search point ever sampled up to iteration~$t$ has the $(m+1)$-st block completed. Consequently, bits with index higher than $2m+2$ have no influence on the run of the algorithm up to this point. 

\begin{lemma}\label{Una_Unb_Low_bdd}
The unary unbiased black-box complexity of the $\DLB$ problem is $\Omega(n^2)$.
\end{lemma}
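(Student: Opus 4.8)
The plan is to apply the lower-bound part of the additive drift theorem (Theorem~\ref{adddrift}) to the capped, symmetric running-maximum potential $h_t$ from~(\ref{eq14}). First I would note that $h_t$ is non-decreasing and that in order to evaluate the optimum $\mathbf 1$ one must produce a point of $\HLB_2$-value $n>2\ceil*{\frac{n}{3}}$; hence the first time $T'$ at which $h_t$ reaches its maximal value $2\ceil*{\frac{n}{3}}$ satisfies $T'\le T$, so it suffices to bound $E[T']$ from below. Running the drift theorem with target $2\ceil*{\frac{n}{3}}$ then reduces the claim to two facts: (a)~$E[2\ceil*{\frac{n}{3}}-h_0]=\Theta(n)$, and (b)~a uniform bound $\Delta_t(s)=E[h_{t+1}-h_t\mid h_t=s]=O(1/n)$ for every non-maximal value $s$. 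Part~(a) is quick: for a uniform $x^{(0)}$ both $\HLB_2(x^{(0)})$ and $\HLB_2(\mathbf 1-x^{(0)})$ are $O(1)$ with overwhelming probability (the number of leading $11$- resp.\ $00$-blocks is dominated by a geometric variable), so $h_0=\ceil*{\frac{n+1}{2}}_2=\frac n2+O(1)$ whp while the cap is $\frac{2n}{3}+O(1)$, leaving a gap of order $n$.

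The core is part~(b), and the design of the potential is tailored to it. The window $[\ceil*{\frac{n+1}{2}}_2,\,2\ceil*{\frac{n}{3}}]$ forces $m\in(\frac n4,\frac n3)$ whenever $h_t=2m$ is non-maximal, which is exactly the range hypothesis of Lemma~\ref{calcul}. The complement term $\HLB_2(\mathbf 1-x^{(s)})$ supplies the remaining hypothesis $1\le\ell\le 2m+1$: since $h_t=2m$ non-maximal means $\HLB_2(x^{(s)})\le 2m$ and $\HLB_2(\mathbf 1-x^{(s)})\le 2m$ for all $s\le t$, the leading $2m+2$ bits of each stored $x^{(s)}$ are neither all $1$ (so the number $\ell_s$ of ones among them is $\le 2m+1$) nor all $0$ (so $\ell_s\ge 1$).

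Next I would bound the probability of any increase. Writing $x^{(t+1)}=V(x^{(j)})$ for a unary unbiased $V$, an increase through the first argument needs $\HLB_2(x^{(t+1)})\ge 2m+2$, i.e.\ $\lo(x^{(t+1)})\ge 2m+2$, which by Lemma~\ref{calcul} (applicable by the previous paragraph) has probability at most $\frac2n$. An increase through the complement needs $\HLB_2(\mathbf 1-x^{(t+1)})\ge 2m+2$; using the $\oplus\mathbf 1$ invariance of Definition~\ref{ununopt}, which gives $\mathbf 1-V(x^{(j)})\stackrel{d}{=}V(\mathbf 1-x^{(j)})$, this probability equals $\Pr[\lo(V(\mathbf 1-x^{(j)}))\ge 2m+2]\le\frac2n$ by the same lemma applied to $\mathbf 1-x^{(j)}$ (legitimate because $1\le 2m+2-\ell_j\le 2m+1$ as well). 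A union bound therefore caps the probability of \emph{any} increase by $\frac4n$.

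The hard part will be upgrading this $O(1/n)$ \emph{probability} of improvement to an $O(1/n)$ bound on the \emph{expected} increment, because one lucky step could in principle raise $h_t$ by $\Theta(n)$: if the parent happened to read $1^{2m}10\,1^{\,n-2m-2}$, flipping a single bit would jump straight to the cap, so Lemma~\ref{calcul} by itself only yields the weaker bound $\Omega(n)$. To control these free-riders I would invoke an independence argument in the spirit of Lemmas~\ref{lem4} and~\ref{indep}: conditioned on the observed fitness history, the bits of every stored point in positions $[2m+3..n]$ have never influenced any evaluated $\DLB$-value (all observed values are $\le 2m+1$, so all critical blocks lie in the first $2m+2$ positions) and are hence jointly independent and uniform; moreover the event $\{h_t=2m\}$ depends only on the leading $2m+2$ bits, so this uniformity survives the conditioning. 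Consequently, given that the leading $2m+2$ bits of $x^{(t+1)}$ have become all-$1$, the subsequent blocks are still uniform, so the number of further leading $11$-blocks (free-riders) is dominated by a geometric variable of constant mean. This gives $E[(\HLB_2(x^{(t+1)})-2m)^+]\le\frac2n\cdot O(1)$, and the identical estimate for the complement, whence $\Delta_t(2m)=O(1/n)$. Combining (a), (b) and $T'\le T$ through Theorem~\ref{adddrift} yields $E[T]=\Omega(n^2)$.
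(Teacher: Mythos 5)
Your proposal is correct and follows essentially the same route as the paper's own proof: the same capped running-maximum potential from (\ref{eq14}), the same use of Lemma~\ref{calcul} (with the window forcing $m\in(\frac n4,\frac n3)$ and the complement cap supplying $1\le\ell\le 2m+1$), the same independence-plus-free-rider argument (Lemmas~\ref{lem4} and~\ref{upperbrand}) to upgrade the $O(1/n)$ improvement probability to an $O(1/n)$ drift bound, and the same application of Theorem~\ref{adddrift} with $E[h_0]=\frac n2+O(1)$. The only differences are cosmetic: your union bound of $\frac4n$ times an $O(1)$ jump versus the paper's $2\cdot\frac2n\cdot4=\frac{16}{n}$, and your explicit $\oplus\mathbf 1$-invariance step where the paper simply passes to complements.
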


\begin{proof}
Consider any unary unbiased black-box algorithm,
let $x^{(t)}$ be the search point at time $t$. 
Let $T\vcentcolon = \min\{t\in\mathbb{N}\mid \max\{\HLB_2(x^{(t)}),\HLB_2(\mathbf{1} -x^{(t)})\}\geq 2\ceil*{\frac{n}{3}}\}$ be the first time the $\HLB_2$ value of a search point or its complement  is at least $2\ceil*{\frac{n}{3}}$.
We will use the additive drift theorem (Theorem \ref{adddrift}) 
to prove $E[T ]= \Omega(n^2)$,
from which   the claim follows since $T$ is a  trivial  lower bound for   the   runtime.

Let $t<T$ and let $\mathcal{C}_m$ be the event $\{ h_t=2m\}$ for  $m\in[\frac{1}{2}\ceil*{\frac{n+1}{2}}_2..\ceil*{\frac{n}{3}}-1] $. By definition of $h_t$ and $T$, one of the events $\mathcal{C}_m$ holds.
We  now estimate $E[h_{t+1}-h_t\mid  \mathcal{C}_m]$.
Let $x^{(s)}$, $s\in [0..t]$, be the individual chosen in iteration $t+1$ to generate $x^{(t+1)}$ (see line~4 of Algorithm~\ref{alg:unbiased}, note that here we have~$k=1$).
Let~ $\ell\vcentcolon=|\{i\in[1..2m+2]\mid x^{(s)}_i=1\}|$ be the number of the ones in the leading $2m+2$ bit positions of~ $x^{(s)}$.
Then $1\leq \ell\leq 2m+1$ because otherwise $h_t$ would be not less than  $2m+2$.

If $h_{t+1}>h_t$, then 
either $x^{(t+1)}$ has no less than $2m+2$ leading ones or it has no less than  $2m+2$
leading zeros, i.e., 
\begin{align*}
    \{&h_{t+1}>h_t\}\cap \mathcal{C}_{m}\\
    &=\left(\{\HLB_2(x^{(t+1)})>2m\}\cap \mathcal{C}_{m}\right)\dot\cup\left(\{\HLB_2(\mathbf{1}-x^{(t+1)})>2m\}\cap \mathcal{C}_{m}\right).
\end{align*}
Lemma \ref{calcul}  shows that the first scenario  happens with probability at most~ $\frac{2}{n}$, that is,
$$
\Pr[\HLB_2(x^{(t+1)})>2m\mid \mathcal{C}_{m}]\leq \tfrac{2}{n}.
$$
We recall that $x^{(t+1)}$   is obtained from some $x^{(s)}$, $s\in[0..t]$, via a unary unbiased operator.  By Lemma~\ref{lem:indep}, the $x^{(s)}_i$, $i\in[2m+3..n]$, are all independent and uniformly distributed in $\{0,1\}$ because of the condition $\mathcal{C}_m=\{h_t=2m\}$. 
If we  condition further on the event  $\{\HLB_2(x^{(t+1)})>2m\}\cap \mathcal{C}_{m}$, then $x^{(t+1)}_i$, $i\in[2m+3..n]$, are all independent and uniformly distributed in $\{0,1\}$ and $x^{(t+1)}_i=1$ for  $i\in[1..2m+2]$.
Hence Lemma \ref{upperbrand} can be applied to $x^{(t+1)}$,  yielding 
\begin{align*}
    E[h_{t+1}&-h_t\mid \HLB_2(x^{(t+1)})>2m, \, \mathcal{C}_{m}]\\
    \leq{}&E[\HLB_2(x^{(t+1)})-2m\mid \HLB_2(x^{(t+1)})>2m, \, \mathcal{C}_{m}] \\
    \leq{}& 2m+4-2m=4.
\end{align*}
For the second scenario,  we consider $\mathbf{1}-x^{(s)}$ and $\mathbf{1}-x^{(t+1)}$   instead of $ x^{(s)}$ and $x^{(t+1)}$ to obtain
$$
\Pr[\HLB_2(\mathbf{1}-x^{(t+1)})>2m\mid\mathcal{C}_{m}]\leq \tfrac{2}{n}
$$
and
$$
E[h_{t+1}-h_t\mid \HLB_2(\mathbf{1}-x^{(t+1)})>2m, \, \mathcal{C}_{m}] \leq 4.
$$
Combining the two scenarios, we have 
\begin{align*}
E[h_{t+1}-h_t\mid  \mathcal{C}_{m}]&=\Pr[h_{t+1}>h_t\mid\mathcal{C}_{m}]E[h_{t+1}-h_t\mid  h_{t+1}>h_t,\mathcal{C}_{m}]\\
&\leq{} 2\cdot \tfrac{ 2}{n}\cdot 4=\tfrac{16}{n}.
\end{align*} 
Now we examine the expected initial value of the potential.
First we observe that $\Pr\left[\max\{\HLB_2(x^{(0)}),\HLB_2(\mathbf{1} -x^{(0)})\}\geq \frac{n}{2}\right]=O(2^{-\frac{n}{2}})$ since $x^{(0)}$ is generated uniformly at random in $\{0,1\}^n$.  Thus
 $$E[h_0 ]\leq  \ceil*{\frac{n+1}{2}}_2+ O(n2^{-\frac{n}{2}}) =\frac{n}{2}+O(1).$$
The additive drift theorem (Theorem \ref{adddrift}) thus implies 
\begin{equation*}
E\left[T\right]\geq \frac{2n/3-n/2-O(1)}{16/n}=\Theta(n^2).
\qedhere
\end{equation*}
\end{proof}

Combining Lemma \ref{Una_Unbia_upper_bound} with Lemma \ref{Una_Unb_Low_bdd}, we have proven the following theorem.
\begin{theorem}\label{Una_Unbia}
The unary unbiased black-box complexity of the $\DLB$ problem is $\Theta(n^2)$.
\end{theorem}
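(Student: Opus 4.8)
Because the statement is a $\Theta$-bound, the plan is to assemble it from a matching pair of one-sided bounds, both of which are already furnished by the two lemmas immediately preceding it. Concretely, the unary unbiased black-box complexity is by definition $\inf_A E[T_A]$ with $A$ ranging over all unary unbiased black-box algorithms, so to pin this quantity down to $\Theta(n^2)$ I only need to invoke the $O(n^2)$ upper bound of Lemma~\ref{Una_Unbia_upper_bound} together with the $\Omega(n^2)$ lower bound of Lemma~\ref{Una_Unb_Low_bdd}.

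For the upper half, Lemma~\ref{Una_Unbia_upper_bound} exhibits one explicit witness: the algorithm that only flips a single random bit and accepts according to a rule tailored to even versus odd $\DLB$-values. Taking $\HLB_{3/2}$ as potential, one checks that each step gains $\Omega(\tfrac 1n)$ in expectation, and the additive drift theorem (Theorem~\ref{adddrift}) turns this into an $O(n^2)$ runtime. Since this algorithm is itself unary unbiased, its mere existence already caps the infimum at $O(n^2)$; this is the easier direction.

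The lower half, Lemma~\ref{Una_Unb_Low_bdd}, is where the real work sits, and it is what I would treat as the main obstacle if reconstructing the argument from scratch. Here one must bound the drift from above for \emph{every} unary unbiased algorithm simultaneously. The plan is to use the symmetrized and capped potential of~(\ref{eq14}) (tracking the better of $\HLB_2(x^{(s)})$ and $\HLB_2(\mathbf{1}-x^{(s)})$), so that the estimate applies no matter which parent the algorithm selects; to invoke the independence-of-irrelevant-bits Lemma~\ref{lem4} so the untouched tail bits stay uniform; to bound the free-rider contribution via Lemma~\ref{upperbrand}; and, crucially, to invoke the combinatorial estimate of Lemma~\ref{calcul} that the chance of reaching $2m+2$ leading ones (or leading zeros) in a single unbiased step is at most $\tfrac 2n$. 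The delicate point is precisely this last estimate, which requires controlling the ratios of the hypergeometric quantities $D_\ell(r)$ across the three regimes of the flip count $r$. Once the per-step drift is shown to be $O(\tfrac 1n)$, the additive drift theorem yields $\Omega(n^2)$.

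Putting the two halves together, the infimum over all unary unbiased algorithms lies between $\Omega(n^2)$ and $O(n^2)$, hence equals $\Theta(n^2)$, which is the claim.
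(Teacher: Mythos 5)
Your proposal is correct and matches the paper's proof exactly: the theorem is obtained by combining the $O(n^2)$ upper bound of Lemma~\ref{Una_Unbia_upper_bound} with the $\Omega(n^2)$ lower bound of Lemma~\ref{Una_Unb_Low_bdd}, and your summaries of how each of those lemmas works (the single-bit-flip algorithm with the $\HLB_{3/2}$ potential and additive drift for the upper bound; the capped symmetrized potential, Lemmas~\ref{lem4}, \ref{upperbrand}, and~\ref{calcul}, and additive drift for the lower bound) faithfully reflect the paper's arguments.
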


\section{Beyond Unary Unbiased Algorithms}
\label{sec:beyond}
We recall that in this work we are generally looking for unbiased algorithms as this is most natural when trying to solve a novel problem without much problem-specific understanding. In Sections \ref{sec1} and \ref{sec:nonel}, we have discussed unary unbiased algorithms. Our theory-guided approach has suggested the Metropolis algorithm, which with a $\Theta(n^2)$ runtime improved over the $O(n^3)$ runtime guarantee previously shown for various classic unbiased evolutionary algorithms. Our $\Omega(n^2)$ lower bound for all unary unbiased algorithms, however, also shows that further improvements are not possible with unary unbiased algorithms, and raises the question if algorithms of higher arity are more powerful for the \DLB problem. 

Following our theory-guided approach, we first exhibit in Section~\ref{5.1} that the binary unbiased black-box complexity of the \DLB problem is $O(n \log n)$. We did not find a natural binary unbiased black-box algorithm for which we could show an $o(n^2)$ runtime, but by resorting to unrestricted arities, which allows for estimation-of-distribution algorithms, we detected that the \emph{significance-based compact genetic algorithm} (sig-cGA)~\cite{DoerrK20tec} has a runtime of $O(n\log n)$ with high probability (Section \ref{5.2}).

\subsection{Unbiased Black-Box Algorithms of Higher  Arity}\label{5.1}

In this section we will exhibit a binary unbiased black-box algorithm (Algorithm \ref{bina}) whose runtime on the $\DLB$ problem is $O(n\log n)$. Our algorithm builds on two ideas used in \cite[Section~5]{DoerrJKLWW11} to design an $O(n \log n)$ binary unbiased black-box algorithm for the $\lo$ function: (i)~We use a pair $(x,y)$ of search points such that the bits $i$ with $x_i = y_i$ are exactly the ones for which we know that the optimum also has this bit value in this bit position. (ii)~We conduct a random binary search in $\{i \in [1..n] \mid x_i \neq y_i\}$ to find in logarithmic time a bit position such that flipping this bit in either $x$ or $y$ (which one will be visible from the fitness) sets it to the value the optimum has in this position. This operation increases the number of correctly detected bits by one. Different from the situation in~\cite{DoerrJKLWW11}, in our setting  such a good bit value cannot always be detected from a fitness gain (due to the deceptive nature of the \DLB problem). We overcome this by regarding the $\HLB_1$ function instead. Note that there is a (non-monotonic) one-to-one relation between the fitness values of $\DLB$ and $\HLB_1$. Hence a black-box algorithm for \DLB has access also to the $\HLB_1$ value of any bit string.

We first recall from~\cite{DoerrJKLWW11} the two binary unbiased variation operators $\rwd(\cdot,\cdot)$ and $\sido(\cdot,\cdot)$. The operator $$\rwd(\cdot,\cdot)$$ takes two bit strings $y$ and $y'$ as input and outputs a bit string $z$ of which the components $z_i$, $i\in [1..n]$, are determined by the following rule: we have $z_i=y_i$ if $y_i=y'_i$, and $z_i$ is chosen uniformly at random from $\{0,1\}$ if $y_i\neq y'_i$. This operator is also (and better) known as uniform crossover, but to ease the comparison with~\cite{DoerrJKLWW11} we keep the notation used there.
The operator $\sido(\cdot,\cdot)$ takes two bit strings $y$ and $y'$ as input
and outputs $y'$ if the Hamming distance between $y$ and $y'$ is one, otherwise it outputs   $y$.

We now state our black-box algorithm (Algorithm \ref{bina}). Since, as discussed, any black-box algorithm for the \DLB problem also has access to the $\HLB_1$ values, we use this function as well in the algorithm description. Algorithm~\ref{bina} is initialized by generating a random search point $x$ and its complement $y$ (we note that taking the complement is the unary unbiased operator of flipping $n$ bits).
%We learn exactly one bit value of the optimum of the $\DLB$ function in each iteration in the  sense 
%\merk{I did not understand the following. Also, ``minimum bit value'' is not an unbiased operation. Possibly take up the language I was using in the first paragraph of this subsection?}
%At the end of each iteration, there is one more bit position $i\in[1..n]$ such that $x_{i}=y_{i}$, which is equal to the value of the optimum has in position $i$.

%Let $\min\{x,y\}$ denote the bit string whose $i$-th component is equal to $\min\{x_i,y_i\}$, then after the $i$-th iteration,   $x$ and $y$ will agree on exactly $i$ bits and will satisfy $\HLB_1(\min\{x,y\})=i$.

In each iteration of the \textbf{for} loop, 
we  ensure   $\HLB_1(y)\leq \HLB_1(x)$ by exchanging $x$ and~ $y$ when necessary and run the subroutine in the 
lines~\ref{line1}-\ref{line2}, which flips one bit in $y$ in a way that increases the $\HLB_1$ value of $y$.
To be more precise, 
the subroutine does a random binary search for such a  bit position.
At the beginning of the subroutine, $y'$ is set to be $x$.
In each iteration of the \textbf{repeat} loop,
$y''$ is sampled from  $\rwd(y,y')$.
If  $\HLB_1(y'')>\HLB_1(y)$,
then  we accept $y''$ and attribute its value to  $y'$, otherwise we refuse $y''$  and proceed to the next iteration of the subroutine.

This subroutine terminates when $y'$ and $y$ differ only in one bit position. At this moment  $y$ is set to be equal to $y'$. Now $x$ and $y$ have exactly one more bit in common, which means the number of correctly detected bits is increased by one. By induction, after $i$ iterations of the \textbf{for} loop, $x$ and $y$ agree on $i$ bit positions. After $n$ iterations, $x$ and $y$ are both equal to the optimum. 

\begin{algorithm2e}%
\textbf{initialization:} generate a search point $x$ uniformly at random in $\{0,1\}^n$ 
and let $y$ be the complement of  $x$;

	\For{$i\in[1..n]$}{
	\lIf{$\HLB_1(y)>\HLB_1(x)$}{$(x, y) \assign (y, x)$}
	
	$y' \assign x$;
	
	$H\assign\HLB_1(y)$;
	
	\Repeat{$\HLB(y)>H$}{\label{line1}
	$y''\assign \rwd(y,y') $;\label{line6}
	
	\lIf{$\HLB_1(y)<\HLB_1(y'')$}{$y'\assign y''$}\label{line4}
	$y\assign\sido(y,y')$;
	}\label{line2}
	
  }
\caption{A binary unbiased black-box algorithm for maximizing the $\DLB$ function. }
\label{bina}
\end{algorithm2e}

To analyze the time complexity of Algorithm \ref{bina}, 
we first consider the runtime of the \textbf{repeat}  subroutine.
\begin{lemma}\label{lem17}
The subroutine in lines~\ref{line1}-\ref{line2} of Algorithm \ref{bina} finishes in an expected runtime of $O(\log n)$.
\end{lemma}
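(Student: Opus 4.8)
The plan is to recognize the \textbf{repeat} loop as a random binary search on the set $S := \{i \in [1..n] \mid y_i \neq y'_i\}$ of positions where the current pair disagrees, and to control $|S|$ by drift. First I would record the structural invariants. Since $y'$ is initialized to $x$ and is only ever replaced by an output of $\rwd(y,y')$, the strings $y$ and $y'$ always take complementary values on $S$, and $y$ itself is frozen throughout the loop except possibly in its final iteration: indeed $y\leftarrow\sido(y,y')$ alters $y$ only when the Hamming distance $|S|$ equals $1$, in which case the loop terminates. Moreover, writing $R\subseteq S$ for the (uniformly random) set of bits in which $y''=\rwd(y,y')$ differs from $y$, an accepted step replaces $y'$ by $y''$ and hence replaces $S$ by exactly $R$, each element of $S$ surviving into the new search set independently with probability $\tfrac12$.

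Next I would pin down the acceptance event. Because $\HLB_1(y)\le\HLB_1(x)$ and $y,y'=x$ agree (on value $1$) outside $S$, there can be no disagreement inside a leading $11$-block of $y$; hence every position of $S$ lies in or after the critical block of $y$. Consequently $\HLB_1(y'')>\HLB_1(y)$ holds if and only if the randomization strictly improves the critical block of $y$ (completing a $01$ or $10$ block to $11$, or raising a $00$ block to a $01$, $10$ or $11$ block), an event determined solely by the one or two ``target'' bits of that block and occurring with probability bounded below by a positive constant (at least $\tfrac14$, depending on whether the critical block is $00$, $01$ or $10$ and on the value of $x$ there). The key points to extract from a short case distinction are that (i) this event always keeps at least one target bit inside $R$, so $S$ never loses all its target bits and the search cannot terminate at a wrong position, and (ii) conditioned on acceptance, every non-target position of $S$ still survives independently with probability $\tfrac12$.

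With these facts the drift computation is routine. Taking the potential $X:=|S|-1$, an accepted step satisfies $E[X_{\mathrm{new}}\mid X,\text{accepted}]\le X/2$ (the surviving target bit contributes the $+1$ while the remaining positions are halved), whereas a rejected step leaves $X$ unchanged; combining this with the constant lower bound on the acceptance probability yields $E[X_{t+1}\mid X_t]\le(1-\delta)X_t$ for a constant $\delta>0$. Since $X$ decreases to $0$ exactly when $|S|=1$, i.e.\ when the terminating $\sido$ fires, the multiplicative drift theorem (equivalently, the additive drift theorem applied to $\log_2(1+X)$) gives an expected number of iterations of $O(\log X_0)=O(\log n)$, because $X_0\le n-1$. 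I expect the main obstacle to be item (ii): verifying that the acceptance event, which in the borderline case where $x$ and $y$ share a critical block imposes both a ``flip'' and a ``do-not-flip'' constraint on the two critical-block bits, nonetheless leaves a target bit in the search set and preserves the independent $\tfrac12$-survival of all other positions, so that the clean halving estimate holds uniformly over all cases.
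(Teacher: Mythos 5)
Your proposal is correct and follows essentially the same route as the paper's proof: both view the \textbf{repeat} loop as a random binary search in which each iteration is accepted with constant probability (at least $\tfrac14$) and, upon acceptance, the set of disagreeing positions other than the target bit(s) is halved, giving $O(\log n)$ expected iterations. The differences are only in formalization --- the paper concludes from ``with probability at least $\tfrac14$ the disagreement count at most halves'' rather than from multiplicative drift, and it treats the borderline case you flag (both critical-block bits disagreeing, so acceptance probability $\tfrac14$ instead of $\tfrac12$) only implicitly under its ``can be treated similarly'' remark, so your extra care there is warranted but not a departure.
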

\begin{proof}
For   sake of simplicity we suppose $\lo(y)=2m+1$ for some $m$ at the beginning of the   subroutine,
the cases $\DLB(y)=2m+1$ and $\DLB(y)=\lo(y)=2m$ can be treated similarly.

Since  $\HLB_1(y'')>2m+1=\HLB_1(y)$   if and only if $y''_{2m+2}=1$   and this second event happens with probability $\frac{1}{2}$ (line \ref{line6}),
the value of $y''$ is assigned to $y'$ with probability $\frac{1}{2}$.

Now we consider the bit positions, other than the bit position $2m+2$, on which $y$ and $y'$ are different. Since the bit values of $y''$ on these positions are chosen at random, the probability that at least half of them coincide with corresponding bit values in $y$ is at least $\frac{1}{2}$ by virtue of symmetry.
Therefore, 
at the end of an iteration of the subroutine,
with probability $\frac{1}{4}$
the number of 
bit positions other than the bit position $2m+2$, on which~ $y$ and $y'$ are different, is at most half as before. We note that the number of different bits never increases. Hence,
we conclude that in expected runtime $O(\log n)$, 
$y$ and $y'$ are only different in bit position $2m+2$.
\end{proof}

As discussed above,
$x$ and $y$ are set to the optimum  after having run the \textbf{for} loop for  $n$ times. Lemma \ref{lem17} thus implies the following theorem on  the binary unbiased black-box complexity of the $\DLB$ problem.
\begin{theorem}
The binary unbiased black-box complexity of the $\DLB$ problem is $O(n\log n)$.
\end{theorem}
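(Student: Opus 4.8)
The plan is to prove the upper bound by exhibiting a single binary unbiased black-box algorithm that solves \DLB in expected $O(n\log n)$ fitness evaluations; since the binary unbiased black-box complexity is the infimum of the worst-case expected runtime over all such algorithms, one witnessing algorithm suffices, and Algorithm~\ref{bina} is the natural candidate. The substantive analytic work is already contained in Lemma~\ref{lem17}, so the theorem reduces to an assembly of three ingredients: verifying unbiasedness and arity, establishing the loop invariant, and combining the per-iteration cost via linearity of expectation.

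First I would check unbiasedness and arity. The initialization samples $x$ uniformly (a $0$-ary unbiased operator) and sets $y$ to the complement of $x$, which is the unary unbiased operator flipping all $n$ bits. Inside the loop the only variation operators are $\rwd(\cdot,\cdot)$ and $\sido(\cdot,\cdot)$, both binary unbiased as recalled from~\cite{DoerrJKLWW11}. Every branching decision in the algorithm---the conditional swap of $x$ and $y$, the acceptance test in line~\ref{line4}, and the termination test of the \textbf{repeat} loop---depends only on $\HLB_1$ values, and since there is a one-to-one correspondence between $\DLB$ values and $\HLB_1$ values, these decisions are functions of the fitness alone. Hence Algorithm~\ref{bina} is a binary unbiased black-box algorithm.

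Next I would establish, by induction on the \textbf{for} loop, the invariant that after $i$ completed iterations $x$ and $y$ agree on exactly $i$ bit positions and the common value on each of these positions equals $1$, the value of the optimum $x^*=(1,\dots,1)$. The base case is immediate, as complements agree nowhere. For the inductive step I would argue that the subroutine in lines~\ref{line1}--\ref{line2} preserves all previously agreeing bits: since $y'$ is initialized to $x$ and $\rwd$ copies every bit on which its two arguments already agree, the positions where $x=y$ are never disturbed during the binary search. Upon termination $y$ is set equal to $y'$, which differs from the previous $y$ only in the critical block position, whose value has been corrected to $1$; thus $x$ and $y$ acquire exactly one further agreeing position, again with common value $1$.

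The hard part will be precisely this correctness of the newly fixed bit, because \DLB is deceptive and a correct bit value cannot in general be recognized from a gain in \DLB. This is exactly why the acceptance test uses $\HLB_1$ rather than \DLB: an increase in $\HLB_1$ does certify that the critical bit was set to $1$, and the argument here mirrors the case distinction ($\DLB(y)=2m+1$, $\DLB(y)=2m$, and $\lo(y)=2m$) already carried out in the proof of Lemma~\ref{lem17}. Finally I would assemble the runtime: by the invariant, after $n$ iterations $x$ and $y$ agree on all $n$ positions with common value $1$, so both equal the optimum and it has been evaluated; Lemma~\ref{lem17} bounds the expected number of fitness evaluations per \textbf{for}-loop iteration by $O(\log n)$, and linearity of expectation then gives total expected runtime $n\cdot O(\log n)=O(n\log n)$, which is the claimed upper bound on the binary unbiased black-box complexity.
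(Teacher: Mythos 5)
Your proposal is correct and follows essentially the same route as the paper: it runs Algorithm~\ref{bina}, invokes Lemma~\ref{lem17} to bound each of the $n$ \textbf{for}-loop iterations by $O(\log n)$ expected fitness evaluations, and combines these bounds by linearity of expectation. Your explicit loop invariant (agreeing positions carry the optimum's bit value $1$ and are never disturbed, since $\rwd$ preserves agreement and the $\HLB_1$-acceptance test certifies the newly fixed bit) merely spells out in detail what the paper states more briefly in the discussion preceding the theorem.
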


Algorithm \ref{bina}  reveals that the ability to learn how good solutions look like plays an important role in solving the $\DLB$ problem. This inspires the study of EDAs, and more specifically, the  significance-based EDA proposed by Doerr and Krejca~\cite{DoerrK20tec} in the following.

\subsection{Significance-Based Compact Genetic Algorithm (sig-cGA) }\label{5.2}
 Estimation-of-distribution algorithms (EDAs) optimize a function $f:\{0,1\}^n\rightarrow \mathbb{R}$ by evolving a probabilistic model of the solution space $\{0,1\}^n$ in such a direction that this probabilistic model becomes more likely to sample the optimum.

We now present the   \emph{sig-cGA} (Algorithm \ref{sigcga}) proposed by Doerr and Krejca \cite{DoerrK20tec}. To this end, we first define the notion of \textit{frequency vectors}. A frequency vector $\tau = (\tau_i)_{i\in[1..n]}$ is a vector whose components represent a probability, i.e., $\tau_i\in[0,1]$, $ i\in[1..n]$. A probability distribution on $ \{0,1\}^n$  is associated to a frequency vector $\tau$ in the following way:
 an individual $x\in\{0,1\}^n$ follows this probability distribution if and only if each component   $x_i$ of $x$ independently follows the Bernoulli distribution of parameter $\tau_i$.

The sig-cGA utilizes   frequency vectors whose components take values in $\{\frac{1}{n},\frac{1}{2},1-\frac{1}{n}\}$ to represent a probability  distribution on $\{0,1\}^n$ in the above way. For each bit position $i\in[1..n]$, the sig-cGA keeps a history $H_i\in\{0,1\}^*$   for significance inferring that will be explained in detail in the following.
In iteration $t$, two individuals,  $x$ and $y$,  are independently sampled from the probability distribution associated with $\tau^{(t)}$.  They are then  evaluated under $f$, the one with better fitness is called the winner and is denoted by $z$. In the case of a tie    the winner is chosen at random.  Now for each $i\in[1..n]$, the value of $z_i$ is added to the history $H_i$. If a statistical significance of $1$ ($0$ resp.) is detected in $H_i$, then $\tau_i^{(t+1)}$ is set to $1-\frac{1}{n}$  ($\frac{1}{n}$ resp.) and the history $H_i$ is emptied.  
To be more precise, we define in the following the function $\operatorname{sig}_{\varepsilon}$ taking values in $\{\textsc{up},\textsc{down},\textsc{stay}\}$ and we say that a significance of $1$ ($0$ resp.)  is detected when $\operatorname{sig}_{\varepsilon}\left(\tau_{i}^{(t)}, H_{i}\right)= \textsc{up}$ (\textsc{down} resp.).

For all $\varepsilon, \mu\in\mathbb{R}^+$, let $s(\varepsilon, \mu)=\varepsilon \max \{\sqrt{\mu \log n}, \log n\}$. For all $H\in\{0,1\}^*$, let $H[k]$ be the string of its last $k$ bits and let $\|H[k]\|_0$ ($\|H[k]\|_1$ resp.) denote the number of zeros (resp. ones) in $H$.  Then for all $p\in\{\frac{1}{n},\frac{1}{2},1-\frac{1}{n}\}$ and  $H\in\{0,1\}^*$, $\operatorname{sig}_{\varepsilon}(p, H)$ is defined by 
$$  
\operatorname{sig}_{\varepsilon}(p, H)=
\begin{cases}
  \textsc{up} &\text{if } p \in\left\{\frac{1}{n}, \frac{1}{2}\right\} \wedge \exists m \in \mathbb{N}:\\
 & \left\|H\left[2^{m}\right]\right\|_{1} \geq 2^{m} p+s\left(\varepsilon, 2^{m} p\right),\\
  \textsc{down} &\text{if } p \in\left\{ \frac{1}{2}, 1-\frac{1}{n}\right\} \wedge \exists m \in \mathbb{N}:\\
  &\left\|H\left[2^{m}\right]\right\|_{0} \geq 2^{m} (1-p
  )+s\left(\varepsilon, 2^{m} (1-p)\right),\\
  \textsc{stay} &\text{else}.
  \end{cases}
$$
\begin{algorithm2e}
	$t \assign 0$;	
	
	\lFor{$i\in[1..n]$}{$\tau_{i}^{(0)} \leftarrow \frac{1}{2} \text { and } H_{i} \leftarrow \emptyset$}
	
	\Repeat{termination criterion met}{ 
	$x, y \leftarrow \text { offspring sampled with respect to } \tau^{(t)}$;
	
	$z \leftarrow \text { winner of } x \text { and } y \text { with respect to } f$, chosen at random in case of a tie;
	
	\For{$i\in[1..n]$}{
	$H_{i} \leftarrow H_{i} \circ z_{i}$;
	
	\lIf{$\operatorname{sig}_{\varepsilon}\left(\tau_{i}^{(t)}, H_{i}\right)= \textsc{up}$}{$\tau_{i}^{(t+1)} \leftarrow 1-\tfrac1n$}\label{lin3}
	\lElseIf{$\operatorname{sig}_{\varepsilon}\left(\tau_{i}^{(t)}, H_{i}\right)=\textsc{down}$}{$\tau_{i}^{(t+1)} \leftarrow \tfrac1n$}{}
	\lElse{$\tau_{i}^{(t+1)} \leftarrow \tau_{i}^{(t)} $}
	\lIf{$\tau_{i}^{(t+1)} \neq \tau_{i}^{(t)}$}{$H_{i} \leftarrow \emptyset$}\label{lin4}
	$t\assign t+1$;\label{lin2}
	}
	}
\caption{The sig-cGA with parameter $\varepsilon$ and significance function $\operatorname{sig}_\varepsilon$ optimizing $f:\{0,1\}^n\rightarrow \mathbb{R}$. By $\circ$ we denote the concatenation of two strings (here only used for appending a single letter to a string).}
\label{sigcga}
\end{algorithm2e}

%We say that an event $A$ occurs \textit{with high probability}  if there exists a positive constant $c$ such that $\Pr[A]\geq 1-n^{-c}$.
The following lemma (Lemma 2 in \cite{DoerrK20tec}) shows that the sig-cGA, with  probability at least $1-n^{-\varepsilon/3}\log_2 k$, does not detect a significance at a position with no bias in selection, that is, with a high probability it does not detect  a false significance.
\begin{lemma}[Lemma~2~in~\cite{DoerrK20tec}]\label{DoeK}
Consider the sig-cGA   with  $\varepsilon\geq1$. Let  $i\in[1..n]$ be a bit position  and suppose that the distribution of 1s in  $H_i$ follows a binomial law with $k$ tries and success probability $\tau_i$. Then $$\Pr\left[\operatorname{sig}_{\varepsilon}\left(\tau_{i}^{(t)}, H_{i}\right) \ne \textsc{stay}\right] \le n^{-\varepsilon/3}\log_2 k,$$ that is, $\tau_i$ changes with a probability  of at most   $n^{-\varepsilon/3}\log_2 k$.
\end{lemma}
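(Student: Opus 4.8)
The plan is to fix the bit position $i$ and work under the stated no-bias hypothesis, which makes the bits recorded in $H_i$ independent with each equal to $1$ with probability $\tau_i$. Consequently, for every scale $m$ the count $\|H_i[2^m]\|_1$ is distributed as $\Bin(2^m,\tau_i)$ and $\|H_i[2^m]\|_0$ as $\Bin(2^m,1-\tau_i)$. Since $\operatorname{sig}_\varepsilon(\tau_i,H_i)\neq\textsc{stay}$ exactly means that the \textsc{up} or \textsc{down} threshold is crossed for some admissible $m$, and only scales with $2^m\le k$ (that is, roughly $m\le\log_2 k$) are relevant for a history of $k$ bits, I would take a union bound over these at most $\log_2 k$ scales and argue that each single scale and direction triggers a false significance with probability at most $n^{-\varepsilon/3}$.

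For a fixed scale and direction, let $\mu$ denote the mean of the relevant count ($\mu=2^m\tau_i$ for \textsc{up}, $\mu=2^m(1-\tau_i)$ for \textsc{down}) and recall that the threshold asks for a deviation of exactly $s(\varepsilon,\mu)=\varepsilon\max\{\sqrt{\mu\log n},\log n\}$ above $\mu$; in particular $s\ge\varepsilon\log n$ always. I would bound the crossing probability by a Chernoff estimate, distinguishing the two deviation regimes of $\delta:=s/\mu$. If $\delta\le 1$, then $\mu\ge s\ge\varepsilon\log n\ge\log n$ forces $s=\varepsilon\sqrt{\mu\log n}$, so the moderate-deviation exponent $s^2/(3\mu)=\tfrac{\varepsilon^2}{3}\log n$ gives a tail of at most $n^{-\varepsilon^2/3}\le n^{-\varepsilon/3}$ (using $\varepsilon\ge1$). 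If $\delta>1$, then the large-deviation exponent $\delta\mu/3=s/3\ge\tfrac{\varepsilon}{3}\log n$ gives a tail of at most $n^{-\varepsilon/3}$. A union bound over the at most $\log_2 k$ admissible scales, together with the \textsc{up}/\textsc{down} directions, then yields the claimed $n^{-\varepsilon/3}\log_2 k$.

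The step I expect to be the main obstacle is matching the available concentration inequality to each regime, since the threshold $s(\varepsilon,\mu)$ is calibrated precisely so that both regimes succeed. Writing $q\in\{\tau_i,1-\tau_i\}$ for the base probability of the firing direction (always at most $\tfrac12$, so $\tfrac12\mu\le\sigma^2=\mu(1-q)\le\mu$), the variance-based bound of Theorem~\ref{chern} is valid only for $\lambda<\sigma^2$, that is, only in the deeply moderate range $\delta<1-q$, where the $\sqrt{\mu\log n}$ part of $s$ yields the exponent $\tfrac{\varepsilon^2}{3}\log n$. On the remaining scales the requested deviation is comparable to or larger than the variance, Theorem~\ref{chern} no longer applies, and I would instead invoke the standard multiplicative Chernoff bound $\Pr[X\ge(1+\delta)\mu]\le e^{-\min\{\delta,\delta^2\}\mu/3}$; since $s\ge\varepsilon\log n$ always, this keeps the exponent at least $s/3\ge\tfrac{\varepsilon}{3}\log n$ whenever $\delta\ge1$, and the $\log n$ floor in $s(\varepsilon,\mu)$ is exactly what guarantees this in the small-sample scales. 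Verifying that these two forms together cover every scale, and that the union bound over the at most $\log_2 k$ scales and the two directions loses no more than the stated factor, is the technical heart of the argument.
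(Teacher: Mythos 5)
The paper does not prove this statement at all: Lemma~\ref{DoeK} is imported verbatim from the cited source \cite{DoerrK20tec} (it is Lemma~2 there), so there is no in-paper proof to compare your attempt against. Judged on its own merits, your reconstruction is correct in substance and is the standard argument for this lemma: under the no-bias hypothesis each window count $\left\|H_i\left[2^m\right]\right\|_1$ is $\Bin(2^m,\tau_i)$ (and $\left\|H_i\left[2^m\right]\right\|_0$ is $\Bin(2^m,1-\tau_i)$); the surplus $s(\varepsilon,\mu)=\varepsilon\max\{\sqrt{\mu\log n},\log n\}$ is calibrated exactly so that the multiplicative Chernoff bound $\Pr[X\ge(1+\delta)\mu]\le e^{-\min\{\delta,\delta^2\}\mu/3}$ yields an exponent of at least $\tfrac13\min\{\varepsilon^2,\varepsilon\}\log n=\tfrac{\varepsilon}{3}\log n$ for $\varepsilon\ge 1$, hence a per-scale, per-direction failure probability of at most $n^{-\varepsilon/3}$; a union bound over the dyadic scales finishes. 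Your observation that the paper's Theorem~\ref{chern} (which requires $\lambda<\sigma^2$) cannot cover the large-deviation scales, and must be replaced by the multiplicative form, is exactly the right technical point, and your two-regime case split ($\delta\le 1$ forcing $s=\varepsilon\sqrt{\mu\log n}$, $\delta>1$ falling back on $s\ge\varepsilon\log n$) is sound.

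Two bookkeeping items in your union bound remain loose, and you flag them yourself. First, the number of admissible scales is $\lfloor\log_2 k\rfloor+1$ rather than $\log_2 k$; this is repaired by noting that a scale with $2^m\le\log n$ can never fire in either direction, since the required surplus $s\ge\varepsilon\log n$ exceeds the room $2^m(1-p)$ (for \textsc{up}) or $2^m p$ (for \textsc{down}) left in a window of length $2^m$. Second, when $\tau_i=\tfrac12$ both \textsc{up} and \textsc{down} are admissible, so summing over scales and directions nominally gives $2n^{-\varepsilon/3}$ per scale; the disjointness of the two events at a fixed scale does not by itself remove this factor, and recovering the literal constant $1$ of the statement would need a slightly sharper per-scale tail estimate. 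Neither point is a flaw in the approach: every use of the lemma in the paper (Lemma~\ref{freqdecr}, Theorem~\ref{sigtime}, with $\varepsilon>6$ and asymptotic bounds) is insensitive to such constant factors.
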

The preceding lemma readily  implies  the following corollary.
\begin{corollary}\label{DoeKcor}
Consider the sig-cGA   with  $\varepsilon\geq1$. Let  $i\in[1..n]$ be a bit position  and suppose that the distribution of 1s in  $H_i$ follows a binomial law with $k$ tries and a success probability of at least $\tau_i$. Then $\tau_i$ decreases in an iteration with a probability  of at most   $n^{-\varepsilon/3}\log_2 k$.
\end{corollary}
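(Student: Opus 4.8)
The plan is to derive the corollary from Lemma~\ref{DoeK} by a monotonicity argument: a decrease of $\tau_i$ is triggered precisely by the \textsc{down} outcome of $\operatorname{sig}_\varepsilon$, and this outcome can only become less likely when the success probability of the sampled history bits is raised. Concretely, I would first record that, by lines~\ref{lin3}--\ref{lin4} of Algorithm~\ref{sigcga}, the frequency $\tau_i$ decreases in an iteration if and only if $\operatorname{sig}_\varepsilon(\tau_i^{(t)},H_i)=\textsc{down}$, and that, writing $p:=\tau_i^{(t)}$, this event is $\{\exists m\in\mathbb{N}:\ \|H[2^m]\|_0\geq 2^m(1-p)+s(\varepsilon,2^m(1-p))\}$. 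The crucial observation is that these thresholds depend only on the \emph{stored} frequency $p$ and not on the actual success probability of the history bits; hence the \textsc{down} event is a monotone event, increasing in the number of zeros of the inspected suffixes.

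Next I would set up the standard monotone coupling of Bernoulli variables. Let $p':=$ the true success probability, with $p'\geq p$ by hypothesis. Drawing one uniform $U_j\in[0,1]$ per history entry and declaring the bit to be $1$ iff $U_j$ lies below the relevant success probability, I obtain on one probability space a history $H^{(p)}$ with success probability $p$ and a history $H^{(p')}$ with success probability $p'$ satisfying $H^{(p')}_j\geq H^{(p)}_j$ for every entry $j$, and therefore $\|H^{(p')}[2^m]\|_0\leq\|H^{(p)}[2^m]\|_0$ for all $m$. Since the \textsc{down} thresholds are the same in both cases (they use only $p$), every realization with $\operatorname{sig}_\varepsilon(p,H^{(p')})=\textsc{down}$ also has $\operatorname{sig}_\varepsilon(p,H^{(p)})=\textsc{down}$. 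This gives
\[
\Pr[\operatorname{sig}_\varepsilon(p,H^{(p')})=\textsc{down}]\leq\Pr[\operatorname{sig}_\varepsilon(p,H^{(p)})=\textsc{down}]\leq\Pr[\operatorname{sig}_\varepsilon(p,H^{(p)})\neq\textsc{stay}],
\]
and the right-hand side is at most $n^{-\varepsilon/3}\log_2 k$ by Lemma~\ref{DoeK}, since under $H^{(p)}$ the history is exactly binomial with success probability $\tau_i$. As the left-hand side is the probability that $\tau_i$ decreases, this is the claim.

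The step I expect to demand the most care is checking that the coupling is genuinely compatible with how $\operatorname{sig}_\varepsilon$ reads the history: one must confirm that the \textsc{down} thresholds are driven solely by the stored frequency $p$ and stay fixed when the true success probability is raised to $p'$, so that the coupled domination $\|H^{(p')}[2^m]\|_0\leq\|H^{(p)}[2^m]\|_0$ transfers the \textsc{down} event in the correct direction across all $m\in[0..\lfloor\log_2 k\rfloor]$. Once this is pinned down, the remaining bookkeeping is routine.
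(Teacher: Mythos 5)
Your proposal is correct and takes essentially the same approach as the paper: the paper gives no explicit proof, asserting only that Lemma~\ref{DoeK} ``readily implies'' the corollary, and your argument is precisely the formalization of that implicit step --- a decrease of $\tau_i$ occurs exactly on the \textsc{down} event, whose thresholds depend only on the stored frequency $p=\tau_i^{(t)}$, so a monotone coupling of the Bernoulli histories reduces the case of true success probability $p'\geq\tau_i$ to the exactly-binomial case handled by the lemma. Your emphasis on checking that the thresholds do not move when the success probability is raised is indeed the one point where care is needed, and your treatment of it is sound.
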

%As a consequence, Doerr and Krejca also proved that   probability of detecting a false significance within any number of iterations can   be bounded (Corollary 3 in \cite{DoerrK20tec}).

\subsubsection{The Sig-cGA Solves the DLB Problem in $O(n\log n)$ Time }\label{5.3}
Now we show that with   probability at least $1-O\left(n^{2-\varepsilon / 3} \log ^{2} n\right)$, the sig-cGA samples the optimum of the $\DLB$ function within $O(n\log n)$ fitness evaluations (Theorem \ref{sigtime}).
First we show that when no  $\tau_{i}$ is set to $\frac{1}{n}$, the probability of adding a 1 to the   history $H_i$ in an iteration is at least $\tau_i$, which will allow us to use Corollary~\ref{DoeKcor}.

\begin{lemma}\label{lem19}
Let $m\in[1..\frac{n}{2}-1]$. Let $(\tau_i)_{i\in[1..n]}\in\{\frac{1}{n},\frac{1}{2},1-\frac{1}{n}\}^n$ be a frequency vector. Consider one iteration of the sig-cGA optimizing the $\DLB$ function. Let $x$ and $y$ be the two individuals sampled according to $(\tau_i)_{i\in[1..n]}$. Then, conditioning on $\{\min\{\DLB(x), \DLB(y)\}\geq 2m\}$, a 1 is saved in $H_{2m+1}$ with probability
\begin{equation}\label{probaincre}
\begin{split}
p(\tau_{2m+1},&\tau_{2m+2})\\
&\vcentcolon=\tau_{2m+1}\left(-\tau_{2m+2}^2+3\tau_{2m+2}+(\tau_{2m+2}^2-3 \tau_{2m+2}+1)\tau_{2m+1} \right).
\end{split}
\end{equation}
%In particular, if $\tau_{2m+2}=\frac{1}{2}$, the above term is equal to $\frac{\tau_i}{4}(5-\tau_i)$.
If $\tau_{2m+1},\tau_{2m+2}\in\{\frac{1}{2},1-\frac{1}{n}\}$, then the above term is bounded from below by   $\tau_{2m+1}$.
If further $\tau_{2m+1}=\frac{1}{2}$ and $\tau_{2m+2}\in\{\frac{1}{2},1-\frac{1}{n}\}$, then the above term is bounded from below by   $\frac{9}{16}$.
\end{lemma}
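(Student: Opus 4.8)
The plan is to reduce the statement to a finite case analysis over the single block $(x_{2m+1},x_{2m+2})$ and $(y_{2m+1},y_{2m+2})$ of the two sampled individuals and then verify the claimed algebraic bounds. First I would observe that conditioning on $\{\min\{\DLB(x),\DLB(y)\}\geq 2m\}$ is equivalent to conditioning on $x_i=y_i=1$ for all $i\in[1..2m]$, since $\DLB(x)\geq 2m$ holds exactly when $x$ has its first $m$ blocks equal to $11$ (fewer leading $11$ blocks would force $\DLB(x)\le 2m-1$). Because all bits are sampled independently, this conditioning leaves the bits in positions $2m+1,2m+2,\dots$ with their original marginal Bernoulli distributions. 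In particular, the fitness of each individual is then governed only by block $m+1$ and the blocks to its right, and the winner's bit $z_{2m+1}$ is what we must analyse.

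Second, I would classify each individual according to its block $m+1$: state $A$ ($11$, fitness $\geq 2m+2$, bit $2m+1$ equal to $1$), state $B$ ($00$, fitness exactly $2m+1$, bit $0$), state $C$ ($01$, fitness exactly $2m$, bit $0$), and state $D$ ($10$, fitness exactly $2m$, bit $1$), with probabilities $P(A)=\tau_{2m+1}\tau_{2m+2}$, $P(B)=(1-\tau_{2m+1})(1-\tau_{2m+2})$, $P(C)=(1-\tau_{2m+1})\tau_{2m+2}$, and $P(D)=\tau_{2m+1}(1-\tau_{2m+2})$. The fitness order $A>B>\{C,D\}$ determines the winner up to the tie between $C$ and $D$, which is broken uniformly at random. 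The key structural facts are: if at least one individual lies in $A$, the winner lies in $A$ and $z_{2m+1}=1$; if neither lies in $A$ but at least one lies in $B$, the winner lies in $B$ and $z_{2m+1}=0$; and if both lie in $\{C,D\}$, then $z_{2m+1}=1$ exactly when the randomly chosen winner is in $D$. Summing these disjoint cases gives
\[
\Pr[z_{2m+1}=1]=\big(2P(A)-P(A)^2\big)+P(D)\big(P(C)+P(D)\big),
\]
and substituting the four probabilities and simplifying yields exactly $p(\tau_{2m+1},\tau_{2m+2})$ as stated.

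Finally, I would establish the two lower bounds by elementary algebra. Writing $p=\tau_{2m+1}$ and $q=\tau_{2m+2}$, the inequality $p(\tau_{2m+1},\tau_{2m+2})\geq p$ is equivalent, after dividing by $p>0$, to $(q^2-3q+1)(p-1)\geq 0$; since $p-1<0$ and, for $q\in\{\tfrac12,1-\tfrac1n\}$, we have $q^2-3q+1<0$ (the quadratic is negative on $(\tfrac{3-\sqrt5}{2},\tfrac{3+\sqrt5}{2})$, an interval containing both admissible values), both factors are negative and the product is nonnegative. For the sharper bound, I would set $p=\tfrac12$ to obtain $p(\tfrac12,q)=\tfrac14(-q^2+3q+1)$, which is increasing in $q$ on $[\tfrac12,1]$ and equals $\tfrac{9}{16}$ at $q=\tfrac12$, hence is at least $\tfrac{9}{16}$ for both admissible values of $q$. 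The main obstacle I anticipate is getting the winner case analysis exactly right, in particular the careful justification of the conditioning and the random tie-breaking between the equally fit states $C$ and $D$; once the displayed decomposition is justified, the remaining steps are a routine polynomial identity and two short quadratic estimates.
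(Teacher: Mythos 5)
Your proposal is correct and follows essentially the same route as the paper's proof: conditioning on $\min\{\DLB(x),\DLB(y)\}\geq 2m$ reduces everything to the four bits of block $m+1$ of the two samples, a winner-based case analysis over the block types $11$, $00$, $01$, $10$ (with random tie-breaking between $01$ and $10$) yields exactly the polynomial $p(\tau_{2m+1},\tau_{2m+2})$, and the two lower bounds follow by elementary quadratic estimates. The only differences are cosmetic: you group the cases symmetrically (at least one individual with block $11$, versus both blocks in $\{01,10\}$) where the paper conditions on $x$'s block type, and you obtain the first bound via the factorization $(\tau_{2m+2}^2-3\tau_{2m+2}+1)(\tau_{2m+1}-1)\geq 0$ where the paper bounds $-\tau_{2m+2}^2+3\tau_{2m+2}\geq \tfrac{5}{4}$ and substitutes $\tau_{2m+1}\leq 1-\tfrac{1}{n}$; both computations verify the same identity and the same inequalities.
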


\begin{proof}
Under the condition $\{\min\{\DLB(x), \DLB(y)\}\geq 2m\}$, the bit value saved in $H_{2m+1}$ is completely determined by the  bits $x_{2m+1}$, $x_{2m+2}$, $y_{2m+1}$, and $y_{2m+2}$. 
In the following we  assume $m=0$ for the ease of presentation, but the general result can be obtained in exactly the same way.
We calculate 
\begin{align*}
    \Pr[&\text{1 is saved in } H_{1}]\\
    ={}& \Pr[x_{1}=x_{2}=1]\\ &+\Pr[x_{1}=1,x_{2}=0]\left(\Pr[y_{1}=1]+\tfrac{1}{2}\Pr[y_{1}=0,y_{2}=1]\right)\\
    &+\Pr[x_{1}=0,x_{2}=1]\left(\Pr[y_{1}=y_{2}=1]+\tfrac{1}{2}\Pr[y_{1}=1,y_{2}=0]\right)\\
    &+\Pr[x_{1}=0,x_{2}=0]\Pr[y_{1}=y_{2}=1] \\
    ={}&\tau_{1}\tau_{2}+\tau_{1}\left(1-\tau_{2}\right)\left(\tau_{1}+\tfrac{1}{2}\left(1-\tau_{1}\right)\tau_{2}\right)\\
    &+\left(1-\tau_{1}\right)\tau_{2}\left(\tau_{1}\tau_{2}+\tfrac{1}{2}\tau_{1}\left(1-\tau_{2}\right)\right) \\
    &+\left(1-\tau_{1}\right)\left(1-\tau_{2}\right)\tau_{1}\tau_{2}\\
    ={}&\tau_{1}\left(-\tau_{2}^2+3\tau_{2}+\left(\tau_{2}^2-3 \tau_{2}+1\right)\tau_{1} \right).
\end{align*}
Now if $\tau_{1},\tau_{2}\in\{\frac{1}{2},1-\frac{1}{n}\}$, then 
$$-\tau_{2}^2+3\tau_{2}
    = -(\tau_{2}-\tfrac{3}{2})^2+\tfrac{9}{4}\geq-(\tfrac{1}{2}-\tfrac{3}{2})^2+\tfrac{9}{4}=\tfrac{5}{4}.$$
Thus we have $\tau_{2}^2-3 \tau_{2}+1 \leq -\tfrac{1}{4}<0$ and 
\begin{align*}
   \frac{p(\tau_{1},\tau_{2})}{\tau_{1}}&= -\tau_{2}^2+3\tau_{2}+(\tau_{2}^2-3 \tau_{2}+1)\tau_{1}\\
    &\geq -\tau_{2}^2+3\tau_{2}+(\tau_{2}^2-3\tau_{2}+1)\left(1-\tfrac{1}{n}\right)\\
    &=  -\tfrac{1}{n}(\tau_{2}^2-3\tau_{2})+1-\tfrac{1}{n}\\
    &\geq -\tfrac{5}{4n}+1-\tfrac{1}{n}=1+\tfrac{1}{4n}>1.
\end{align*}
Supposing further that $\tau_{1}=\frac{1}{2}$, we have 
\begin{align*}
    \frac{p(\tau_{1},\tau_{2})}{\tau_{1}} &= \tfrac{1}{2}(-\tau_{2}^2+3\tau_{2}+1) 
    \geq \tfrac{1}{2}(\tfrac{5}{4}+1) = \tfrac{9}{8}.
    \qedhere
\end{align*}
\end{proof}

\begin{corollary}\label{coro20}
If in an iteration of the sig-cGA, the frequency vector is in $\{ \frac{1}{2},1-\frac{1}{n}\}^n$, then for any $i\in[1..n]$, a 1 is saved in $H_{i}$ with probability at least $\tau_{i}$.
\end{corollary}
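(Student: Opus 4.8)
The plan is to deduce the corollary from Lemma~\ref{lem19} by a law-of-total-probability argument that isolates the only regime in which selection can bias a bit, and to cover the second bit of each block by a symmetry of $\DLB$. Fix a position $i$ and write $i=2m+1$ or $i=2m+2$ for the corresponding $m\in[0..\frac{n}{2}-1]$; let $z$ denote the winner of the two samples $x,y$, so that ``a $1$ is saved in $H_i$'' means $z_i=1$. I would condition on the event $A:=\{\min\{\DLB(x),\DLB(y)\}\ge 2m\}$ and on its complement.

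On $A$, this is exactly the situation analysed in Lemma~\ref{lem19}: for $i=2m+1$ we get $\Pr[z_{2m+1}=1\mid A]=p(\tau_{2m+1},\tau_{2m+2})$, and since $\tau_{2m+1},\tau_{2m+2}\in\{\frac12,1-\frac1n\}$ the lemma bounds this from below by $\tau_{2m+1}$. (The boundary case $m=0$ is covered by the same computation, as noted in the proof of Lemma~\ref{lem19}.) The crux is to show that on $A^c$ there is \emph{no} bias at all, i.e.\ $\Pr[z_{2m+1}=1\mid A^c]=\tau_{2m+1}$. The point is that $A^c=\{\min\{\DLB(x),\DLB(y)\}<2m\}$ is determined solely by the bits in blocks $0,\dots,m-1$ (a $\DLB$ value below $2m$ forces some earlier block to be non-$11$), hence is independent of the block-$m$ bits. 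I would then split $A^c$ into the subcase where both individuals have $\DLB<2m$, in which the winner is decided by the earlier blocks and its bit $z_{2m+1}$ lies strictly beyond its critical block and is therefore a free sample with frequency $\tau_{2m+1}$; and the subcase where exactly one individual has $\DLB\ge 2m$, in which that individual wins deterministically and conditioning on its $\DLB\ge2m$ only fixes its blocks $0,\dots,m-1$ to $11$, again leaving bit $2m+1$ distributed with frequency $\tau_{2m+1}$. In both subcases the conditional probability is exactly $\tau_{2m+1}$, so combining gives $\Pr[z_{2m+1}=1]=\Pr[A]\,p(\tau_{2m+1},\tau_{2m+2})+\Pr[A^c]\,\tau_{2m+1}\ge\tau_{2m+1}$.

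For an even position $i=2m+2$ I would use that $\DLB$ is invariant under exchanging the two bits inside a single block: it only distinguishes $11$ from a critical block and treats $01$ and $10$ identically. Relabelling the two bits of block $m$ therefore turns the whole argument above into the same statement with the roles of $\tau_{2m+1}$ and $\tau_{2m+2}$ swapped, so that $\Pr[z_{2m+2}=1\mid A]=p(\tau_{2m+2},\tau_{2m+1})\ge\tau_{2m+2}$ by the symmetric reading of the bound in Lemma~\ref{lem19}, while the $A^c$ analysis yields $\tau_{2m+2}$ verbatim. Hence $\Pr[z_{2m+2}=1]\ge\tau_{2m+2}$, completing the claim for all $i$.

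I expect the main obstacle to be the $A^c$ analysis, that is, arguing cleanly that once selection is decided before block $m$ is reached, the winner's block-$m$ bit carries no bias. The genuinely biased regime---where both individuals agree up to block $m$ and the deceptive $00$-versus-$01/10$ comparison matters---is already the content of Lemma~\ref{lem19}, so the corollary reduces to removing the conditioning and invoking the within-block symmetry.
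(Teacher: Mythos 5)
Your proposal is correct and follows essentially the same route as the paper's own proof: condition on the event $\{\min\{\DLB(x),\DLB(y)\}\geq 2m\}$ and invoke Lemma~\ref{lem19} there, observe that on the complementary event bit $2m+1$ is irrelevant to selection so a $1$ is saved with probability exactly $\tau_{2m+1}$, and handle even positions by the within-block symmetry of \DLB. Your write-up merely spells out in more detail (via the two subcases) what the paper compresses into the phrase ``because bit $2m+1$ is not relevant for the selection,'' and that extra detail is sound.
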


\begin{proof}
Let $x$ and $y$ be the two individuals sampled in this iteration. Let $i=2m+1$ for some $m\in [1..\tfrac{n}{2}-1]$.
Then the preceding lemma shows that conditioning on $\{\min\{\DLB(x), \DLB(y)\}\geq 2m\}$,
a 1 is saved in $H_{2m+1}$ with probability  at least $\tau_{2m+1}$.
Under the condition $\{\min\{\DLB(x), \DLB(y)\}< 2m\}$, 
the probability that a 1 is saved in $H_{2m+1}$ is equal to the probability $\tau_{2m+1}$ that a 1 is sampled in this position, because bit $2m+1$ is not relevant for the selection. Combining the two cases, the claim follows for $i=2m+1$. The   symmetry between $i=2m+1$ and $i=2m+2$   concludes the proof.
\end{proof}
\begin{lemma}\label{freqdecr}
Consider the sig-cGA with $\eps>3$.
The probability that during the first $k$ iterations at least one  frequency decreases  is at most $kn^{1-\varepsilon/3}\log_2 k$.
\end{lemma}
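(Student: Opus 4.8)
The plan is to bound the failure probability by a union bound over the $n$ bit positions and the first $k$ iterations, using Corollary~\ref{DoeKcor} to control the probability that any single frequency is set to \textsc{down} in a given iteration. The decisive structural observation is that a frequency can only be lowered through a \textsc{down} signal, so \emph{as long as no frequency has yet decreased}, every frequency has only moved up or stayed and hence lies in $\{\frac12, 1-\frac1n\}$. This keeps us exactly in the regime covered by Corollary~\ref{coro20}.

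Concretely, I would decompose the bad event ``some frequency decreases within the first $k$ iterations'' as the disjoint union over $t\in[1..k]$ of the events $E_t$ that iteration $t$ is the \emph{first} in which a frequency decreases. On $E_t$, no decrease occurred in iterations $1,\dots,t-1$, so by the observation above the frequency vector stayed in $\{\frac12,1-\frac1n\}^n$ throughout these iterations. Corollary~\ref{coro20} then guarantees that, in each such iteration and for each position $i$, a $1$ is appended to $H_i$ with probability at least $\tau_i$, independently across iterations since the two offspring are resampled afresh each time. Consequently the number of ones stored in any history $H_i$ stochastically dominates a binomial variable with success probability at least the current $\tau_i$, which is precisely the hypothesis of Corollary~\ref{DoeKcor} (a higher frequency of ones can only make a \textsc{down} detection less likely). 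Note that the histories are emptied whenever a frequency changes (lines~\ref{lin3}--\ref{lin4} of Algorithm~\ref{sigcga}), so within each relevant window the matching success probability is indeed the current value of $\tau_i$.

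It then remains to assemble the bounds. Since at most one bit is appended to each history per iteration, every history has length at most $k$ during the first $k$ iterations; hence Corollary~\ref{DoeKcor} bounds the probability that a fixed $\tau_i$ decreases in iteration $t$ by $n^{-\varepsilon/3}\log_2 k$. A union bound over the $n$ positions gives $\Pr[E_t]\le n\cdot n^{-\varepsilon/3}\log_2 k = n^{1-\varepsilon/3}\log_2 k$, and summing over $t\in[1..k]$ yields the claimed bound $kn^{1-\varepsilon/3}\log_2 k$. The hypothesis $\varepsilon>3$ enters here only through the requirement $\varepsilon\ge 1$ of Corollary~\ref{DoeKcor} (and ensures the resulting estimate is a nontrivial probability bound).

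I expect the main obstacle to be making the conditioning rigorous: Corollary~\ref{DoeKcor} presumes the ones in $H_i$ follow a genuine binomial law with success probability at least $\tau_i$, whereas in an actual run the per-iteration probability of storing a $1$ is only bounded \emph{below} by $\tau_i$ and is taken conditional on the history accrued so far. Handling this cleanly requires either an explicit stochastic-domination/coupling argument or a careful iteration-by-iteration conditioning, ensuring in particular that conditioning on ``no decrease so far'' does not distort the distribution of the freshly sampled offspring in the current iteration.
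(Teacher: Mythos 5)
Your proposal is correct and takes essentially the same approach as the paper: both condition on no decrease having occurred so far (so the frequency vector remains in $\{\tfrac12,1-\tfrac1n\}^n$), apply Corollary~\ref{coro20} to verify the hypothesis of Corollary~\ref{DoeKcor}, and combine via union bounds over the $n$ positions and the $k$ iterations. The only difference is bookkeeping---you sum over the disjoint ``first decrease at iteration $t$'' events, whereas the paper runs the recursion $p^{(t+1)}\ge p^{(t)}\bigl(1-n^{1-\eps/3}\log_2 k\bigr)$ and finishes with Bernoulli's inequality---and the conditioning/stochastic-domination subtlety you flag at the end is left at the same informal level in the paper's own proof.
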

\begin{proof}
Consider the event that in the first $t$ iterations, no frequency has ever been decreased. Denote its probability by $p^{(t)}$. 

Conditioning on this event, Corollary~\ref{coro20} can be applied to verify  the hypothesis of 
  Corollary~\ref{DoeKcor}, which implies 
that the conditional probability that no frequency decreases in the $(t+1)$-th iteration is at least 
$1-n^{1-\eps/3}\log_2 k$.  Therefore $p^{(t+1)}\geq p^{(t)}(1-n^{1-\eps/3}\log_2 k)$. By induction, $p^{(k)}\geq (1-n^{1-\eps/3}\log_2 k)^{k}\geq 1-kn^{1-\eps/3}\log_2 k$.
\end{proof}

\begin{theorem}\label{sigtime}
The runtime of the sig-cGA  with $\eps>6$ on $\DLB$ is $O(n\log n)$ with   probability at least $1-O\left(n^{2-\varepsilon / 3} \log ^{2} n\right)$.
\end{theorem}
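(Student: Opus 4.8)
The plan is to run the sig-cGA for a fixed horizon of $k := Cn\log n$ iterations, where $C$ is a large constant fixed at the end (so that the $2k=O(n\log n)$ fitness evaluations match the claimed runtime), and to bound separately the two ways the algorithm can fail within this horizon: a frequency may move in the wrong direction, or progress toward the optimum may be too slow.

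First I would dispose of the wrong-direction failure. Let $E_1$ be the event that no frequency decreases during the first $k$ iterations. Since $\eps>6>3$, Lemma~\ref{freqdecr} gives $\Pr[\neg E_1]\le k\,n^{1-\eps/3}\log_2 k = O(n^{2-\eps/3}\log^2 n)$, which already equals the target failure bound. On $E_1$ every frequency stays in $\{\tfrac12,1-\tfrac1n\}$ throughout (it starts at $\tfrac12$ and the only remaining move is up to $1-\tfrac1n$), so Corollary~\ref{coro20} and the hypotheses of Lemma~\ref{lem19} hold in every iteration; the rest of the analysis takes place on $E_1$.

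Second I would show that, on $E_1$, the frequencies are pushed to $1-\tfrac1n$ block by block, left to right. Let $T_m$ be the first time all of positions $1,\dots,2m$ carry frequency $1-\tfrac1n$. The quantitative engine is that once positions $1,\dots,2m$ are fixed at $1-\tfrac1n$, both sampled individuals have $\DLB$-value at least $2m$ with probability $(1-\tfrac1n)^{4m}\ge c_0$ for an absolute constant $c_0>0$ (this uses $m\le\tfrac n2-1$). Combined with the conditional lower bound $\tfrac{9}{16}$ of Lemma~\ref{lem19}, this yields that a $1$ is stored in $H_{2m+1}$ (and, by symmetry, in $H_{2m+2}$) with probability at least $\tfrac12+\tfrac{c_0}{16}=:\tfrac12+c$ in every iteration after $T_m$, uniformly over $\tau_{2m+2}\in\{\tfrac12,1-\tfrac1n\}$ and conditionally on the past. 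The core step is then a detection lemma: taking the smallest window length $2^j\ge A\log n$ for a constant $A=A(c,\eps)$, at time $T_m+2^j$ the window $H[2^j]$ lies entirely in the biased phase, its expected number of ones $(\tfrac12+c)2^j$ exceeds the \textsc{up}-threshold $2^{j-1}+\eps\max\{\sqrt{2^{j-1}\log n},\log n\}$ by $\Omega(\log n)$ once $A$ is large, and the variance-based Chernoff bound (Theorem~\ref{chern}) makes the shortfall probability at most $n^{-\Omega(c^2A)}$. Since $E_1$ forbids any \textsc{down} event, the history $H_{2m+1}$ is never emptied before this \textsc{up} is triggered, and any bits accumulated before $T_m$ only increase the one-counts and hence help. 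Thus each block is learned within $2^j=O(\log n)$ iterations of $T_m$, except with probability $n^{-\Omega(c^2A)}$.

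A union bound over the $\tfrac n2$ blocks then gives $T_{n/2}\le An\log n$ except with probability $n^{1-\Omega(c^2A)}$, after which all frequencies equal $1-\tfrac1n$ and each sample is the all-ones optimum with probability $(1-\tfrac1n)^n\ge\tfrac1{2e}$, so the optimum is evaluated within a further $O(\log n)$ iterations except with probability $n^{-\Omega(1)}$. Choosing $C>A$ so that these phases fit inside $k$ iterations, and $A$ large enough (it depends only on $c$ and the fixed $\eps$) that $n^{1-\Omega(c^2A)}=O(n^{2-\eps/3}\log^2 n)$, the overall failure probability is dominated by $\Pr[\neg E_1]$, giving the $O(n\log n)$ runtime with probability $1-O(n^{2-\eps/3}\log^2 n)$. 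The main obstacle is the detection lemma: one must verify that the constant selection bias $c$ beats the $\eps$-scaled threshold $s(\eps,\cdot)$ while keeping the window length at $\Theta(\log n)$, since a longer window would spoil the $O(n\log n)$ runtime and a shorter one would fail the threshold; the secondary technical point is the bookkeeping that keeps the non-decrease event of Lemma~\ref{freqdecr} as the dominant contribution to the failure probability.
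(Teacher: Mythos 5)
Your proposal is correct and follows essentially the same route as the paper's proof: conditioning on no frequency decrease via Lemma~\ref{freqdecr} (the dominant failure term), using Lemma~\ref{lem19} together with the constant probability that both offspring reach the critical block to get a constant selection bias of $\tfrac12 + \Theta(1)$, applying the variance-based Chernoff bound to detect the \textsc{up} significance within a $\Theta(\log n)$ window, taking a union bound over blocks, and finishing with the $O(\log n)$ sampling phase once all frequencies are at $1-\tfrac1n$. The only differences are presentational (explicit stopping times $T_m$ and a dyadic window with a free constant $A$, versus the paper's explicit constants), not substantive.
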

\begin{proof}
By taking $k=O(n\log n)$ in Lemma \ref{freqdecr}, we obtain that with probability   at least $1-O\left(n^{2-\varepsilon / 3} \log ^{2} n\right)$, no frequency decreases in the first $O(n \log n)$ iterations.  We condition on this event in what follows.

During the runtime of the sig-cGA, a  block $(x_{2m+1},x_{2m+2})$ is called \emph{critical} if $m$ is such that $\tau_i=1-\frac{1}{n}$ for all $i\in[1..2m]$,  and that  $\tau_{2m+1}=\frac{1}{2}$ or $\tau_{2m+2}=\frac{1}{2}$. 
Suppose that such a critical block is created with $\tau_{2m+1}=\frac{1}{2}$.  We prove that the history of position $2m+1$   saves 1s significantly more often than 0s and hence  the frequency $\tau_{2m+1}$ is set to $1-\frac{1}{n}$ after $O(\log n)$ iterations. 

Let $G$ denote the event that we save a 1 in $H_{2m+1}$ in one iteration. We now calculate a lower bound of $G$  under the condition  that no frequency is decreased   within first $O(n\log n)$ iterations. 

Let $A$ denote the event that at least one of the two individuals sampled in an iteration has a \DLB value smaller than $2m$. When $A$ happens, a 1 is saved in $H_{2m+1}$ with probability $\tau_{2m+1}=\frac{1}{2}$. If $A$ does not happen, then  the probability of saving a 1 to $H_{2m+1}$ is equal to $p(\tau_{2m+1},\tau_{2m+2})$ defined in (\ref{probaincre}). Hence we can decompose $\Pr[G]$ as
$$
\operatorname{Pr}[G]=\operatorname{Pr}[A]  \tau_{2m+1}+\operatorname{Pr}[\bar{A}]  p(\tau_{2m+1},\tau_{2m+2}).
$$
Since $\bar{A}$ is equivalent to both individuals have 1s in the first $2m$ bit positions,  we have $\Pr[\bar{A}]=\left(\left(1-\frac{1}{n}\right)^{2m}\right)^2\geq \left(\left(1-\frac{1}{n}\right)^{n-2}\right)^2\geq\left( e^{-1}\right)^2=e^{-2}$. With Lemma \ref{lem19}, this implies
\begin{align*}
    \Pr[G]\geq& (1-\Pr[\bar{A}])\tau_{2m+1}+\Pr[\bar{A}]\tfrac{9}{8}\tau_{2m+1} \\
    \geq & \left(1+\tfrac{1}{8}\Pr[\bar{A}]\right)\tau_{2m+1} = (1+\tfrac{1}{8}e^{-2})\tau_{2m+1},
\end{align*}
since we assumed  that $\tau_{2m+1}=\frac{1}{2}$.

With this lower bound on $\Pr[G]$ we bound the probability that a significance of 1s in $H_{2m+1}$ is detected within $k=O(\log n)$ iterations. To this end,
we consider the process $X \sim \operatorname{Bin}\left(k,\left(1+e^{-2} / 8\right) \tau_{2m+1}\right)$, which is stochastically dominated by the actual process of saving 1s at position $H_{2m+1}$. It follows from the definition that
\begin{align*}\operatorname{Pr}&\left[X\leq k \tau_{2m+1}\right.\left.+s\left(\varepsilon, k \tau_{2m+1}\right)\right]  \\
 &= \operatorname{Pr}\left[X \leq E[X]-\left(\tfrac{k}{8} e^{-2} \tau_{2m+1}-s\left(\varepsilon, k \tau_{2m+1}\right)\right)\right]. \end{align*}
For $k>64 e^4\eps^2\tau_{2m+1}^{-1}\log n=\Theta(\log n)$ we have $s\left(\varepsilon, k \tau_{2m+1}\right)=\eps\sqrt{k\tau_{2m+1}\log n}$ and thus $\frac{k}{8} e^{-2} \tau_{2m+1}-s\left(\varepsilon, k \tau_{2m+1}\right)>0$. Let $c\vcentcolon=64 e^4\eps^2\tau_{2m+1}^{-1}$ and consider  $k>4c \log n$ iterations.
We have  
$$s\left(\varepsilon, k \tau_{2m+1}\right)=\eps\sqrt{k\tau_{2m+1}\log n}<\eps k\sqrt{\frac{\tau_{2m+1}}{4c}}=\frac{\tau_{2m+1}k}{16e^2},
$$ 
which implies
$$
\frac{k}{8} e^{-2} \tau_{2m+1}-s\left(\varepsilon, k \tau_{2m+1}\right)>\frac{\tau_{2m+1}k}{16e^2}=\vcentcolon \lambda,
$$
and hence
$$
 \operatorname{Pr}\left[X \leq E[X]-\left(\tfrac{k}{8} e^{-2} \tau_{2m+1}-s\left(\varepsilon, k \tau_{2m+1}\right)\right)\right]
 \leq \Pr\left[X \leq E[X]-\lambda\right].
 $$
 We use a  Chernoff inequality to  calculate an upper bound for this probability. By definition  $\operatorname{Var}[X]=k\left(1+e^{-2} / 8\right) \tau_{2m+1}\left(1-\left(1+e^{-2} / 8\right)\tau_{2m+1}\right)\geq\lambda  $. It is straightforward that 
 \begin{align*} 
 \frac{ \lambda^2}{ \Var [X]} >  \frac{ \lambda^2}{ k\tau_{2m+1}} 
    =    \frac{ k\tau_{2m+1}}{256e^4} >  \frac{ c\tau_{2m+1}\log n}{64e^4} =\eps^2 \log n.
\end{align*}
Now the Chernoff inequality (Theorem \ref{chern}) implies 
\begin{align*}
    \Pr[X& \leq E[X]-\lambda ]\leq \exp\left(\frac{-\lambda^2}{3\Var [X]}\right)< \exp\left(-\frac{\eps^2}{3 }\log n\right) = n^{-\frac{\eps^2}{3}}.  
\end{align*}
We have thus proven that with probability at most $n^{-\frac{\eps^2}{3}}$, $\tau_{2m+1}$ is not set to $1-\tfrac{1}{n}$ after $O(k)=O(4c\log n)=O(\log n)$ iterations. Due to the symmetry between positions $2m+1$ and $2m+2$, the same holds true for bit positions $2m+2$, $m\in[1..\frac{n}{2}-1]$. Hence with probability at most $ n^{1-\frac{\eps^2}{3}}$,  for some $i\in[1..n]$, $ \tau_{i} $  is not set to $1-\frac{1}{n}$ within the first   $O(n \log n)$ iterations, that is, with probability at least  $1-n^{1-\frac{\eps^2}{3}}$, every $\tau_i$, $i\in[1..n]$, is set to $1-\tfrac{1}{n}$ within the first $O(nk)=O(n\log n)$ iterations.

Once this is achieved, the probability  to sample the optimum  $\mathbf{1}=(1,\dots,1) $ from $(\tau_i)_{i\in[1..n]}$ is equal to $\left(1-\frac{1}{n}\right)^n>\frac{1}{2e}$. 
Let $\eps'>\tfrac{\eps}{3}-2$ be a constant.
Since we condition on no frequency dropping, the sig-cGA samples  $\mathbf{1} $ at least once within $\eps'\frac{1}{2}\left(\log \left(\frac{2e}{2e-1}\right)\right)^{-1}\log n =O(\log n )$ iterations with a probability of at least 
$$1-\left(1-\frac{1}{2e}\right)^{\eps'\left(\log \left(\frac{2e}{2e-1}\right)\right)^{-1}\log n}=1-n^{-\eps'}.$$

Recall that   no frequency decreases within the first $O(n \log n)$ iterations  with probability at least $1-O\left(n^{2-\varepsilon / 3} \log ^{2} n\right)$. Combining the preceding results, we have proven that the optimum is sampled  within $O(n \log n) $ iterations
  with probability at least
$$\left(1-O\left(n^{2-\varepsilon / 3} \log ^{2} n\right)\right)\left(1-n^{1-\frac{\eps^2}{3}}\right)(1-n^{-\eps'})=1-O\left(n^{2-\varepsilon / 3} \log ^{2} n\right)$$ 
because of the definition of $\eps'$ and the inequality $2-\frac{\eps}{3}>1-\frac{\eps^2}{3}$.
\end{proof}

A reviewer of this work noted that also the \emph{significant bit voting algorithm} proposed in~\cite{RoweA19} could have an $O(n \log n)$ runtime on the \DLB problem. This algorithm, with population size $\mu \in \N$, works as follows. Initially, all bits are undecided. In each of the $n$ iterations, exactly for one bit a value is decided. This is done via the following procedure. (i)~Independently, $\mu$~individuals are sampled. The undecided bits are sampled uniformly at random, the decided ones deterministically take the decided value. (ii)~From this sample, $\mu/3$ individuals with highest fitness are selected. (iii)~Among these, an undecided bit position $i \in [1..n]$ is selected in which the number of zeros and ones differs maximally. (iv)~The $i$-th bit is declared decided, and this is with the more frequently occurring bit value in the selected subpopulation. 

It is fairly easy to prove that when the population size $\mu$ is at least logarithmic (with sufficiently large implicit constant), then with high probability in a run of this algorithm on \DLB the bits are decided on order of increasing block index, and all decided bits receive the value one. This shows that this algorithm with high probability optimizes \DLB in time $O(n \log n)$.

For two reasons we find the sig-cGA a more convincing example of a natural randomized search heuristic. On the one hand, the sig-cGA surely converges, that is, eventually outputs the right solution. On the other hand, more critically, we feel that the significant bit voting algorithm is overfitted to problems that are easily optimized by sequentially fixing one bit value. In fact, we have strong doubts that this algorithm with a logarithmic population size can optimize the \onemax benchmark with a probability higher than $o(1)$, and in fact, higher than $1/p$ where $p$ is any polynomial in $n$. The reason is that in the first iteration, the selected individuals all have at most $\frac n2 + O(\sqrt{n \log n})$ one-bits. Hence even in the selected population, a fixed bit is one with probability $\frac 12 + O(n^{-1/2} \sqrt{\log n})$ only. Hence with a logarithmic population size $\mu$, we expect to have both $\mu/6 \pm o(1)$ zeroes and ones in a fixed bit position (in the selected subpopulation of size $\mu/3$). This gives little hope that the small advantage of the value one is safely detected. We sketched this argument for the first iteration, but the same argument applies to all other iterations. For this reason, it appears very likely that at least one bit-value is set wrongly in the first $n/2$ iterations. We note that this is not a formal proof since we have not taken into account the (light) stochastic dependencies among the bit positions. We are nevertheless pessimistic that this algorithm has a good performance on problems like \onemax. We note the only mathematical runtime analysis so far for this algorithm considers exclusively the \leadingones problem.

\section{Experiments}
\label{sec:exp}
To see how the algorithms compare on concrete problem sizes, we ran the \oea, UMDA, Metropolis algorithm, and sig-cGA on the DLB function for $n=40,80,\dots,200$. 
%\merk{do we need this?: Maximal iteration budgets: $n^{10}$.} 
We used the standard mutation rate $p = 1/n$ for the \oea, the population sizes $\mu=3n\ln n$ and $\lambda=12\mu$ for the UMDA (as in~\cite{DoerrK21ecj}), and the temperature parameter $\alpha=3$ (greater than $\sqrt 2+1$ as suggested from Theorem~\ref{MA_runtime}) for the Metropolis algorithm. For the sig-cGA, we took $\epsilon=2.5$ since we observed that this was enough to prevent frequencies from moving to an unwanted value, which only happened one time for $n=40$. Being still very slow, for this algorithm we could only perform 10 runs for problem sizes 40, 80, 120, and 160.
%and 120, and 7 runs for problem size~160.

Our experiments clearly show an excellent performance of the Metropolis algorithm, which was suggested as an interesting algorithm by our theoretical analysis. The two EDAs, however, perform worse than what the asymptotic results suggest. Such a discrepancy between theoretical predictions and experimental results, stemming from the details hidden in the asymptotic notation, has been observed before and is what triggered the research area of \emph{algorithm engineering}~\cite{MullerS01}. 

\begin{figure}[!ht]
\centering
\includegraphics[width=3.8in]{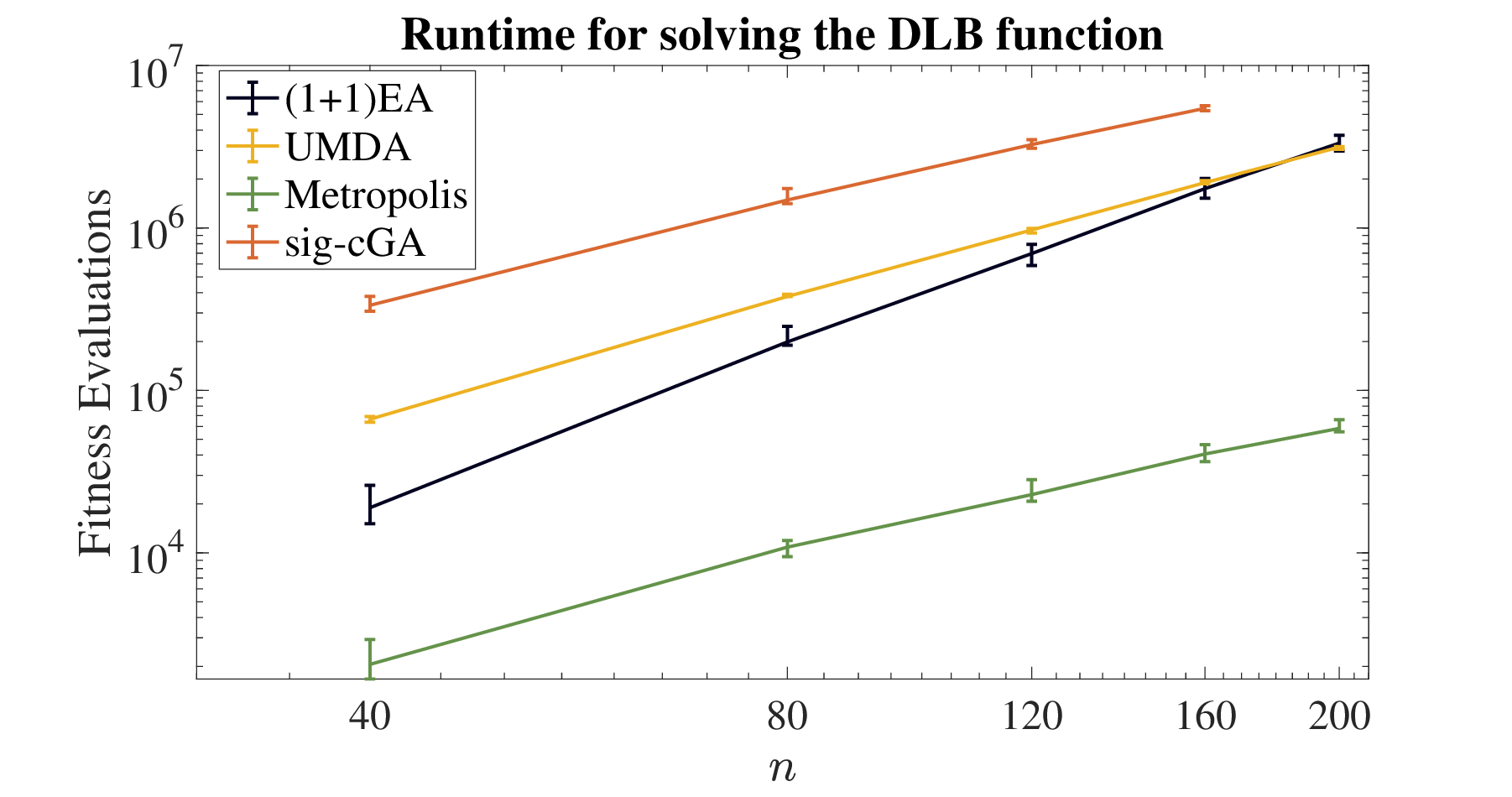}
\caption{The median number of fitness evaluations (with the first and third quartiles) of \oea, UMDA, Metropolis algorithm, and the sig-cGA on DLB with $n\in\{40,80,120,160,200\}$ in 20 independent runs (10 runs for $n\in\{40,80,120,160\}$ 
%and 7 runs for $n=160$ 
for the sig-cGA).}
\label{fig:runtime}
\end{figure}

\section{Conclusion and Outlook}
\label{sec:con}
To help choosing an efficient randomized search heuristic when faced with a novel problem, we proposed a theory-guided approach based on black-box complexity arguments and applied it to the recently proposed DLB function. Our approach suggested the Metropolis algorithm, for which little theoretical support before existed. Both a mathematical runtime analysis and our experiments proved it to be significantly superior to all previously analyzed algorithms for the \DLB problem.

We believe that our approach, in principle and in a less rigorous way, can also be followed by researchers and practitioners outside the theory community. Our basic approach of (i)~trying to understand how the theoretically best-possible algorithm for a given problem could look like and then (ii)~using this artificial and problem-specific algorithm as indicator for promising established search heuristics, can also be followed by experimental methods and by non-rigorous intuitive considerations.

\section*{Acknowledgments}
This work was supported by National Natural Science Foundation of China (Grant No. 62306086), Science, Technology and Innovation Commission of Shenzhen Municipality (Grant No. GXWD20220818191018001), Guangdong Basic and Applied Basic Research Foundation (Grant No. 2019A1515110177).

This work was also supported by a public grant as part of the Investissement d'avenir project, reference ANR-11-LABX-0056-LMH, LabEx LMH.
%
%

%\bibliographystyle{elsarticle-num} 
%\bibliography{alles_ea_master,extra,ich_master,bbc_extra}
%\bibliographystyle{alpha}
}%end sloppy
\end{document}